\definecolor{evalblue}{RGB}{66, 133, 244}
\definecolor{archgray}{RGB}{245, 245, 245}
\definecolor{archborder}{RGB}{180, 180, 180}
\definecolor{chaingreen}{RGB}{52, 168, 83}
\definecolor{targetorange}{RGB}{251, 188, 4}
\definecolor{desidpurple}{RGB}{154, 110, 188}
\definecolor{profilered}{RGB}{234, 67, 53}
\definecolor{typoteal}{RGB}{0, 150, 136}
\definecolor{modeblue}{RGB}{100, 149, 237}
\definecolor{boxcolor1}{RGB}{255,254,230}	% yellow 	#FFFEE6
\definecolor{boxcolor2}{RGB}{252,227,215}	% red		#fce3d7
\definecolor{boxcolor3}{RGB}{225,240,227}	% green 	#E1F0E3
\definecolor{boxcolor4}{RGB}{227,235,246}	% blue		#E3EBF6
\newtcolorbox{definitionbox}[1][]{
	breakable,
	before skip=1.5\topskip,
	after skip=1.5\topskip,
	left skip=0pt,
	right skip=0pt,
	left=4pt,
	right=4pt,
	top=2pt,
	bottom=2pt,
	lefttitle=4pt,
	righttitle=4pt,
	toptitle=2pt,
	bottomtitle=2pt,
	sharp corners,
	boxrule=0pt,
	titlerule=.4pt,
	colback=boxcolor1,
	colbacktitle=boxcolor1,
	coltitle=black,
	colframe=darkgray,
	coltext=black,
	fonttitle=\bfseries,
	title=Definition~\thetcbcounter:,
	#1
}
\newtcolorbox{theorembox}[1][]{
	breakable,
	before skip=1.5\topskip,
	after skip=1.5\topskip,
	left skip=0pt,
	right skip=0pt,
	left=4pt,
	right=4pt,
	top=2pt,
	bottom=2pt,
	lefttitle=4pt,
	righttitle=4pt,
	toptitle=2pt,
	bottomtitle=2pt,
	sharp corners,
	boxrule=0pt,
	titlerule=.4pt,
	colback=boxcolor2,
	colbacktitle=boxcolor2,
	coltitle=black,
	colframe=darkgray,
	coltext=black,
	fonttitle=\bfseries,
	title=Theorem~\thetcbcounter:,
	#1
}
\newtcolorbox{lemmabox}[1][]{
	breakable,
	before skip=1.5\topskip,
	after skip=1.5\topskip,
	left skip=0pt,
	right skip=0pt,
	left=4pt,
	right=4pt,
	top=2pt,
	bottom=2pt,
	lefttitle=4pt,
	righttitle=4pt,
	toptitle=2pt,
	bottomtitle=2pt,
	sharp corners,
	boxrule=0pt,
	titlerule=.4pt,
	colback=boxcolor3,
	colbacktitle=boxcolor3,
	coltitle=black,
	colframe=darkgray,
	coltext=black,
	fonttitle=\bfseries,
	title=Lemma~\thetcbcounter:,
	#1
}
\newtcolorbox{propositionbox}[1][]{
	breakable,
	before skip=1.5\topskip,
	after skip=1.5\topskip,
	left skip=0pt,
	right skip=0pt,
	left=4pt,
	right=4pt,
	top=2pt,
	bottom=2pt,
	lefttitle=4pt,
	righttitle=4pt,
	toptitle=2pt,
	bottomtitle=2pt,
	sharp corners,
	boxrule=0pt,
	titlerule=.4pt,
	colback=boxcolor3,
	colbacktitle=boxcolor3,
	coltitle=black,
	colframe=darkgray,
	coltext=black,
	fonttitle=\bfseries,
	title=Proposition~\thetcbcounter:,
	#1
}
\newtcolorbox{corollarybox}[1][]{
	breakable,
	before skip=1.5\topskip,
	after skip=1.5\topskip,
	left skip=0pt,
	right skip=0pt,
	left=4pt,
	right=4pt,
	top=2pt,
	bottom=2pt,
	lefttitle=4pt,
	righttitle=4pt,
	toptitle=2pt,
	bottomtitle=2pt,
	sharp corners,
	boxrule=0pt,
	titlerule=.4pt,
	colback=boxcolor4,
	colbacktitle=boxcolor4,
	coltitle=black,
	colframe=darkgray,
	coltext=black,
	fonttitle=\bfseries,
	title=Corollary~\thetcbcounter:,
	#1
}
\newtheoremstyle{mystyle}% name
{3pt}% Space above
{3pt}% Space below
{\itshape}% Body font
{}% Indent amount
{\bfseries}% Theorem head font
{.}% Punctuation after theorem head
{.5em}% Space after theorem head
{}% Theorem head spec (can be left empty, meaning ‘normal’)
\theoremstyle{mystyle}
\newtheorem{example}{Example}[section]
\newtheorem{definition}{Definition}[section] % Ensure it is part of the section enumeration
\renewenvironment{definition}{%
	\refstepcounter{definition}% Increment the definition counter
	\begin{definitionbox}[title=Definition~\thedefinition]% Use the current counter value in the title
	}{%
	\end{definitionbox}
}
\newtheorem{theorem}{Theorem}[section] % Ensure it is part of the section enumeration
\renewenvironment{theorem}{%
	\refstepcounter{theorem}% Increment the definition counter
	\begin{theorembox}[title=Theorem~\thetheorem]% Use the current counter value in the title
	}{%
	\end{theorembox}
}
\newtheorem{lemma}{Lemma}[section] % Ensure it is part of the section enumeration
\renewenvironment{lemma}{%
	\refstepcounter{lemma}% Increment the definition counter
	\begin{lemmabox}[title=Lemma~\thelemma]% Use the current counter value in the title
	}{%
	\end{lemmabox}
}
\newtheorem{proposition}{Proposition}[section] % Ensure it is part of the section enumeration
\newtheorem{corollary}{Corollary}[section] % Ensure it is part of the section enumeration
\renewenvironment{corollary}{%
	\refstepcounter{corollary}% Increment the definition counter
	\begin{corollarybox}[title=Corollary~\thecorollary]% Use the current counter value in the title
	}{%
	\end{corollarybox}
}
\newtheorem*{remark}{Remark}
\renewenvironment{proof}{{\noindent\bfseries Proof:}}{\qed}
\newtheoremstyle{claimstyle} % Define a new theorem style
{3pt} % Space above
{3pt} % Space below
{} % Body font (default upright, non-italic)
{} % Indent amount
{\bfseries} % Head font (bold)
{.} % Punctuation after head
{0.5em} % Space after head
{} % Head spec
\theoremstyle{claimstyle}
\newtheoremstyle{postulatestyle} % Define a new theorem style
{3pt} % Space above
{3pt} % Space below
{} % Body font (default upright, non-italic)
{} % Indent amount
{\bfseries} % Head font (bold)
{.} % Punctuation after head
{0.5em} % Space after head
{} % Head spec
\theoremstyle{postulatestyle}
\newtheoremstyle{critiquestyle} % Define a new theorem style
{3pt} % Space above
{3pt} % Space below
{\itshape} % Body font (default upright, non-italic)
{} % Indent amount
{\bfseries} % Head font (bold)
{.} % Punctuation after head
{0.5em} % Space after head
{} % Head spec
\theoremstyle{critiquestyle}
\newtheoremstyle{responsestyle} % Define a new theorem style
{3pt} % Space above
{3pt} % Space below
{\itshape} % Body font (default upright, non-italic)
{} % Indent amount
{\bfseries} % Head font (bold)
{.} % Punctuation after head
{0.5em} % Space after head
{} % Head spec
\theoremstyle{responsestyle}
\title{On measuring grounding and generalizing grounding problems}
\author{
	\href{https://orcid.org/0009-0004-7957-1806}{\includegraphics[scale=0.06]{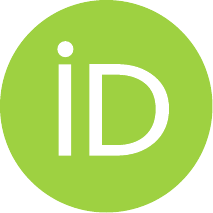}\hspace{1mm}Daniel Quigley} \\
	Center for Possible Minds\\
	Indiana University Bloomington\\
	Bloomington, IN 47408 \\
	\texttt{dgquigle@iu.edu} \\
	%% examples of more authors
	\And
	Eric Maynard \\
	Eruditis\\
	Milwaukee, WI 53207 \\
	\texttt{eric@eruditis.com} \\
	%\And
}
\colorlet{linecol}{cyan!90!blue!90!black}
\colorlet{fillcol}{cyan!60!blue!80!black!40}
\begin{document}
	\maketitle
	
	\begin{abstract}\label{abs:abstract}
		
		The symbol grounding problem asks how tokens like \textit{cat} can be \emph{about} cats, as opposed to mere shapes manipulated in a calculus. We recast grounding from a binary judgment into an audit across desiderata, each indexed by an evaluation tuple (context, meaning type, threat model, reference distribution): authenticity (mechanisms reside inside the agent and, for strong claims, were acquired through learning or evolution); preservation (atomic meanings remain intact); faithfulness, both correlational (realized meanings match intended ones) and etiological (internal mechanisms causally contribute to success); robustness (graceful degradation under declared perturbations); compositionality (the whole is built systematically from the parts). We apply this framework to four grounding modes (symbolic; referential; vectorial; relational) and three case studies: model-theoretic semantics achieves exact composition but lacks etiological warrant; large language models show correlational fit and local robustness for linguistic tasks, yet lack selection-for-success on world tasks without grounded interaction; human language meets the desiderata under strong authenticity through evolutionary and developmental acquisition. By operationalizing a philosophical inquiry about representation, we equip philosophers of science, computer scientists, linguists, and mathematicians with a common language and technical framework for systematic investigation of grounding and meaning.
	\end{abstract}
	
	\keywords{symbol grounding \and evaluation metrics \and causal representation \and compositionality \and  robustness \and semantics \and large language models \and embodied AI}

\section{Introduction}\label{sec:intro}
The \textbf{symbol grounding problem} (SGP) asks how a token such as \textit{cat} can be \emph{about} cats, as opposed to being a shape manipulated in a formal system \cite{HARNAD1990335}; without such anchoring, definitions risk an infinite regress \cite{Taddeo01122005}, in which each symbol is defined only in terms of other symbols, never making contact with the perceptual or sensorimotor experiences that could give them reference.

We use here ``grounding'' to mean an agent-internal, learned, task-indexed linkage from symbols to meanings that is accurate, causally earned, robust to declared disturbances, and systematic under composition. We frame grounding, then, as a check against criteria, and equip the discussion with variability; in this way, we move from a binary is-grounded and is-not-grounded, to an audit of  \emph{the extent to which} something is grounded. We operationalize this along four desiderata (preservation; faithfulness; robustness; compositionality), together with a zeroth, authenticity, which forbids post-hoc stipulation by requiring that the relevant semantic mechanisms reside within the agent, not supplied by external analysts. Our framework assigns checks at each desideratum; approximate grounding becomes appropriate, supplied with a common language. 

Consequences follow: model-theoretic semantics meets structural criteria but lacks etiological warrant; text-only LLMs achieve partial grounding through distributional regularities of language; human language exemplifies comprehensive grounding via multimodal perception and social coordination. Additionally, we gain in transparency about the behavior of understanding in a natural or artificial model when we expand from a single binary to a larger report, an insight into which targets should be prioritized for refinement and study when understanding is our subject matter, and a record against which we may match tasks such that we encourage trust in a model's understanding. Finally, we provision a common language to discuss grounding problems in general, recovering the taxonomy discussed in \cite{mollo2023vector}. 

Our enterprise is, therefore, provisioning a common language for grounding problems that operationalizes an inherently philosophical problem of meaning and representations of meaning. We equip philosophers of science, computer scientists, linguists, and mathematicians insight and a framework into the problem space with connections between empirical interpretability and benchmarking and foundational questions in semantics and cognitive science. In this way, we follow \cite{Harding2023HARORI3} in defining the problem space and operationalizing heretofore primarily philosophical concerns.

A pictorial representation is given in Figure~\ref{fig:plaintext}, upon which we will later populate with our formalisms proper (Figure~\ref{fig:formaltext}) and an example evaluation (Figure~\ref{fig:toytext}): an evaluation tuple parametrizes the audit of a grounding architecture, which processes symbols through representations and concepts to meanings, compared against an intended interpretation, the instantiations of which give particular modes. Desiderata (authenticity, preservation, faithfulness, robustness, compositionality) yield a grounding profile that locates the system within a typology of archetypes.

\begin{figure}
	\centering
		\begin{tikzpicture}[
		>=Stealth,
		node distance=0.8cm and 1.2cm,
		evalbox/.style={
			rectangle, rounded corners=4pt,
			draw=evalblue, fill=evalblue!15,
			minimum width=6.75cm, minimum height=1.5cm,
			font=\small\bfseries, text=evalblue!80!black
		},
		chainnode/.style={
			rectangle, rounded corners=3pt,
			draw=chaingreen!80!black, fill=chaingreen!20,
			minimum width=1.3cm, minimum height=0.9cm,
			font=\small
		},
		targetnode/.style={
			rectangle, rounded corners=3pt,
			draw=targetorange!80!black, fill=targetorange!20,
			minimum width=1.8cm, minimum height=0.9cm,
			font=\small
		},
		desidbox/.style={
			rectangle, rounded corners=3pt,
			draw=desidpurple!80!black, fill=desidpurple!15,
			minimum width=1.4cm, minimum height=0.7cm,
			font=\footnotesize
		},
		profilebox/.style={
			rectangle, rounded corners=4pt,
			draw=profilered, fill=profilered!12,
			minimum width=7.50cm, minimum height=1.5cm,
			font=\small
		},
		typobox/.style={
			rectangle, rounded corners=4pt,
			draw=typoteal, fill=typoteal!12,
			minimum width=4.5cm, minimum height=1.4cm,
			font=\small
		},
		modebox/.style={
			rectangle, rounded corners=3pt,
			draw=modeblue!80!black, fill=modeblue!15,
			minimum width=2.2cm, minimum height=0.6cm,
			font=\footnotesize\itshape
		},
		mapstyle/.style={
			->, thick, chaingreen!70!black
		},
		flowstyle/.style={
			->, thick, gray!70
		},
		auditarrow/.style={
			->, thick, desidpurple!70!black, dashed
		}
		]
		
		% evaluation
		
		\node[evalbox] (eval) {};
		
		% Evaluation label - inside the box at upper left
		\node[anchor=north west, font=\small\bfseries, text=evalblue!80!black] 
		at ($(eval.north west) + (0.15, -0.1)$) {Evaluation tuple $E$};
		
		% Evaluation content - centered
		\node[font=\small, text=evalblue!70!black] at ($(eval.center) + (0, -0.15)$) {
			(context, meaning type, threat, distribution)
		};

		%architecture
		\node[below=4.0cm of eval] (archcenter) {};
		
		\coordinate (archleft) at ($(archcenter) + (-4.5, 0)$);
		\coordinate (archright) at ($(archcenter) + (4.5, 0)$);
		
		\node[chainnode] (sigma) at ($(archcenter) + (-3.2, 0)$) {symbol};
		\node[chainnode, right=0.8cm of sigma] (R) {representation};
		\node[chainnode, right=0.8cm of R] (C) {concept};
		\node[chainnode, right=0.8cm of C, minimum width=1.6cm] (M) {meaning};
		
		\draw[mapstyle] (sigma) -- node[above, font=\scriptsize] {} (R);
		\draw[mapstyle] (R) -- node[above, font=\scriptsize] {} (C);
		\draw[mapstyle] (C) -- node[above, font=\scriptsize] {} (M);
		
		\draw[decorate, decoration={brace, amplitude=5pt, raise=2pt}, thick, gray!70] 
		(sigma.north west) -- (C.north east) 
		node[midway, above=8pt, font=\scriptsize, text=gray] {symbol-to-concept};
		
		\draw[decorate, decoration={brace, amplitude=4pt, raise=2pt, mirror}, thick, gray!70] 
		(R.south west) -- (M.south east) 
		node[midway, below=8pt, font=\scriptsize, text=gray] {concept-to-meaning (observable)};
		
		\node[targetnode, above=0.8cm of M] (I) {intended interpretation};
		\node[left=0.1cm of I, font=\tiny, text=targetorange!70!black] {(target)};
		
		\draw[->, thick, targetorange!70!black, dashed] (I) -- (M);
		
		\node[modebox, left=0.75cm of sigma, yshift=1.0cm] (mode1) {symbolic};
		\node[modebox, below=0.15cm of mode1] (mode2) {referential};
		\node[modebox, below=0.15cm of mode2] (mode3) {vectorial};
		\node[modebox, below=0.15cm of mode3] (mode4) {relational};
		
		\node[
		fit=(mode1)(mode2)(mode3)(mode4),
		draw=modeblue!50!black, dashed, thick,
		rounded corners=4pt,
		inner sep=8pt,
		inner ysep=12pt
		] (modebox) {};
		
		\node[anchor=north west, font=\tiny\bfseries, text=modeblue!70!black] at ($(modebox.north west) + (0.1, -0.05)$) {Mode};
		
		\begin{scope}[on background layer]
			\node[
			fit=(sigma)(M)(I)(modebox),
			draw=archborder, fill=archgray,
			rounded corners=6pt,
			inner sep=15pt
			] (archbox) {};
		\end{scope}
		
		\node[anchor=north west, font=\small\bfseries] at ($(archbox.north west) + (0.15, -0.1)$) {Grounding architecture $\mathfrak{G}$};
		
		% desiderata
		\node[desidbox, right=1.2cm of archbox.east, yshift=1.5cm] (G0) {G0};
		\node[desidbox, below=0.2cm of G0] (G1) {G1};
		\node[desidbox, below=0.2cm of G1] (G2) {G2};
		\node[desidbox, below=0.2cm of G2] (G3) {G3};
		\node[desidbox, below=0.2cm of G3] (G4) {G4};
		\node[right=0.15cm of G0, font=\tiny, text=desidpurple!70!black] {authenticity};
		\node[right=0.15cm of G1, font=\tiny, text=desidpurple!70!black] {preservation};
		\node[right=0.15cm of G2, font=\tiny, text=desidpurple!70!black] {faithfulness};
		\node[right=0.15cm of G3, font=\tiny, text=desidpurple!70!black] {robustness};
		\node[right=0.15cm of G4, font=\tiny, text=desidpurple!70!black] {compositionality};
		
		\node[above=0.2cm of G0, font=\small\bfseries, text=desidpurple!80!black] {Desiderata};
		
		\draw[auditarrow] (archbox.east) -- (G0.west);
		\draw[auditarrow] (archbox.east) -- (G1.west);
		\draw[auditarrow] (archbox.east) -- (G2.west);
		\draw[auditarrow] (archbox.east) -- (G3.west);
		\draw[auditarrow] (archbox.east) -- (G4.west);
		
		% profile
		\node[profilebox, below=4.0cm of archcenter] (profile) {};
		
		\node[anchor=north west, font=\small\bfseries, text=profilered!80!black] 
		at ($(profile.north west) + (0.15, -0.1)$) {Grounding profile $\text{GP}(\mathfrak{G}; E)$};
		
		\node[font=\footnotesize, text=profilered!70!black, align=center] at ($(profile.center) + (0, -0.25)$) {
			$\big\{$preservation error, faithfulness error, causal effect\\[3pt]
			robustness modulus, composition error, systematicity
			$\big\}$
		};
		
		\coordinate (desidmid) at ($(G4) + (0, -0.5) $);
		\draw[flowstyle] (desidmid) |- node[pos=0.75, above, font=\tiny, text=gray] {yields} ($(profile.east) + (0, 0.3)$);
		
		% typology
		\node[typobox, below=1.2cm of profile] (typo) {};
		\node[anchor=north west, font=\small\bfseries, text=typoteal!80!black] at ($(typo.north west) + (0.15, -0.1)$) {Typology};
		
		\node[font=\footnotesize, text=typoteal!70!black] at ($(typo.center) + (0, -0.15)$) {
			\textit{grounded, parrot, brittle,} \ldots
		};

		\draw[flowstyle] (profile) -- node[right, font=\tiny, text=gray, xshift=2pt] {classifies} (typo);

		% E to architecture
		\draw[flowstyle] (eval) -- node[right, font=\tiny, text=gray, xshift=2pt] {parametrizes} (archbox.north);
		
		% arch to profile
		\draw[flowstyle] (archbox.south) -- node[right, font=\tiny, text=gray, xshift=2pt] {reports} (profile.north);
		
	\end{tikzpicture}
	\caption{An \textbf{evaluation tuple} fixes the context (e.g., logical, embodied, linguistic, or multimodal settings), the meaning type (extensional, inferential, or social-normative), the threat model (what perturbations the system must withstand), and the reference distribution (against which robustness is measured). The \textbf{grounding architecture} comprises a processing chain: surface symbols are encoded into internal representations, which are then organized into task-level concepts, and finally aligned to a space of meanings relevant to the evaluation. The intended interpretation provides the gold-standard meanings against which the agent's outputs are compared. The architecture may instantiate different \textbf{modes}: symbolic (meaning via rules); referential (meaning via world-contact); vectorial (meaning via geometry); relational (meaning via inferential role). \textbf{Desiderata} audit the architecture: (G0) authenticity asks whether semantic mechanisms are internal and earned; (G1) preservation asks whether atomic meanings survive processing; (G2) faithfulness asks whether outputs match targets and whether this match was selected-for; (G3) robustness asks whether meanings degrade gracefully under noise; (G4) compositionality asks whether wholes derive systematically from parts. The audit produces a \textbf{grounding profile}, a set of measured errors and parameters that determines the system's \textbf{typology}: whether it behaves as a ``grounded'' system, a ``parrot'' (accurate but unsystematic), a ``brittle'' expert (accurate but fragile), or any of other archetypes.}
	\label{fig:plaintext}
\end{figure}
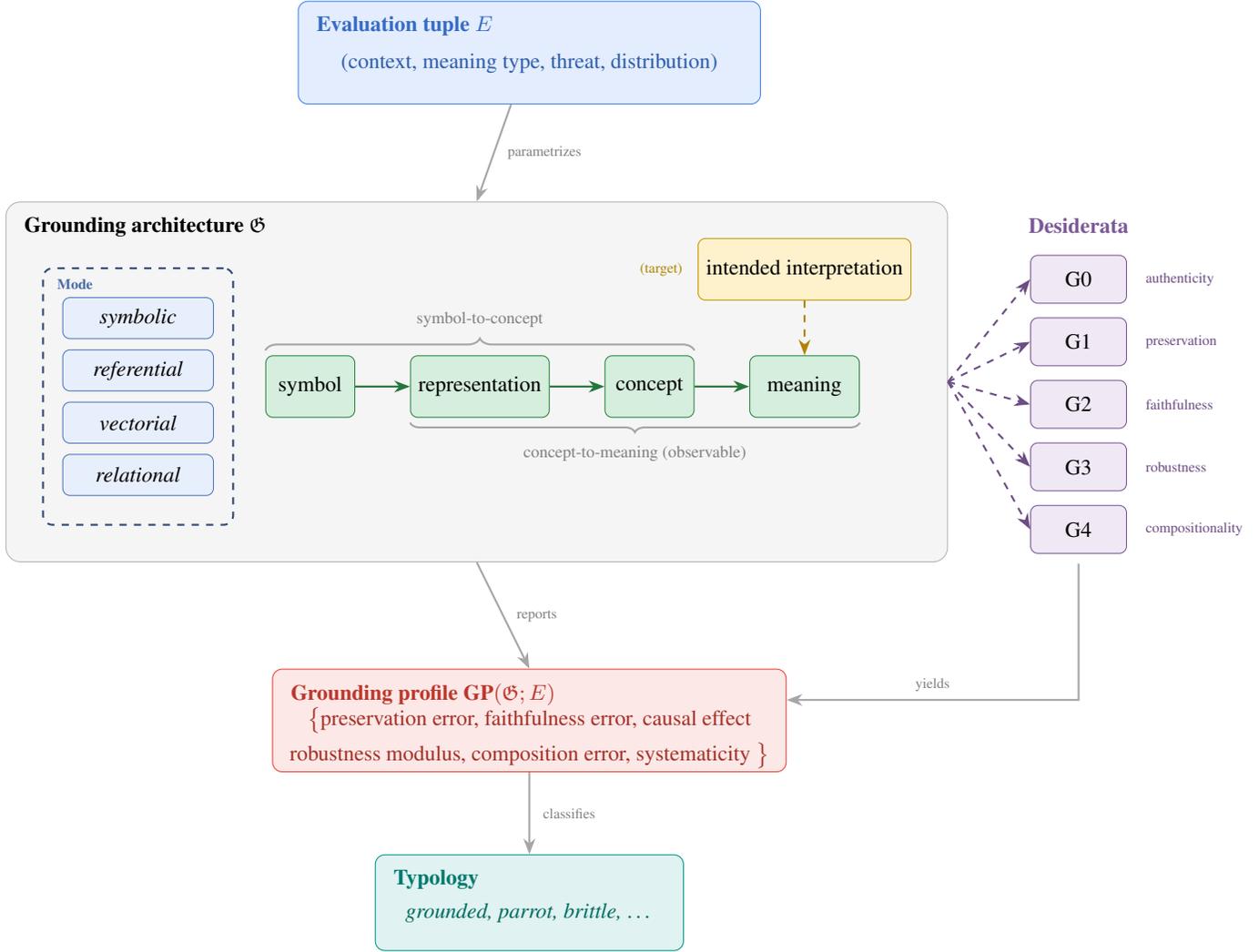

This paper is arranged as follows. Section~\ref{sec:intro} sets the introduction; Section~\ref{sec:origins} traces the symbol grounding problem from its philosophical roots in Searle's Chinese Room and Harnad's formulation to its modern consequences, showing how decades of internalist, externalist, and embodied approaches still leave grounding unresolved, and motivating our proposal. Section~\ref{sec:genground} introduces a neutral, diagnostic framework such that grounding claims become measurable profiles rather than metaphysical verdicts. Section~\ref{sec:variground} classifies how different architectures realize grounding within the general framework according to grounding modes as distinct ways systems link symbols, internal representations, and meaning spaces, each characterized by its own structural and causal constraints.  Sections~\ref{sec:modground}, \ref{sec:llmground}, \ref{sec:nlground} evaluate model theoretic semantics, LLMs, and natural language according to their grounding profiles, respectively. Section~\ref{sec:discussion} admits our implications, limitations, and biases in a forum of discussion, and Section~\ref{sec:concl} concludes. Section~\ref{sec:thms} proves theorems about our grounding framework, followed by Section~\ref{sec:toy} that walks through a simple toy example in grid-world. Section~\ref{sec:qref} provides a reference for notation used throughout.

\section{Origins and implications}\label{sec:origins}
When \cite{searle1980minds} imagined a man shuffling Chinese symbols he cannot understand, the SGP followed; \cite{HARNAD1990335} formulated the parable as a research program: unless formal tokens connect to non-symbolic representations, infinite regress threatens the whole enterprise. Early debates contrasted agent-relative (internalist) with world-anchored (externalist) views \cite{sharkey1994three}. Despite many proposals, ``none provides a valid solution to the SGP, which remains open'' \cite{Taddeo01122005}.

The engineering corollary is clear: artificial agents whose internal symbols are grounded in their operational world. An agent instructed \emph{Pick up the red mug} must link \textit{red} to visual data and \textit{mug} to object data \cite{coradeschi2013short,cangelosi2010grounding}. GOFAI's hand-coded symbols were meaningful mainly to programmers; embodied robotics \cite{BROOKS19903} argued for sensorimotor grounding; hybrids include Harnad's iconic $\rightarrow$ categorical $\rightarrow$ symbolic pipeline, communication games \cite{steelsvogt97b}, and multimodal neural agents and RL that tie reward to referential accuracy \cite{hill2020,yu2022multimodalknowledgealignmentreinforcement}. A consensus solution to resolving grounding, however, remains elusive: approaches meet some aspects of grounding while neglecting others: our framework seeks to make explicit grounding according to such desiderata, and so turn a philosophical puzzle into a verifiable audit.

Let it not be said that we forgo accuracy for a suite other benchmarks; rather, accuracy is a useful number, but a poor story about understanding: it rewards getting the right label on a fixed set of items, and quietly assumes that the same trick will work when the wording, context, or nuisance factors change. That assumption is often false for modern AI \cite{lewis2025evaluating,ICLR2025ec2e7a89}; small paraphrases \cite{LAU2025100151}, typos \cite{vadlapati2023investigating,liu2025evaluatingrobustnesslargelanguage}, or shifts in context \cite{an2024make} impact performance even when the original accuracy is high. In our terms, accuracy is, at best, an episode of correlational faithfulness under one specific evaluation tuple; it says nothing about whether the internal states were selected for success, whether meaning is stable under the threats that actually occur, or whether the system builds the whole from the parts in a principled way. Treating accuracy as proxy for understanding mistakes a point for the curve.

Grounding as evaluation replaces that single number with a profile. We declare the context and meaning type; we state the threat model and reference distribution, and audit basic preservation of atomic meanings, causal contribution, chart how semantic error grows with noise, and probe composition on held-out forms. Accuracy reappears inside this picture as a special case, when the semantic distance is mismatched\footnote{Technically, \(1-\text{accuracy}\) is our \(\varepsilon_{\text{faith}}^{k,t}\) for that profile.}; optimizing for that scalar alone invites Goodhart's law \cite{manheim2019categorizingvariantsgoodhartslaw,hennessy2023goodhart}.

We might well ask whether accuracy can be made meaningful with better test design: yes, provided multiple parallel forms are built to enforce measurement invariance, if nuisance factors are systematically varied, if confidence intervals and item analyses are reported. In effect, that is our proposal: make the invariances and nuisances explicit; report the full profile; quote accuracy as one coordinate in a higher-dimensional audit. So, accuracy may be a red herring; it is not, however, that red herrings are inedible, only that a healthy diet requires other nutrients.

\section{Generalized grounding}\label{sec:genground}

Let us provide now a common language for grounding. 

\begin{definition}
	\textbf{Grounding} is the degree to which an agent's own mechanisms implement and have acquired mappings from symbols to the task-appropriate meaning space, such that realized meanings align with intended ones, that alignment was selected-for because it causes task success, the mappings remain stable under declared perturbations, and are systematic under composition.
\end{definition}

The postulates that follow are diagnostic (indeed, we defer metaphysical commitments to further discussion elsewhere), and we remain neutral on whether meanings ultimately supervene on internal states or include wide-world factors \cite{Burge1979BURIAT,Putnam1975PUTTMO}. In this way, we build a framework from which we may predict behaviors and shape the discussion about grounding more generally, with later work given to application to models of interest. We itemize with the prefix ``G'', for ``grounding''.

\begin{description}
	\item[Preservation (G1)] atomic symbols preserve their intended meanings; encoding then decoding does not garble intent (beyond a small tolerance);
	
	\item[Faithfulness (G2)] two parts:
	\begin{enumerate}
		\item \emph{Correlational (G2a)} realized meanings match intended meanings (within tolerance);
		\item \emph{Etiological (G2b)} internal states were \emph{selected for} success in the relevant task family;
	\end{enumerate}
	
	\item[Robustness (G3)] given an explicit threat (e.g., typos, paraphrase, accent/ASR noise, lighting/pose), small input perturbations yield bounded semantic change such that grounded systems degrade gracefully;
	
	\item[Compositionality (G4)] meaning is determined (up to tolerance) by the meanings of its parts, and the system generalizes to novel combinations with systematicity instead of relying on lookup for each new phrase.
\end{description}

We add a \emph{zeroth} clause, \textbf{Authenticity}, demanding that the relevant mappings are implemented by the agent itself, not stitched on by an outside, omniscient analyst. We distinguish \emph{G0-weak} (mechanisms implemented inside the agent) from \emph{G0-strong} (implemented \emph{and} acquired by agent-internal learning or evolution under process). External, post-hoc stipulation violates authenticity; it is cheating to offload and consult a translator when attempting to converse in Klingon, without knowing the language. 

\begin{example}[Kitchen example]
	Consider the following: ``Pick up the red mug to the left of the bowl''. We are interested to know whether and the extent to which such words and symbols connect to what is in front of the agent. How might we wonder at this?
	
	The context $k$ is this particular kitchen: its layout; lighting; camera angle; who is speaking; when. The meaning type $t$ is world-referential: does the phrase ``the red mug to the left of the bowl'' latch onto the correct object and relation in this scene? The threat $U$ describes the kinds of small trouble we expect the agent to ride out: word-error in speech recognition; the typos or paraphrases; changes in lighting and pose; (mild) occlusions. The reference distribution $P$ is the population of situations we judge against: new kitchens drawn from a held-out pool (a safety valve, such that we are not quietly grading on a curve).
	
	Authenticity comes first (G0). In the weak form, the mechanisms that map words to internal states and then to concepts actually run inside the agent; the heavy lifting is not done for the agent off to the side by God, or more humbly, a human documentation, for example. In the strong form, the agent acquired those mechanisms through its own training or learning that was relevant to kitchen work; nothing crucial was bolted on after the fact. If semantics is smuggled in by the evaluator, the rest of the behaviors are misleading!
	
	Preservation (G1) follows: do single words keep their meaning through the interpretation in this scene? When the agent hears \texttt{red}, \texttt{mug}, \texttt{bowl}, or \texttt{left\_of}, the interpretation that emerges should still match what those meant here (within a small, declared error budget). 
	
	Faithfulness (G2) asks whether the system means what it seems to mean for the full command. There are two parts: first, correlational (G2a): for the whole sentence, does the interpreted meaning match what the speaker intended (again within a declared tolerance); second, etiological check (G2b): were the agent's internal representations actually shaped by success on this family of tasks? 
	
	Robustness (G3) guards against meaning whiplash: when we inject perturbations from a threat, the agent's interpreted meaning should change only a little, not jump to a wholly different object or relation. The intuitive picture is a smooth curve: as noise grows, error grows steadily and stays under a bound.
	
	Compositionality (G4) is the promise that parts combine systematically. The meaning of a phrase should come from the meanings of its parts plus how they are put together. An agent that has learned ``left of'' and ``behind''	separately should cope with ``the red mug behind the bowl'' in a new kitchen layout without memorizing that exact sentence.
\end{example}

Let us consider: are these sufficient? Necessary? Indeed, there could be more criteria, or fewer. As it stands, what we present seems to be sufficient to direct the discussion to an operationalizing of grounding problems, and sufficient for walking through interesting consequences (Section~\ref{sec:variground}, Sections~\ref{sec:modground}, \ref{sec:llmground}, \ref{sec:nlground}, and Section~\ref{sec:thms}). We hope to encourage futher discussion on this matter in future investigations.

Let us also clarify: grounding is \emph{not} general intelligence; rather, it is task-relative, and audits how meanings are established and maintained, as opposed to what the system can achieve. Indeed, a model may be highly capable yet ungrounded for certain tasks, or narrowly capable yet well grounded within its domain; our desiderata capture the former, not the latter.

\subsection{Formalization}

Now formally, let us populate these clauses such that we may understand their behaviors. We wish to understand the shape of behavior; measurements with a ruler are exactly the next steps thereafter. Let $K$ be a set of contexts (situations, tasks, Kaplanian contexts); let meaning types be $T=\{\text{ext},\text{inf},\text{soc}\}$ for \emph{extensional} (world-referential), \emph{inferential} (relational/intensional), and \emph{social-normative} content. We assume that, for each pair of context and meaning $(k,t)$, there is an associated space $\mathcal M_{k}^{t}$ with (pseudo)metric $d_{k,t}$; we set aside for now what this metric is in practice, and instead use it to illustrate and discuss behavior.

\(\Sigma\) is the agent's observable ``front end'': the alphabet of surface forms it consumes and produces. Depending on the architecture, these may be text tokens, phonemes, program tokens, action symbols, or other such items. We distinguish \emph{atoms} \(\Sigma_{ \text{atom}}\subseteq\Sigma\) from composites built by a grammar \(\mathcal G\). Typing is part of the surface discipline: constructors in \(\mathcal G\) only accept well-typed tuples from \(\Sigma\). Note: technically, nothing in our framework forces \(\Sigma\) to be \emph{linguistic}, though language is the primary motivation here; it is whatever the agent takes as manipulable symbols, objects, things.

\(\mathcal R\) is the set of agent-internal states that immediately follow encoding \(\Phi\). Formally, \(\Phi:\Sigma\to\mathcal R\) (and, more generally, from upstream sensory/preprocessing streams into \(\mathcal R\)). We equip \(\mathcal R\) with a (pseudo)metric \(d_{\mathcal R}\), such that we may talk about small versus large changes in representation, whatever those may turn out to be. Perturbations \(u:\mathcal R\to\mathcal R\) for robustness act here (G3), and whose discreteness/continuity properties drive Corollary~\ref{cor:unif-discrete}.

\(\mathcal C\) is the agent's task-level conceptual substrate. \(\Gamma:\mathcal R\to\mathcal C\) maps low-level representations to concepts, wherein we expect algebraic structure (types/constructors) to live when we assess compositionality (G4). Alignment \(\mathcal A_{k}^{t}:\mathcal C\to\mathcal M_{k}^{t}\) projects concepts into the typed meaning space for the declared context and meaning type. Only the composite \(S_{k,t}:=\mathcal A_{k}^{t} \circ\Gamma\) is observable; \(\mathcal C\) is not unique (see the quotient construction below), and changes to \(\mathcal C\) that leave \(S_{k,t}\) invariant are a kind of ``gauge'' transformations with no semantic effect. Interestingly, we remain agnostic to what this \(\mathcal C\) looks like, only that we need it for passing from one space to another, as we shall see below.

For each context \(k\in K\) and meaning type \(t\in T=\{\text{ext},\text{inf},\text{soc}\}\), \(\mathcal M_{k}^{t}\) is the typed domain of meanings against which we evaluate the agent. We equip \(\mathcal M_{k}^{t}\) with a (pseudo)metric \(d_{k,t}\) and, when relevant, an algebra \(\langle \mathcal M_{k}^{t},\mathcal F_{k}^{t}\rangle\) interpreting the grammar's typed constructors. The intended interpretation \(\mathcal I_{k}^{t}:\Sigma\rightharpoonup\mathcal M_{k}^{t}\) provides meanings for atoms (and, via its homomorphic extension, for composites).

Let us illustrate these with some cartoon examples to build the intuition. 

\begin{example}[Vision task]\label{exe:3a}
	For \(t=\text{ext}\) in a vision-manipulation task (see \cite{shridhar2023perceiver,goyal2023rvt}), \(\mathcal R\) a stack of convolutional/transformer features with cosine \(d_{\mathcal R}\); \(\mathcal C\) encodes object candidates and spatial relations with a graph metric \(d_{\mathcal C}\); \(\mathcal M_{k}^{\text{ext}}\) consists of scene-anchored objects and relations with \(d_{k,t}\) defined by \(1-\text{IoU}\) plus relation penalties.
\end{example}

\begin{example}[Logical system]\label{exe:3b}
	Relatedly, for \(t=\text{inf}\), \(\mathcal M_{k}^{\text{inf}}\) a logical or relational structure with operations \(\mathcal F_{k}^{t}\) interpreting connectives and predicates; \(d_{k,t}\) entailment-based or graph-edit distance (see \cite{Schubert2010EntailmentII,leeetal2025entailment,shietal2025natural}). 
\end{example}

\begin{example}[Social setting]\label{exe:3c}
	For \(t=\text{soc}\), we remain abstract, for such measurements are difficult to determine, but we play the same game to illustrate the point: \(\mathcal M_{k}^{\text{soc}}\) dialogue commitments or normative statuses with \(d_{k,t}\) tied to expected repair cost (see \cite{dingemanse2024interactive}).
\end{example}

For each context-type pair \((k,t)\), \(\mathcal I_{k}^{t}:\Sigma\rightharpoonup \mathcal M_{k}^{t}\) partially specifies ``gold'' meanings for atoms in that setting\footnote{We require partiality to respect lexical gaps, ambiguity, or items outside scope, for exmaple.}. When a semantic algebra \(\langle \mathcal M_{k}^{t},\mathcal F_{k}^{t}\rangle\) is declared, \(\mathcal I_{k}^{t}\) extends uniquely to composites as the homomorphic map \(\mathcal I_{k}^{t,\uparrow}\) (Corollary~\ref{cor:coincide}). Methodologically, \(\mathcal I_{k}^{t}\) fixes the target against which preservation and faithfulness (G1--G2) are measured; it does \emph{not} by itself endow the agent with meaning (regulated by G0).

\(\Phi:\Sigma\to\mathcal R\) maps surface symbols to internal representations, whatever those may be. In text systems, for example, this is simply a tokenizer plus embedding; in speech, ASR front ends; in robotics, perception that convert symbolic commands to latent states. We treat \(\Phi\) as the causal ``portal'' from symbols to the representation space \(\mathcal R\) equipped with metric \(d_{\mathcal R}\). Under G0-strong, \(\Phi\) is acquired by the agent for the task family under evaluation; under G0-weak, it is \emph{at least} implemented inside the agent.

\(\Gamma:\mathcal R\to\mathcal C\) turns low-level representations into task-level concepts in \(\mathcal C\) (metric \(d_{\mathcal C}\)). We need to account, for example, parsing, attention, relational inference, and other structure-building operations. \(\Gamma\) is where typed combinators are to be realized internally, which is why compositionality tests (G4) apply naturally \emph{after} \(\Gamma\). Only the composite to meanings, \(S_{k,t}=\mathcal A_{k}^{t}\circ\Gamma\), is observable in evaluation\footnote{Note that different choices of \(\Gamma\) that induce the same \(S_{k,t}\) are indistinguishable by our metrics}.

\(\Psi=\Gamma\circ\Phi:\Sigma\to\mathcal C\) is the end-to-end map from surface symbols to concepts. It is technically \emph{context-agnostic}: we keep contextual and meaning-type variation out of \(\Psi\) and place it in the alignment \(\mathcal A_{k}^{t}\) and the semantic algebra on \(\mathcal M_{k}^{t}\). This is necessary in order to walk through preservation (G1) and correlational faithfulness (G2a), which compare \(\mathcal A_{k}^{t}\Psi\) to \(\mathcal I_{k}^{t}\) at the chosen \((k,t)\); etiological faithfulness (G2b) asks whether states along \(\Psi\) were selected for success.

For each \((k,t)\), \(\mathcal A_{k}^{t}:\mathcal C\to\mathcal M_{k}^{t}\) projects the agent's concepts into the typed meaning space relevant to that evaluation. This is how we track context along with the semantic algebra \(\langle\mathcal M_{k}^{t},\mathcal F_{k}^{t}\rangle\): scene particulars; speaker; time; task constraints; meaning type. Robustness (G3) checks the continuity of \(S_{k,t}=\mathcal A_{k}^{t}\circ\Gamma\); compositionality (G4) checks whether \(\mathcal A_{k}^{t}\Psi\) respects the constructors' interpretations \(f_{k}^{\mathcal M,t}\). Under G0-strong, \(\mathcal A_{k}^{t}\) is an agent-internal, learned mechanism (e.g., learned perception‑to‑world alignment); if \(\mathcal A_{k}^{t}\) is an external measurement adapter, then claims must be G0-weak to avoid smuggling semantics. Laconically, \(\mathcal A_{k}^{t}\) \emph{is} the ``semantics''.

\begin{definition}
	A \textbf{grounding architecture} is
	\[
	\mathfrak{G}=\left\langle
	\Sigma,\ \mathcal R,\ \mathcal C,\
	\{\mathcal M_{k}^{t}\}_{k,t},\ 
	\{\mathcal I_{k}^{t}\},\
	\Phi,\ \Gamma,\ \Psi,\ \{\mathcal A_{k}^{t}\}_{k,t}
	\right\rangle
	\]
\end{definition}

We check the extent to which a grounding exists if preservation, faithful representation, robustness, and compositionality hold, subject to a zeroth postulate of authenticity. We distinguish:

\begin{description}
	\item[\textbf{G0-weak}] $\Phi,\Gamma$ are implemented inside the agent by causal mechanisms; 
	\item[\textbf{G0-strong}] as above, and moreover $\Phi,\Gamma$ and any used $\mathcal A_{k}^{t}$ were \emph{acquired} by agent-internal learning or evolution under a process $\mathcal T$ relevant to the evaluated tasks; post-hoc stipulation violates G0.
\end{description}	

We fix an evaluation $E=(k,t,U,P)$ with perturbations $U$ and reference distribution $P$ over $\mathcal R$ (or upstream).

\begin{description}
	\item[\textbf{G1 (preservation)}] for atoms $\sigma\in\Sigma_{\text{atom}}$ and a chosen $(k,t)$,
	\[
	d_{k,t} \left(\mathcal A_{k}^{t}\Psi(\sigma),\,\mathcal I_{k}^{t}(\sigma)\right)\le \varepsilon_{\text{pres}}^{k,t}.
	\]
	Note: $\varepsilon_{\text{pres}}^{k,t}$ is estimated on an atomic set (words, idioms), \emph{not} on composed items.

	\item[\textbf{G2a (faithfulness, correlational)}] For (possibly composed) $\sigma$,
	\[
	d_{k,t} \left(\mathcal A_{k}^{t}\Psi(\sigma),\,\mathcal I_{k}^{t}(\sigma)\right)\le \varepsilon_{\text{faith}}^{k,t}.
	\]
	
	Note: this is distinct from G1, which \emph{does} permit composed items, and by $(k,t)$ choice.
	
	\item[\textbf{G2b (faithfulness, etiological)}] fix an evaluation $E=(k,t,U,P)$ and a success predicate $\text{succ}:\mathcal O \to \{0,1\}$. We require two things about some \emph{agent-internal} mechanism $M$ (within $\Phi,\Gamma$, and/or $\mathcal A_k^{t}$):
	\begin{enumerate}
		\item $M$ was acquired under a declared training/evolutionary process $\mathcal T$ because it improved $\text{succ}$ on the task family;
		\item in the evaluated regime, turning $M$ off would reduce success by at least a stated margin.
	\end{enumerate}
	Formally, define the average causal effect, imaginatively noted as ACE:
	\[
	\text{ACE}_{E}(M)\ :=\ \mathbb E \left[\Pr \big(\text{succ}\mid \text{do}(M{=}\text{on})\big)
	-\Pr \big(\text{succ}\mid \text{do}(M{=}\text{off})\big)\right],
	\]
	with the expectation over the declared instance distribution (and, if relevant, over $\mathcal T$). We say G2b holds at level $\eta^{k,t}$ if $\text{ACE}_{E}(M)\ge \eta^{k,t}$ and $M$ was produced by the declared $\mathcal T$ rather than stipulated post hoc (checked by G0). 
	
	\item[\textbf{G3 (robustness)}] for perturbations $u:\mathcal R\to\mathcal R$ (e.g., typos, paraphrase, accent, noise, lighting change) with scale $\varepsilon(u)$ and confidence level $1-\alpha$,
	\[
	\Pr_{r\sim P} \Big[\,\sup_{\varepsilon(u)\le\varepsilon}
	d_{k,t} \left(\mathcal A_{k}^{t}\Gamma(r),\,\mathcal A_{k}^{t}\Gamma(u \cdot  r)\right)\le \omega_{U}^{k,t}(\varepsilon)\Big]\ \ge\ 1-\alpha,
	\]
	with $\omega_{U}^{k,t}(0)=0$. Note that Lipschitz continuity is the special case $\omega_{U}^{k,t}(\varepsilon)\le L^{k,t}\varepsilon$.
	
	\item[\textbf{G4 (compositionality)}] let the grammar provide typed constructors $f:\Sigma^{\vec\tau} \to \Sigma^{\tau'}$ and the semantic algebra $\langle\mathcal M_{k}^{t},\mathcal F_{k}^{t}\rangle$ assign $f_{k}^{\mathcal M,t} \in \mathcal F_{k}^{t}$. Then the grounding mode is $\delta_{\text{comp}}^{k,t}$-compositional if
	\[
	d_{k,t} \Big(\mathcal A_{k}^{t}\Psi(f(\vec\sigma)),\
	f_{k}^{\mathcal M,t}(\mathcal A_{k}^{t}\Psi(\sigma_1),\dots)\Big)\le \delta_{\text{comp}}^{k,t}.
	\]
	Systematicity on a held‑out set $\mathcal D_{\text{novel}}$ with tolerance $\tau$ is given by
	\[
	\beta^{k,t}=\frac1{|\mathcal D_{\text{novel}}|}\sum_{(\vec\sigma,y)\in\mathcal D_{\text{novel}}}
	\mathbf 1 \left[d_{k,t} \left(\mathcal A_{k}^{t}\Psi(f(\vec\sigma)),y\right)\le \tau\right].
	\]
	
\end{description}

The same structure as in Figure~\ref{fig:plaintext} is given in Figure~\ref{fig:formaltext} with formal notation: surface symbols $\Sigma$ map via $\Phi$ to representations $\mathcal{R}$, via $\Gamma$ to concepts $\mathcal{C}$, and via alignment $\mathcal{A}_k^t$ to meanings $\mathcal{M}_k^t$. The evaluation tuple $E = (k, t, U, P)$ indexes all measurements. The grounding profile $\text{GP}(\mathfrak{G}; E)$ collects preservation error $\varepsilon_{\text{pres}}^{k,t}$, faithfulness error $\varepsilon_{\text{faith}}^{k,t}$, causal effect $\text{ACE}_E(M)$, robustness modulus $\omega_U^{k,t}$, composition error $\delta_{\text{comp}}^{k,t}$, and systematicity $\beta^{k,t}$.

\begin{figure}
	\begin{tikzpicture}[
		>=Stealth,
		node distance=0.8cm and 1.2cm,
		evalbox/.style={
			rectangle, rounded corners=4pt,
			draw=evalblue, fill=evalblue!15,
			minimum width=4cm, minimum height=1.5cm,
			font=\small\bfseries, text=evalblue!80!black
		},
		chainnode/.style={
			rectangle, rounded corners=3pt,
			draw=chaingreen!80!black, fill=chaingreen!20,
			minimum width=1.3cm, minimum height=0.9cm,
			font=\small
		},
		targetnode/.style={
			rectangle, rounded corners=3pt,
			draw=targetorange!80!black, fill=targetorange!20,
			minimum width=1.8cm, minimum height=0.9cm,
			font=\small
		},
		desidbox/.style={
			rectangle, rounded corners=3pt,
			draw=desidpurple!80!black, fill=desidpurple!15,
			minimum width=1.4cm, minimum height=0.7cm,
			font=\footnotesize
		},
		profilebox/.style={
			rectangle, rounded corners=4pt,
			draw=profilered, fill=profilered!12,
			minimum width=6.75cm, minimum height=1.5cm,
			font=\small
		},
		typobox/.style={
			rectangle, rounded corners=4pt,
			draw=typoteal, fill=typoteal!12,
			minimum width=4.5cm, minimum height=1.4cm,
			font=\small
		},
		modebox/.style={
			rectangle, rounded corners=3pt,
			draw=modeblue!80!black, fill=modeblue!15,
			minimum width=2.2cm, minimum height=0.6cm,
			font=\footnotesize\itshape
		},
		mapstyle/.style={
			->, thick, chaingreen!70!black
		},
		flowstyle/.style={
			->, thick, gray!70
		},
		auditarrow/.style={
			->, thick, desidpurple!70!black, dashed
		}
		]
		
		% evaluation
		
		\node[evalbox] (eval) {};
		
		\node[anchor=north west, font=\small\bfseries, text=evalblue!80!black] 
		at ($(eval.north west) + (0.15, -0.1)$) {Evaluation tuple $E$};
		
		\node[font=\small, text=evalblue!70!black] at ($(eval.center) + (0, -0.15)$) {
			$(k, t, U, P)$
		};

		% architecture
		\node[below=4.0cm of eval] (archcenter) {};
		
		\coordinate (archleft) at ($(archcenter) + (-4.5, 0)$);
		\coordinate (archright) at ($(archcenter) + (4.5, 0)$);
		\node[chainnode] (sigma) at ($(archcenter) + (-3.2, 0)$) {$\Sigma$};
		\node[chainnode, right=0.8cm of sigma] (R) {$\mathcal{R}$};
		\node[chainnode, right=0.8cm of R] (C) {$\mathcal{C}$};
		\node[chainnode, right=0.8cm of C, minimum width=1.6cm] (M) {$\mathcal{M}_k^t$};
		
		\draw[mapstyle] (sigma) -- node[above, font=\scriptsize] {$\Phi$} (R);
		\draw[mapstyle] (R) -- node[above, font=\scriptsize] {$\Gamma$} (C);
		\draw[mapstyle] (C) -- node[above, font=\scriptsize] {$\mathcal{A}_k^t$} (M);
		
		\draw[decorate, decoration={brace, amplitude=5pt, raise=2pt}, thick, gray!70] 
		(sigma.north west) -- (C.north east) 
		node[midway, above=8pt, font=\scriptsize, text=gray] {$\Psi = \Gamma \circ \Phi$};
		
		\draw[decorate, decoration={brace, amplitude=4pt, raise=2pt, mirror}, thick, gray!70] 
		(R.south west) -- (M.south east) 
		node[midway, below=8pt, font=\scriptsize, text=gray] {$S_{k,t} = \mathcal{A}_k^t \circ \Gamma$};
		
		\node[targetnode, above=0.8cm of M] (I) {$\mathcal{I}_k^t$};
		\node[left=0.1cm of I, font=\tiny, text=targetorange!70!black] {(target)};
		
		\draw[->, thick, targetorange!70!black, dashed] (I) -- (M);
		
		\node[modebox, left=0.75cm of sigma, yshift=1.0cm] (mode1) {symbolic};
		\node[modebox, below=0.15cm of mode1] (mode2) {referential};
		\node[modebox, below=0.15cm of mode2] (mode3) {vectorial};
		\node[modebox, below=0.15cm of mode3] (mode4) {relational};
		
		\node[
		fit=(mode1)(mode2)(mode3)(mode4),
		draw=modeblue!50!black, dashed, thick,
		rounded corners=4pt,
		inner sep=8pt,
		inner ysep=12pt
		] (modebox) {};
		
		\node[anchor=north west, font=\tiny\bfseries, text=modeblue!70!black] at ($(modebox.north west) + (0.1, -0.05)$) {Mode};
		
		\begin{scope}[on background layer]
			\node[
			fit=(sigma)(M)(I)(modebox),
			draw=archborder, fill=archgray,
			rounded corners=6pt,
			inner sep=15pt
			] (archbox) {};
		\end{scope}
		
		\node[anchor=north west, font=\small\bfseries] at ($(archbox.north west) + (0.15, -0.1)$) {Grounding architecture $\mathfrak{G}$};
		
		% desiderata
		\node[desidbox, right=1.2cm of archbox.east, yshift=1.5cm] (G0) {G0};
		\node[desidbox, below=0.2cm of G0] (G1) {G1};
		\node[desidbox, below=0.2cm of G1] (G2) {G2};
		\node[desidbox, below=0.2cm of G2] (G3) {G3};
		\node[desidbox, below=0.2cm of G3] (G4) {G4};
		\node[right=0.15cm of G0, font=\tiny, text=desidpurple!70!black] {authenticity};
		\node[right=0.15cm of G1, font=\tiny, text=desidpurple!70!black] {preservation};
		\node[right=0.15cm of G2, font=\tiny, text=desidpurple!70!black] {faithfulness};
		\node[right=0.15cm of G3, font=\tiny, text=desidpurple!70!black] {robustness};
		\node[right=0.15cm of G4, font=\tiny, text=desidpurple!70!black] {compositionality};
		
		\node[above=0.2cm of G0, font=\small\bfseries, text=desidpurple!80!black] {Desiderata};
		
		\draw[auditarrow] (archbox.east) -- (G0.west);
		\draw[auditarrow] (archbox.east) -- (G1.west);
		\draw[auditarrow] (archbox.east) -- (G2.west);
		\draw[auditarrow] (archbox.east) -- (G3.west);
		\draw[auditarrow] (archbox.east) -- (G4.west);
		
		% profile
		\node[profilebox, below=4.0cm of archcenter] (profile) {};
		\node[anchor=north west, font=\small\bfseries, text=profilered!80!black] 
		at ($(profile.north west) + (0.15, -0.1)$) {Grounding Profile $\text{GP}(\mathfrak{G}; E)$};
		
		\node[font=\footnotesize, text=profilered!70!black] at ($(profile.center) + (0, -0.15)$) {
			$\big\{\,
			\varepsilon_{\text{pres}}^{k,t},\;
			\varepsilon_{\text{faith}}^{k,t},\;
			{\text{ACE}}_E(M),\;
			\omega_U^{k,t}(\cdot),\;
			\delta_{\text{comp}}^{k,t},\;
			\beta^{k,t}
			\,\big\}$
		};
		
		\coordinate (desidmid) at ($(G4) + (0, -0.5) $);
		\draw[flowstyle] (desidmid) |- node[pos=0.75, above, font=\tiny, text=gray] {yields} ($(profile.east) + (0, 0.3)$);
		
		% typology
		\node[typobox, below=1.2cm of profile] (typo) {};
		\node[anchor=north west, font=\small\bfseries, text=typoteal!80!black] at ($(typo.north west) + (0.15, -0.1)$) {Typology};
		
		\node[font=\footnotesize, text=typoteal!70!black] at ($(typo.center) + (0, -0.15)$) {
			\textit{grounded, parrot, brittle,} \ldots
		};

		\draw[flowstyle] (profile) -- node[right, font=\tiny, text=gray, xshift=2pt] {classifies} (typo);

		% E to architecture
		\draw[flowstyle] (eval) -- node[right, font=\tiny, text=gray, xshift=2pt] {parametrizes} (archbox.north);
		
		% arch to profile
		\draw[flowstyle] (archbox.south) -- node[right, font=\tiny, text=gray, xshift=2pt] {reports} (profile.north);

	\end{tikzpicture}
	\caption{An \textbf{evaluation tuple} $E = (k, t, U, P)$ fixes the context, meaning type, threat model, and reference distribution, parametrizing the audit of a \textbf{grounding architecture} $\mathfrak{G}$. Within $\mathfrak{G}$, the processing chain maps surface symbols $\Sigma$ through an encoder $\Phi$ to internal representations $\mathcal{R}$, then via $\Gamma$ to task-level concepts $\mathcal{C}$, and finally through the alignment $\mathcal{A}_k^t$ to the meaning space $\mathcal{M}_k^t$ appropriate to the declared $(k, t)$. The architecture may instantiate different \textbf{modes} (symbolic, referential, vectorial, or relational) based on structural and etiological constraints.  Desiderata (G0--G4) audit the architecture: authenticity; preservation; faithfulness; robustness; compositionality. The audit yields a \textbf{grounding profile}  $\text{GP}(\mathfrak{G}; E) = \{\varepsilon_{\text{pres}}^{k,t}, \varepsilon_{\text{faith}}^{k,t}, \text{ACE}_E(M), \omega_U^{k,t}(\cdot), \delta_{\text{comp}}^{k,t}, \beta^{k,t}\}$ of measured parameters that locates the system within a typology of grounding archetypes.}
	\label{fig:formaltext}
\end{figure}
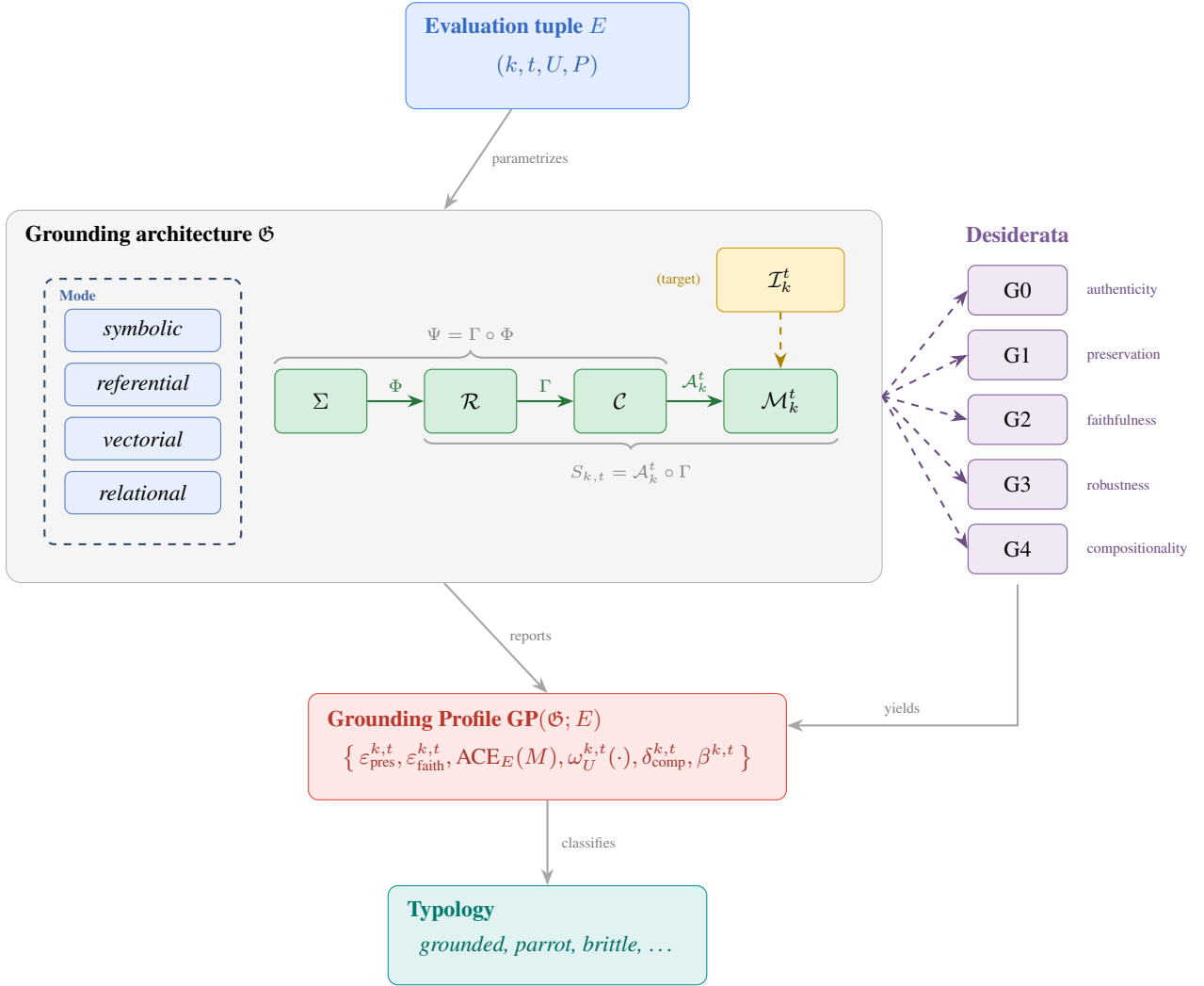

%Observe that the symbol-to-concept map $\Psi=\Gamma \circ \Phi$ spans $\Sigma \rightarrow \mathcal{R} \rightarrow \mathcal{C}$, while the observable semantics $S_{k, t}=\mathcal{A}_k^t \circ \Gamma$ spans $\mathcal{R} \rightarrow \mathcal{C} \rightarrow \mathcal{M}_k^t$; the shared segment $\mathcal{R} \xrightarrow{\Gamma} \mathcal{C}$ marks $\Gamma$ as the pivot between encoding (how symbols become internal states) and alignment (how internal states acquire external meaning).

The discussion of these items is directed by scales, values, and tolerances; these are what we use to predict behaviors. Let us now tour and exposit upon these in greater detail. 

\subsection{G0 (authenticity)}

G0 anchors the entire framework by stipulating that the semantic work must occur inside the agent rather than being supplied externally.  In this way, we introduce a distinction between having mechanisms that operate within the system and having those mechanisms as the product of the system's own learning or evolution. Without this authenticity requirement, meanings could be imported, declared, or maintained by an outside observer when we want them realized by the agent itself.

Under G0-weak, the agent's encoding and conceptualization (its mappings \(\Phi\) and \(\Gamma\)) are implemented inside the system and causally operative in producing behavior, but their contents may have been fixed by an external designer or analyst. Under G0-strong, these same mappings, together with any alignment \(\mathcal A_{k}^{t}\) used for evaluation, were acquired by the agent itself through an internal process of learning or evolution, denoted \(\mathcal T\), that is relevant to the tasks being evaluated. Post-hoc assignment of meaning, where an observer or developer stipulates what the agent's symbols stand for, violates authenticity. Meaning must arise through mechanisms that the agent not only possesses but has earned through its own causal history of adaptation or success.

The idea here is that authenticity separates an interpreted system from a self-interpreting one.  A dictionary or a translation engine, for instance, certainly can map between words with high fidelity, yet neither possesses internal understanding of what those words denote.  Using translate to render ``cat'' into Chinese and back again will reliably produce the same character sequence, but the translation system has not formed any concept of a cat: it manipulates patterns of symbols according to correlations extracted from data. This is Searle's Chinese Room: the person inside the room follows an English rulebook that maps Chinese characters to appropriate outputs; to outside observers, the system behaves as though it understands Chinese, but all the semantic work is done by the rulebook's author, not by the person or the room. The mechanism is internal in a physical sense but not authentic in a semantic one: the meaning has been smuggled in from outside.

Under G0-strong, the rulebook itself must have been learned or evolved within the agent.  A human language learner satisfies this condition: whatever mechanisms that connect words to perceptions and actions are  the result of developmental processes shaped by feedback from the environment and community.  Similarly, an embodied agent that learns through reinforcement to associate the word ``cup'' with graspable cylindrical objects achieves authenticity if its perceptual and linguistic pathways were acquired through training that improved task success; labels hard-coded by designers are certainly useful, but now authentic accordingly.  In contrast, a symbolic planner endowed with a hand-written lexicon of predicates and definitions is at best G0-weak; its symbols are causally effective but semantically inherited.

The purpose of G0 is, therefore, methodological as it is philosophical, in which we enforce a control on what counts as evidence of grounding by distinguishing internal causal contribution from external interpretation. Once authenticity is established, the remaining desiderata follow: their measurements refer to mechanisms that the agent genuinely possesses. Without G0, the entire enterprise risks circularity: we could always make any ungrounded system \emph{appear} grounded by attaching meanings from the outside; G0 compels us to locate meaning inside the agent's own causal and developmental structure.

\subsection{G1 (preservation)}

The simplest grounding desideratum: the meanings of atomic symbols remain stable through the agent's processing. Before a system can be said to combine, infer, or act on meanings, it must first preserve the integrity of the basic semantic units it operates upon. In this sense, preservation acts as a kind of ``semantic conservation law'' of grounding: if the meaning of ``red'' drifts toward ``orange'', or ``cat'' begins to encode something closer to ``dog'', then the subsequent faithfulness (G2), robustness (G3), or compositionality (G4) are already compromised.

Formally, for each atomic symbol \(\sigma \in \Sigma_{\text{atom}}\) and a declared evaluation \((k,t)\), preservation holds when
\[
d_{k,t} \left(
\mathcal{A}_{k}^{t}\Psi(\sigma),,
\mathcal{I}_{k}^{t}(\sigma)
\right)
\le
\varepsilon_{\text{pres}}^{k,t}.
\]
Here, \( \Psi = \Gamma \circ \Phi \) is the system's end-to-end map from surface symbols to its internal conceptual representations, \( \mathcal{A}_{k}^{t} \) aligns those internal concepts to the meaning space appropriate to the chosen context \(k\) and meaning type \(t\), and \( \mathcal{I}_{k}^{t} \) provides the intended interpretation of the same atoms. The metric \( d_{k,t} \) measures semantic distance in that meaning space, and \( \varepsilon_{\text{pres}}^{k,t} \) is the maximal admissible deviation, an (empirically estimated) tolerance that specifies how much drift is acceptable before preservation is considered to have failed.

Consider: G1 asks whether the system's internal realization of each basic term still means what it is supposed to mean within the declared context; it is, in effect, a calibration test: the system's elementary symbols should behave as stable semantic units. Without this guarantee, higher-order semantic structure cannot be assessed, since every compositional or causal claim would rest on malleable, moving referents.

The metric \(d_{k,t}\) adapts to the meaning type. In a vectorial or distributional setting, the cosine distance between the model's embedding of a token and a ``gold'' embedding derived from human similarity judgments. In referential or perceptual tasks, measure spatial or perceptual overlap, such as \(1-\text{IoU}\) between predicted and true object regions for ``cat'' or ``mug''. In logical systems, \(d_{k,t}\) may very well collapse to identity: zero when denotations coincide, one otherwise. In social-normative contexts, this metric is more difficult to decide, for which here we posit a kind of ``cost'' of conversational repair or divergence in normative commitments. Whatever its form, \(d_{k,t}\) checks for small versus large semantic discrepancy in the relevant domain.

Preservation underwrites the rest of the framework. Faithfulness (G2) presupposes that atomic meanings are correct before their combinations can be tested; robustness (G3) measures continuity relative to those preserved anchors; compositionality (G4) assumes stable building blocks for systematic combination. G1 thus secures the semantic ``ground floor'', so to say, on which the higher desiderata rest.

\begin{example}[Symbolic logic system]
	For a symbolic calculus with atomic predicates \(\Sigma_{\text{atom}} = {\textit{bachelor}, \textit{man}, \textit{unmarried}}\), and an interpretation \(\mathcal{I}_{k}^{t}\) that stipulates their denotations, preservation holds exactly: \(d_{k,t}=0\) whenever \(\mathcal{A}_{k}^{t}\Psi(\sigma)=\mathcal{I}_{k}^{t}(\sigma)\). Here \(\varepsilon_{\text{pres}}^{k,t}=0\); the atoms are preserved by definition.
\end{example}

\begin{example}[Large language model]
	For an LLM, \(\Sigma_{\text{atom}}\) consists of subword tokens.  The realized meaning \(\mathcal{A}_{k}^{t}\Psi(\text{cat})\) follows from the contextual embedding of ``cat'', while \(\mathcal{I}_{k}^{t}(\text{cat})\) is a reference vector derived from human similarity judgments or dictionary definitions against which to check.  Preservation error \(\varepsilon_{\text{pres}}^{k,t}\) records how consistently the model keeps ``cat'' semantically near related words (``feline'', ``kitten'') and distant from unrelated ones (``dinosaur'', ``idea'').  Polysemy and contextual drift will inevitably raise this error.
\end{example}

\begin{example}[Embodied agent]
	For an agent interpreting ``red'', ``blue'', and ``mug'', preservation tests whether its perceptual pipeline maps these terms to stable visual categories.  Mis-identifying red mugs under new lighting would increase \(d_{k,t}\) and reveal that preservation fails in that region of the perceptual space.
\end{example}

Preservation therefore enforces the identity principle of grounding: if a symbol \(\sigma\) means \(m\) (by whatever representative formulation we take for ``mean''), then the agent's internal realization of \(\sigma\) should approximate \(m\) within a tolerance. Only once this semantic conservation is established can one meaningfully evaluate whether the system is faithful, robust, or compositional in its use of those preserved meanings.

\subsection{G2 (faithfulness)}

An agent's realized meanings should match the intended ones, and those matches produced by mechanisms that were selected for success rather than stipulated or accidental. As such, it has a correlational tier (G2a) and an etiological tier (G2b), together distinguishing \emph{what} the system does from \emph{why} it does it.

For a declared evaluation setup \((k,t)\) and any well-typed expression \(\sigma\) (atomic or composed, for now we allow for it), correlational faithfulness holds when
\[
d_{k,t}\left(\mathcal A_{k}^{t}\Psi(\sigma),,\mathcal I_{k}^{t}(\sigma)\right)\ \le\ \varepsilon_{\text{faith}}^{k,t}.
\]

We have that \(\Psi=\Gamma\circ\Phi\) is the agent's end-to-end symbol-to-concept map, \(\mathcal A_{k}^{t}\) aligns those concepts to the meaning space \(\mathcal M_{k}^{t}\) relevant to context \(k\) and meaning type \(t\), \(\mathcal I_{k}^{t}\) provides the intended interpretation against which we compare, \(d_{k,t}\) measures semantic discrepancy in \(\mathcal M_{k}^{t}\), and \(\varepsilon_{\text{faith}}^{k,t}\) is a tolerance estimated that includes composed items. Unlike G1, which probes atoms and establishes a baseline, G2a evaluates whole expressions under the chosen \((k,t)\): the agent's realized meaning should land within a tolerance of the target meaning. In practice, \(d_{k,t}\) inherits the semantics of the task: in referential settings it may be \(1-\text{IoU}\) on denoted objects or a spatial-relation penalty; in vectorial settings, cosine distance between predicted and reference representations; in logical settings, identity of denotations or overlap of entailment sets. Low \(\varepsilon_{\text{faith}}^{k,t}\) indicates that the agent behaves as if it understands, but by itself says nothing about causal responsibility.

To catch causality, then, (or, rather, attempt to; causality is a difficult problem to solve, to say the least ) we consider etiological faithfulness. Fix an evaluation tuple \(E=(k,t,U,P)\) and a success predicate \(\text{succ}:\mathcal O\to{0,1}\) for the task family at issue. There must exist an agent-internal mechanism \(M\) (be it a learned subnetwork, circuit, module, rule, or parameter subset within \(\Phi,\Gamma\), and/or \(\mathcal A_{k}^{t}\)) that was acquired under a declared training or evolutionary process \(\mathcal T\), because it improved success on that family, and whose current contribution is measurably causal. We require a positive average causal effect
\[
\text{ACE}_{E}(M)\ :=\ \mathbb E\left[\Pr\big(\text{succ}\mid \text{do}(M{=}\text{on})\big)\ -\ \Pr\big(\text{succ}\mid \text{do}(M{=}\text{off})\big)\right]\ \ge\ \eta^{k,t},
\]
with the expectation taken over the instance distribution in \(E\) (and, when relevant, over \(\mathcal T\)). The threshold \(\eta^{k,t}\) sets sufficiency appropriate to the domain. An implementation is direct intervention or ablation that toggles \(M\) and demonstrates the requisite drop; when interventions are infeasible, lower bounds obtained via retrain-without-\(M\) controls, invariance across environments, instrumental-variable analyses, or influence-function approximations suffice, provided assumptions and uncertainty are stated. High accuracy alone does not satisfy G2b unless one links it, causally and historically, to such an \(M\) selected for success.

G2a asks whether realized meanings ``line up'' with intended meanings; G2b asks whether that alignment is ``earned'' by mechanisms the agent acquired because they made it succeed. One can satisfy G2a with a lookup table or brittle shortcut; G2b blocks such semantic smuggling by insisting on an internal, learned basis for success. Conversely, a mechanism might be causally potent yet misaligned with the evaluator's targets; then G2a fails even if G2b holds for some other outcome.

In estimation, G2a reduces to semantic error on composed items under \((k,t)\); one reports \(\varepsilon_{\text{faith}}^{k,t}\) with confidence intervals, specifying the metric \(d_{k,t}\), the composition of the test set, and any nuisance variation that has been controlled. G2b requires more care, however, in experimental design. Direct ablations of \(M\), retraining without the signal that produces \(M\), and environment interventions that change task-relevant covariates while holding spurious ones fixed, all contribute evidence toward \(\text{ACE}_{E}(M)\). Because \(\eta^{k,t}\) is task-dependent, etiological sufficiency is indexed to the evaluation tuple: an \(M\) that is selected-for linguistic prediction may meet G2b for (\(k_{\text{ling}},\text{inf})\) but not for \((k_{\text{embodied}},\text{ext})\), for example.

\begin{example}[Text-only language model]
	For \((k,t)=(k_{\text{ling}},\text{inf})\), G2a measures how closely the model's realized meanings match linguistic targets such as paraphrase semantics or entailment labels; \(d_{k,t}\) a cosine loss or an entailment discrepancy, and \(\varepsilon_{\text{faith}}^{k,t}\) will be small on in-distribution text. 
	
	For G2b, we have a difficult problem indeed, in identifying an internal head or subspace \(M\) that supports, say, coreference resolution, show that it emerged under the language-modeling objective \(\mathcal T\), and that ablating \(M\) reduces success on coreference tasks by at least \(\eta^{k,t}\). Such an exercise, though easy to state, is decidedly \emph{not} so easy to practice, and certifies etiological faithfulness for linguistic success; it does not by itself establish G2b for world-referential tasks.
\end{example}

\begin{example}[Embodied referential agent]
	For \((k,t)=(k_{\text{embodied}},\text{ext})\), G2a compares denotations or spatial relations derived from full commands (e.g., ``the red mug left of the bowl'') against scene ground truth; \(d_{k,t}\) must then combine \(1-\text{IoU}\) with relation penalties. 
	
	For G2b, a learned visual-grounding module \(M\) that arose through reinforcement or imitation learning \(\mathcal T\) and whose ablation causes a marked drop in task success provides the required causal warrant. Here both alignment and selection-for are about world contact.
\end{example}

\begin{example}[Logical knowledge base]
	In a symbolic system evaluated at \((k_{\text{logic}},\text{inf})\), G2a holds trivially when \(\mathcal A_{k}^{t}\Psi\) is defined to coincide with the stipulated \(\mathcal I_{k}^{t}\); then \(\varepsilon_{\text{faith}}^{k,t}=0\) on the covered fragment. 
	
	Unless the rules and interpretations were themselves acquired under a process \(\mathcal T\) that selects for inferential success, however, G2b fails right out: the mapping is correct by fiat, not earned by the agent.
\end{example}

G1 ensures that atomic meanings are intact; G2a extends agreement to whole expressions under the declared semantics; G2b ties that agreement to the agent's causality and history. 

\subsection{G3 (robustness)}

A grounded system must sustain the stability of its meanings under perturbation; it demands that small changes to input or context (typos; paraphrases; variations in lighting or viewpoint; shifts in register or accent) induce (proportionally) small changes in the system's interpreted meanings. Robustness, in this sense, is the requirement of semantic continuity under disturbance: a grounded agent should not lose its grip on meaning when the world or its sensory inputs wobble.

For perturbations \(u:\mathcal R\to\mathcal R\) drawn from a declared threat \(U\) with scale \(\varepsilon(u)\) and confidence level \(1-\alpha\), robustness holds when
\[
\Pr_{r\sim P}\Big[\sup_{\varepsilon(u)\le\varepsilon}
d_{k,t}\big(\mathcal A_{k}^{t}\Gamma(r),,\mathcal A_{k}^{t}\Gamma(u!\cdot!r)\big)
\le \omega_{U}^{k,t}(\varepsilon)\Big]\ge 1-\alpha,
\]

with \(\omega_{U}^{k,t}(0)=0\).  Here, \(r\) denotes an internal representation drawn from the reference distribution \(P\); \(u \cdot r\) is its perturbed counterpart; \(d_{k,t}\) measures semantic distance in the meaning space appropriately relevant to context \(k\) and meaning type \(t\); \(\Gamma\) converts representations to conceptual states; \(\mathcal A_{k}^{t}\) projects those concepts into the space of meanings; and \(\omega_{U}^{k,t}(\varepsilon)\) bounds the maximum tolerated drift under perturbations of scale\footnote{Lipschitz continuity is a special case where the modulus grows linearly, \(\omega_{U}^{k,t}(\varepsilon)\le L^{k,t}\varepsilon\).} up to \(\varepsilon\). The confidence level \(1-\alpha\) expresses that the condition holds for the majority of representative inputs, rather than universally.

We want to account for ``graceful degradation''.  The robustness modulus \(\omega_{U}^{k,t}\) acts as a semantic analogue of a shock absorber: it describes how fast meanings move when inputs are jostled.  A well-grounded system should display a smooth, shallow curve around \(\varepsilon=0\): semantics change slowly under realistic disturbances; an ill-grounded or brittle system exhibits step-function behavior, in that it is unchanged until a tiny perturbation causes catastrophic semantic collapse.

To interpret this concretely, we must specify both the perturbation space and the metric of semantic deviation. 

\begin{example}[Linguistic system]
	In linguistic systems, the threat model \(U_{\text{ling}}\) may be quite variable, consisting of paraphrases, synonyms, typographical errors, with \(\varepsilon(u)\) corresponding to edit distance or embedding-level noise. The internal representations \(r\) are text embeddings, and the semantic distance \(d_{k,t}\) measured in the usual way by cosine difference between meanings, or by shifts in entailment probability. A robust language model should maintain nearly constant \(\mathcal A_{k}^{t}\Gamma(r)\) under such surface perturbations: ``the red mug'' and ``red mug'' should denote the same entity.
\end{example}

\begin{example}[Vision model]
	In vision and robotics, \(U_{\text{nat}}\) lighting variation, camera rotation, blur, occlusion.  Here \(r\) denotes visual features, \(\varepsilon(u)\) quantifies photometric or geometric change, and \(d_{k,t}\) measures deviation in predicted scene meanings.  A grounded agent maintains stable reference: the ``red mug'' remains the same object under mild viewpoint shifts.
\end{example}

\begin{example}[Multimodal systems]
	For multimodal systems \((U_{\text{vl}})\), we expect that perturbations couple linguistic and visual channels, such as cropping paired images or paraphrasing captions. \(d_{k,t}\) cosine distance in a shared embedding space or mismatch rate in retrieval. The local slope \(L=\omega'_{U}(0)\) then captures how quickly semantic alignment deteriorates under incremental corruption.
\end{example}

\begin{example}[Multimodal systems]
	Symbolic and logical systems behave differently. Their representation space \(\mathcal R\) is uniformly discrete, and \(U_{\text{edit}}\) comprises single-character or rule edits. Because there are no intermediate states, \(\omega_{U}^{k,t}\) degenerates: it is zero for \(\varepsilon<1\) and jumps abruptly when an edit occurs. Symbolic systems are therefore formally continuous, practically brittle: continuity holds only vacuously, since any small perturbation breaks the grammar or change meaning entirely.
\end{example}

\begin{example}[Social settings]
	In social-normative or pragmatic domains, robustness tests the stability of communicative meaning under minor social variation, which may well include the above perturbations as well. \(U_{\text{soc}}\) register changes or politeness shifts; \(r\) dialogue states; \(d_{k,t}\) we take as the cost of conversational repair required to restore mutual understanding.  Minor differences such as ``thanks'' versus ``cheers'' should produce low deviation, while an insult constitutes a large \(\varepsilon\).
\end{example}

We sketch the empirical recipe: estimation of robustness proceeds by defining the threat model \(U\), sampling unperturbed representations \(r\sim P\), applying controlled perturbations \(u \cdot r\), computing semantic deviations \(d_{k,t}\big(\mathcal A_{k}^{t}\Gamma(r),\mathcal A_{k}^{t}\Gamma(u \cdot r)\big)\), and fitting the empirical modulus \(\omega_{U}^{k,t}(\varepsilon)\). Robustness thus quantifies the continuity of meaning with respect to both internal and environmental noise; preservation secures what the atoms mean, faithfulness ensures they mean the right things, and robustness ensures they continue to mean them under the minor degradations inevitable in real-world conditions, in which systems behave as though equipped with appropriate shock absorbers.

\subsection{G4 (compositionality)}

A grounded system should build the meanings of complex expressions systematically from the meanings of their parts; it operationalizes the classical principle of compositionality: the meaning of the whole is determined by the meanings of the parts and their mode of combination. In a grounded architecture, compositionality ensures that once the atomic meanings are preserved (G1) and correctly aligned (G2), they can be combined productively without retraining on every new phrase or configuration.

Formally, let the grammar provide typed constructors \(f:\Sigma^{\vec{\tau}}\to\Sigma^{\tau'}\) that specify how symbols may combine, and let the semantic algebra \(\langle \mathcal M_{k}^{t}, \mathcal F_{k}^{t}\rangle\) assign to each syntactic constructor \(f\) a semantic operation \(f_{k}^{\mathcal M,t}\in\mathcal F_{k}^{t}\). The system is \(\delta_{\text{comp}}^{k,t}\)-compositional if, for every such constructor and tuple of arguments \(\vec\sigma=(\sigma_1,\dots,\sigma_n)\),
\[
d_{k,t}\Big(\mathcal A_{k}^{t}\Psi(f(\vec\sigma)),, f_{k}^{\mathcal M,t}\big(\mathcal A_{k}^{t}\Psi(\sigma_1),\dots,\mathcal A_{k}^{t}\Psi(\sigma_n)\big)\Big)\le\delta_{\text{comp}}^{k,t}.
\]

Here, \(\Psi=\Gamma\circ\Phi\) maps surface symbols to internal concepts, \(\mathcal A_{k}^{t}\) aligns those concepts to the meaning space appropriate to the declared context \(k\) and meaning type \(t\), \(d_{k,t}\) measures semantic distance in that space, and \(\delta_{\text{comp}}^{k,t}\) quantifies the tolerated deviation between the meaning that the agent actually realizes for the composed expression and the meaning obtained by composing the meanings of its constituents via the operations defined in the semantic algebra. When \(\delta_{\text{comp}}^{k,t}=0\), the mapping \(\mathcal A_{k}^{t}\Psi\) is a homomorphism from the syntactic algebra to the semantic one: the system realizes perfect compositionality on its grammar (see Theorem~\ref{thm:hom}).  Nonzero but small \(\delta_{\text{comp}}^{k,t}\) indicates approximate compositionality, a deviation.

Systematicity on a held-out set of novel combinations \(\mathcal D_{\text{novel}}\) further checks whether this compositional behavior generalizes. Given a tolerance \(\tau\), the systematicity
\[
\beta^{k,t}=\frac{1}{|\mathcal D_{\text{novel}}|}\sum_{(\vec\sigma,y)\in\mathcal D_{\text{novel}}}\mathbf 1\left[d_{k,t}\big(\mathcal A_{k}^{t}\Psi(f(\vec\sigma)),y\big)\le\tau\right]
\]

reports the proportion of unseen expressions whose realized meanings remain within the acceptable distance \(\tau\) of their target meanings; high \(\beta^{k,t}\) is the ability to extrapolate compositionally as opposed to memorizing configurations.

Let us gesture at the metric used here as we have above. The distance metric \(d_{k,t}\) checks for what counts as a compositional deviation, and varies with the meaning type\footnote{It is a fair question whether such metrics can simply be added with appropriate weights for meanings that span more than one domain, such as mixed or hybrid systems, and whether they should employ some composite metrics, for example
	\[
	d_{k,t}(m_1,m_2)=\alpha d_{\text{ext}}+\beta d_{\text{inf}}+\gamma d_{\text{soc}},
	\]
	to balance extensional, inferential, and social-normative components, for example. We leave this as an aside for now, and for future work.}.  In vectorial or distributional models, \(d_{k,t}\) geometric in the space, measuring how closely the representation of a composite phrase matches the vector predicted by composing its parts. In symbolic or logical systems, \(d_{k,t}\) collapses to equality or an edit distance on derivations, probing whether the inferred logical form yields the same entailments as the algebraic combination of constituents. In referential and perceptual tasks, \(d_{k,t}\) spatial, again such as \(1-\text{IoU}\) for compared object regions, or an L2 distance between predicted and actual spatial relations. In social or pragmatic domains, we have that \(d_{k,t}\) checks distance in a network of dialogue commitments or is simply the repair cost needed to restore mutual understanding; compositionality here measures whether the pragmatic force of a compound utterance derives predictably from its parts.

Whatever the metric, the goal is to quantify how far apart the composed meaning and the algebraic composition of its parts lie in the space where success or truth is judged.

\begin{example}[Vectorial composition]
	In a language model, embeddings for ``red'', ``car'', and ``red car'' can be compared.  If the direct embedding of ``red car'' lies close to the vector produced by adding or linearly combining the embeddings for ``red'' and ``car'', the model exhibits low \(\delta_{\text{comp}}^{k,t}\): it composes meanings geometrically in a way that mirrors the language's own algebra of modifiers and nouns.
\end{example}

\begin{example}[Referential composition]
	For an embodied agent interpreting ``the red mug left of the bowl'', the grammar provides a constructor combining color, object, and spatial relation. The semantic algebra interprets this as an intersection of color and object predicates constrained by a spatial relation. 
	
	Compositionality requires that the meaning assigned by the agent to the full phrase match, within admitted tolerance \(\delta_{\text{comp}}^{k,t}\), the intersection of its meanings for ``red mug'' and ``left of the bowl''.  If performance on unseen combinations like ``blue bowl right of the mug'' remains stable, the agent demonstrates high systematicity \(\beta^{k,t}\).
\end{example}

Compositionality connects the local stability of meanings secured by G1--G3 to the productive structure of language and reasoning.  G1 guarantees that atoms mean what they should; G2 ensures that those meanings are correctly aligned and causally warranted; G3 preserves them under perturbation; G4 ensures that they combine coherently to produce new meanings.

\section{Varieties of grounding}\label{sec:variground}

Different $\{\Sigma, \mathcal{R}, \mathcal{M}\}$ instantiations yield different grounding modes, and so $\mathfrak{G}$ subsumes grounding types discussed in \cite{HARNAD1990335,gubelmann2024pragmatic,mollo2023vector,sogaard2023grounding}. The grounding mode is simply a label for \emph{how} an architecture (attempts to) link(s) symbols to meanings. Note that this is a property an architecture may satisfy (e.g., symbolic, referential, vectorial, relational), and is not the architecture itself. Framed in this way, we recover the taxonomy explored as well in \cite{mollo2023vector}, and provision for it an evaluation in our grounding framework.

\begin{definition}
	A \textbf{grounding mode} at $(k,t)$ is a predicate $\mathsf{Mode}(\text{GA})$ over grounding architectures specified by structural and etiological constraints on $\mathfrak{G}$. We write $\text{GA}\in\mathsf{Mode}$ when those constraints hold (e.g., \emph{symbolic}, \emph{referential}, \emph{vectorial}, \emph{relational}).
\end{definition}

There exists a subtlety here. Our modes can be read on two axes: symbolic, vectorial, and relational modes describe how content is internally represented through rules, geometry, or structured roles; referential grounding cuts across these as the case where such representations are additionally embodied or sensorimotorly coupled to the world they describe. A system may therefore be symbolically, vectorially, or relationally referential, depending on how its internal substrate connects to its perceptual-action loop. Multimodality alone does not guarantee referential grounding: what matters is the causal provenance of data and the world-coupling of the learning process.

In this way, we may compare by measured parameters. The \emph{profile} is set of measured numbers for an architecture.

\begin{definition}
	The \textbf{grounding profile} of a GA at $(k,t,U,P)$ is
	\[
	\text{GP}(\text{GA})\ :=\ 
	\big\{\varepsilon_{\text{pres}}^{k,t},\ \varepsilon_{\text{faith}}^{k,t},\
	\delta_{\text{faith}}^{k,t},\ \omega_{U}^{k,t},\
	\delta_{\text{comp}}^{k,t},\ \beta^{k,t}\big\}.
	\]
\end{definition}

We compare grounding architectures by their profiles at $(k,t,U,P)$; mode labels are descriptive. Examples: \emph{symbolic} modes fix meanings by explicit rules inside a calculus (strong composition, weak world causation unless rules are learned); \emph{referential} modes are anchored by perception and action, with etiological warrant from covariation learned under task success; \emph{vectorial} modes locate meanings in vector geometry, where distances/directions track regularities, and world contact arises only if trained on grounded signals (e.g., paired modalities); \emph{relational} modes give meaning by position in typed networks of relations and inferences, excellent for abstraction and rule-based composition, with world contact only if learned from data.

\subsection{Symbolic grounding}\label{subsec:symbolic}

\emph{Symbolic grounding} obtains when symbols derive meaning from explicit symbolic structures (definitions, axioms, rules). In the limiting case, the representation space collapses to the symbol space ($\mathcal R=\Sigma$), and meanings are fixed by a stipulated interpretation $\mathcal I_{k}^{t}:\Sigma\rightharpoonup\mathcal M_{k}^{t}$ (e.g., a dictionary entry defining \textit{BACHELOR} as ``unmarried man'', or a logical knowledge base with model-theoretic semantics \cite{levesque2001logic}). This is intra-symbolic grounding: symbols explained by other symbols and formal relations.

Unless otherwise stated we take symbolic systems at
\[
E=(k,t,U,P)=(k_{\text{logic}},\ \text{\text{inf}},\ U_{\text{edit}},\ P_{\text{strings}}),
\]
where $\mathcal R=\Sigma$ with edit-distance $d_{\mathcal R}$, threat model $U_{\text{edit}}$ comprises character-level perturbations scaled by edit distance (e.g., single-edit typos), and $P_{\text{strings}}$ is a reference distribution over $\mathcal R$.

A canonical instantiation takes: $\Sigma$ as symbols (words, predicates), $\mathcal R=\Sigma$ (trivial representation), $\mathcal C$ as a conceptual-role/derivation space, $\mathcal M_{k}^{\text{\text{ext}}}$ as inferential/intensional meanings, $\Phi=\text{id}_{\Sigma}$, and $\Gamma:\Sigma\to\mathcal C$ as table-lookup/rule-closure, so $\Psi=\Gamma$.

\begin{definition}
	A mode is \textbf{symbolic} at $(k,t)=(k_{\text{logic}},\text{\text{inf}})$ if:
	\begin{enumerate}
		\item $(\mathcal R,d_{\mathcal R})$ is uniformly discrete; take $\mathcal R=\Sigma$ with edit metric;
		\item there exists an analyst-given $\mathcal I_{k}^{t}:\Sigma\rightharpoonup\mathcal M_{k}^{t}$ such that, for all atoms $\sigma\in\Sigma_{\text{atom}}$,
		\[
		\mathcal A_{k}^{t}\Psi(\sigma)=\mathcal I_{k}^{t}(\sigma)
		\]
		(with $\Phi=\text{id}$ and $\Gamma$ rule/lookup);
		\item G0-weak holds; G0-strong holds only if $\Phi,\Gamma$ and any appealed $\mathcal A_{k}^{t}$ were acquired under selection process $\mathcal T$.
	\end{enumerate}
\end{definition}

For intensional/inferential evaluation, we take $\mathcal A_{k}^{t}:\mathcal C\to\mathcal M_{k}^{t}$ to identify conceptual roles with meanings (or to map them explicitly if types differ). By fiat, $\mathcal I_{k}^{t}$ agrees with $\mathcal A_{k}^{t}\Psi$ on atoms. When the grammar supplies typed constructors and the semantic algebra $\langle\mathcal M_{k}^{t},\mathcal F_{k}^{t}\rangle$ interprets each constructor $f$ by $f_{k}^{\mathcal M,t}$, and when $\Gamma$ realizes the grammar, the homomorphic extension satisfies (notated with the uparrow on $\mathcal I_{k}^{t}$)
\[
\mathcal A_{k}^{t}\Psi\big(f(\vec\sigma)\big)
=  f_{k}^{\mathcal M,t} \big(\mathcal A_{k}^{t}\Psi(\sigma_1),\dots,\mathcal A_{k}^{t}\Psi(\sigma_n)\big)
=  \mathcal I_{k}^{t,\uparrow}\big(f(\vec\sigma)\big),
\]
so G1 holds on atoms ($\varepsilon_{\text{pres}}^{k,t}=0$) and G2a holds on all well-typed composites in-grammar ($\varepsilon_{\text{faith}}^{k,t}=0$). Absent a documented selection process $\mathcal T$ that selected for task success, symbolic systems with purely stipulated mappings fail G2b (no selected-for history), even if ablations show causal dependence.

For G3, since $(\mathcal R,d_{\mathcal R})$ is uniformly discrete with minimal nonzero distance $1$, any map $S_{k,t}=\mathcal A_{k}^{t}\circ\Gamma$ is uniformly continuous with a degenerate global modulus: $\omega(\varepsilon)=0$ for $\varepsilon<1$ and $\omega(\varepsilon)\le\text{diam}\big(S_{k,t}(\mathcal R)\big)$ for $\varepsilon\ge1$. Under $U_{\text{edit}}$ (single-edit typos), we expect empirical curves $\widehat\omega_{U}^{k,t}$ to jump from 0 to near-maximal semantic deviation at the first non-zero radius: \textit{cat} to \textit{car} can induce a \emph{large} intensional shift!

For G4, compositionality is perfect \emph{inside} the grammar ($\delta_{\text{comp}}^{k,t}=0$). For out-of-grammar combinations (e.g., lacking a lexical entry or constructor for a modifier in something like \textit{quantum bachelor}), systematicity on the held-out we expect collapse ($\beta^{k,t}\approx 0$), unless rules are added.

Symbolic grounding, as characterized here, satisfies G0-weak, G1, and G4 (on-grammar), but fails G2b without a selection history, and offers no practically useful G3 robustness to small input perturbations under $U_{\text{edit}}$. Meanings reside in the stipulated interpretation and calculus: a scope mismatch with grounding requirements rather than a criticism of formal semantics per se.

\subsection{Referential grounding}\label{subsec:refground}

\emph{Referential grounding} connects symbols to entities and properties in the world through sensorimotor experience: meanings are anchored by causal interaction rather than by intra-symbolic definitions. The paradigm case is a child learning \emph{cat} via encounters with actual cats; perceptual regularities and successful actions stabilize the category.

We evaluate at
\[
E=(k,t,U,P)=(k_{\text{embodied}},\ \text{ext},\ U_{\text{nat}},\ P_{\mathcal R}),
\]
where $U_{\text{nat}}$ consists of naturalistic sensory perturbations (e.g., lighting, pose, viewpoint, sensor noise) with a scale $\varepsilon(u)$, and $P_{\mathcal R}$ is a reference distribution over $\mathcal R$ (e.g., in-distribution sensorimotor states). When considered, we report an adversarial subset $U'$ (e.g., adversarial patches or carefully crafted distractors) separately.

Within our framework, a canonical instantiation: $\Sigma$ as linguistic tokens/commands (e.g., ``red ball''); $\mathcal R$ as sensorimotor states/features (the extent to which these are possibly \emph{fused} from language and perception remains open \cite{howell2005model}); $\mathcal C$ as perceptual/conceptual categories; and $\mathcal M_{k}^{\text{ext}}$ as task/world states (entities, properties, relations) for the declared context $k$. The encoder $\Phi$ provides the causal portal from surface symbols to internal representations (and, in interactive settings, $\mathcal R$ also receives concurrent sensory input); the map $\Gamma:\mathcal R\to\mathcal C$ induces task-level concepts; realized semantics for evaluation flows through
\[
S_{k,t}=\mathcal A_{k}^{\text{ext}}\circ\Gamma\quad\text{and}\quad
\Psi=\Gamma\circ\Phi \ \text{for the symbol-conditioned pathway}.
\]
The alignment $\mathcal A_{k}^{\text{ext}}:\mathcal C\to\mathcal M_{k}^{\text{ext}}$ ties internal concepts to world-level meanings and is expected, under G0-strong, to be an \emph{agent-internal, learned} mechanism. The intended interpretation $\mathcal I_{k}^{\text{ext}}$ fixes task-relevant denotations in context $k$.

\begin{definition}
	A mode is \textbf{referential} at $(k,t)=(k_{\text{embodied}},\text{ext})$ if:
	\begin{enumerate}
		\item $\mathcal R$ includes sensorimotor encodings relevant to $k$ and $\mathcal M_{k}^{\text{ext}}$ denotes world/task states;
		\item $\mathcal A_{k}^{\text{ext}}:\mathcal C\to\mathcal M_{k}^{\text{ext}}$ is an \emph{internal} map whose parameters were \emph{acquired under} an interactional/embodied process $\mathcal T$, establishing stable causal covariation between internal states and worldly regularities;
		\item there exists an agent-internal mechanism $M$ (within $\Phi,\Gamma$, and/or $\mathcal A_{k}^{\text{ext}}$) acquired under $\mathcal T$ such that, for a task success predicate $\text{succ}:\mathcal O\to\{0,1\}$ and evaluation $E$,
		\[
		\text{ACE}_{E}(M)\ :=\ \mathbb E \left[\Pr \big(\text{succ}\mid \text{do}(M{=}\text{on})\big)\ -\ \Pr \big(\text{succ}\mid \text{do}(M{=}\text{off})\big)\right]\ \ge\ \eta^{k,t}.
		\]
		In this sense, G2b (etiological faithfulness) holds for extensional tasks.
	\end{enumerate}
\end{definition}

G0-strong holds when $\Phi,\Gamma$ (and any appealed $\mathcal A_{k}^{\text{ext}}$) are implemented inside the agent \emph{and} were acquired by agent-internal learning or evolution $\mathcal T$ that shaped perceptual-motor pathways; G0-weak holds if these mechanisms are implemented internally. An external measurement that supplies $\mathcal A_{k}^{\text{ext}}$ would violate G0-strong and must be declared G0-weak.

For concrete atoms $\sigma\in\Sigma_{\text{atom}}$ (e.g., basic object/color terms) with $\sigma\in\text{dom}(\mathcal I_{k}^{\text{ext}})$,
\[
d_{k,t} \Big(\mathcal A_{k}^{\text{ext}}\Psi(\sigma),\ \mathcal I_{k}^{\text{ext}}(\sigma)\Big)\ \le\ \varepsilon_{\text{pres}}^{k,t}.
\]
Intuitively, hearing \texttt{red} or \texttt{mug} should activate an internal concept that projects, via $\mathcal A_{k}^{\text{ext}}$, to the same region of world-states as the intended denotation in $k$. In referential settings with well-learned perceptual categories, we expect that $\varepsilon_{\text{pres}}^{k,t}$ is small for such concrete atoms, though we admit drift across contexts if $P_{\mathcal R}$ shifts (e.g., unusual lighting).

For (possibly composite) $\sigma\in\text{dom}(\mathcal I_{k}^{\text{ext}})$,
\[
d_{k,t} \Big(\mathcal A_{k}^{\text{ext}}\Psi(\sigma),\ \mathcal I_{k}^{\text{ext}}(\sigma)\Big)\ \le\ \varepsilon_{\text{faith}}^{k,t},
\]
so realized meanings correlate with intended denotations in $k$. The etiological clause (G2b) supplies the causal warrant absent from purely stipulated systems: there exists an internal $M$ (e.g., a learned visual-grounding or spatial-relation module) acquired under $\mathcal T$ such that turning $M$ off measurably reduces success on the task family ($\text{ACE}_{E}(M)\ge\eta^{k,t}$). Direct intervention/ablation or, when infeasible, by lower bounds via retrain-without-$M$, invariance across environments, or instrumental variables, with assumptions and uncertainty reported, probe for this.

With naturalistic $U_{\text{nat}}$ and $P_{\mathcal R}$, we consider a modulus $\omega_{U}^{k,t}$ satisfying, for confidence level $1-\alpha$,
\[
\Pr_{r\sim P_{\mathcal R}}  \Big[\ \sup_{\varepsilon(u)\le\varepsilon}\ 
d_{k,t} \Big(\mathcal A_{k}^{\text{ext}}\Gamma(r),\ \mathcal A_{k}^{\text{ext}}\Gamma(u \cdot r)\Big)
\ \le\ \omega_{U}^{k,t}(\varepsilon)\ \Big]\ \ge\ 1-\alpha,
\]
where ``$u\cdot r$'' denotes applying the perturbation $u$ in $\mathcal R$. For benign, naturalistic $U_{\text{nat}}$, grounded systems should show graceful degradation (small $\omega_{U}^{k,t}$ over a useful radius). In contrast, adversarial families $U'$ induce large semantic shifts at tiny input scales; we expect that if global Lipschitz fits fail, robustness should be reported locally or distributionally and adversarially \emph{separately}.

Given typed constructors in the grammar and a semantic algebra $\langle \mathcal M_{k}^{\text{ext}},\mathcal F_{k}^{\text{ext}}\rangle$, we assess
\[
d_{k,t} \Big(\mathcal A_{k}^{\text{ext}}\Psi \big(f(\vec\sigma)\big),\ 
f_{k}^{\mathcal M,\text{ext}} \big(\mathcal A_{k}^{\text{ext}}\Psi(\sigma_1),\dots,\mathcal A_{k}^{\text{ext}}\Psi(\sigma_n)\big)\Big)
\ \le\ \delta_{\text{comp}}^{k,t}.
\]
Novel attribute-noun or relational combinations (e.g., \emph{metallic balloon}, or spatial compositions unseen in training) elevate $\delta_{\text{comp}}^{k,t}$ and reduce systematicity $\beta^{k,t}$ unless prior exposure or structured inductive bias supports composition. Perceptual fusion (e.g., combining vision and haptics) may mitigate this by constraining the concept manifold, but it does not guarantee full compositionality \cite{radu2018multimodal}.

Referential grounding excels on concrete, manipulable categories, supports cross-modal validation (e.g., visual and haptic agreement), and delivers etiological warrant in spades by tying content to histories of successful action. It is, however, limited for abstractions (\emph{justice}, \emph{democracy}), theoretical entities, or fictional kinds (e.g., \emph{dragons}), which lack stable extensional anchors in $\mathcal M_{k}^{\text{ext}}$ and are better treated in inferential or social-normative modes \cite{kutuzovetal2018diachronic}. The very requirement that content be selected-for by worldly success also marks its boundary for non-observables and culturally variable categories.

\subsection{Vectorial grounding}\label{subsec:vecground}

\emph{Vectorial grounding} maps symbols\footnote{For $(k_{\text{vl}},\text{ext})$, co-embedding models (e.g., CLIP) align visual and textual vectors to shared structure \cite{Radford2021LearningTV}.} to points in high-dimensional spaces where geometric relations encode semantics \cite{mollo2023vector}. In $\mathfrak G$, the representation space is $\mathcal R=\mathbb R^n$; $\Phi:\Sigma \to \mathbb R^n$ embeds tokens, $\Gamma:\mathbb R^n \to \mathcal C$ decodes to conceptual features, and $\Psi=\Gamma\circ\Phi$. For $(k_{\text{ling}},\text{ext})$, $\mathcal M_{k}^{\text{ext}}$ comprises inferential/role-like targets (e.g., graded similarity, definitional features), and $\mathcal A_{k}^{\text{ext}}:\mathcal C \to \mathcal M_{k}^{\text{ext}}$ aligns concepts to those targets (often instantiated via evaluation protocols).

We evaluate at
\[
E_{\text{ling}}=(k_{\text{ling}},\ \text{ext},\ U_{\text{ling}},\ P_{\mathcal R}^{\text{ling}})\quad\text{and, where noted,}\quad
E_{\text{vl}}=(k_{\text{vl}},\ \text{ext},\ U_{\text{vl}},\ P_{\mathcal R}^{\text{vl}}).
\]
For robustness (G3) in the linguistic setting, $U_{\text{ling}}$ comprises text perturbations such as typos, character/word-level noise, paraphrase templates; in multimodal, $U_{\text{vl}}$ naturalistic image perturbations (crop, blur, color jitter) paired with benign text edits. We specify a reference distribution $P_{\mathcal R}$ over $\mathcal R$ in each case (e.g., in-distribution corpora or image-text pairs).

A signature property is approximate \emph{geometric alignment}:
\[
d_{\mathcal R} \big(\Phi(\sigma_1),\Phi(\sigma_2)\big)\ \approx\ d_{k,t} \big(\mathcal I_{k}^{t}(\sigma_1),\mathcal I_{k}^{t}(\sigma_2)\big),
\]
so relative positions in $\mathbb R^n$ predict graded semantic proximity in the declared meaning space.

\begin{definition}
	A mode is \textbf{vectorial} at $(k,t)\in\{(k_{\text{ling}},\text{ext}),(k_{\text{vl}},\text{ext})\}$ if:
	\begin{enumerate}
		\item $\mathcal R = \mathbb R^n$ with a norm metric $d_{\mathcal R}$, and $\Phi: \Sigma \rightarrow \mathbb{R}^n$ is learned under a process $\mathcal T$ relevant to $(k,t)$;
		\item there exist an evaluation alignment $\mathcal{A}_k^{t}$ and a tolerance $\varepsilon_{\text{geom}}^{k, t}$ such that, on a declared pair distribution,
		\[
		\big|\,d_{\mathcal{R}} \left(\Phi \left(\sigma_i\right), \Phi \left(\sigma_j\right)\right)\ -\ d_{k,t} \left(\mathcal{I}_k^t \left(\sigma_i\right), \mathcal{I}_k^t \left(\sigma_j\right)\right)\big|\ \leq\ \varepsilon_{\text{geom}}^{k, t};
		\]
		\item G0-strong holds for $(k_{\text{ling}},\text{ext})$ when $\Phi,\Gamma$ (and any appealed $\mathcal A_{k}^{\text{ext}}$) are acquired by self-supervised linguistic training under $\mathcal T$; for (ext) claims, paired/interactive signals tied to the world are required for G0-strong with respect to the evaluated tasks.
	\end{enumerate}
\end{definition}

For $(k_{\text{ling}},\text{ext})$, self-supervised training (e.g., distributional objectives, masked/next-token modeling) implies a G0-strong insofar as they are acquired internally under $\mathcal T$ and evaluated on linguistic inference targets. For $(k_{\text{vl}},\text{ext})$, world authenticity requires signals tied to the world (interaction, grounded supervision, paired modalities); otherwise, only G0-weak holds for claims. If $\mathcal A_{k}^{t}$ is supplied externally, any sort of extensional grounding claims must be labeled G0-weak unless itself is learned internally under $\mathcal T$.

For atomic items $\sigma\in\Sigma_{\text{atom}}$ at $(k_{\text{ling}},\text{ext})$,
\[
d_{k,t}\big(\mathcal A_{k}^{\text{ext}}\Psi(\sigma),\ \mathcal I_{k}^{\text{ext}}(\sigma)\big)\ \le\ \varepsilon_{\text{pres}}^{k,t},
\]
with $\varepsilon_{\text{pres}}^{k,t}$ estimated on a declared test set. Beyond atoms, vector geometry tracks human-like similarity and definitional features to a useful degree, for which we expect yielding small $\varepsilon_{\text{faith}}^{k,t}$; note that fine-tuning warps or sharpens these axes \cite{rajaee2021doesfinetuningaffectgeometry}. Selection history $\mathcal T$ for standard language models optimizes linguistic objectives (next-token likelihood, contrastive text-image alignment), supplying warrant particularly for linguistic correlational tasks; it does \emph{not}, by itself, establish embodied/task success in $(\text{ext})$ unless training includes interaction or grounded feedback.

G2b requires an internal learned mechanism $M$ (within $\Phi,\Gamma,$ and/or $\mathcal A_{k}^{t}$) whose contribution to the evaluated task family is causal and selected-for. We therefore report
\[
\text{ACE}_{E}(M)\ :=\ \mathbb{E} \left[\Pr \big(\text{succ}\mid \text{do}(M{=}\text{on})\big)-\Pr \big(\text{succ}\mid \text{do}(M{=}\text{off})\big)\right]\ \ge\ \eta^{k,t},
\]
with respect to a binary success predicate appropriate to $E$ (e.g., correct pairwise ranking or classification for $(k_{\text{ling}},\text{ext})$; correct retrieval/grounding for $(k_{\text{vl}},\text{ext})$). Classic analogy patterns (e.g., the infamous $\Phi(\text{king})-\Phi(\text{man})+\Phi(\text{woman}) \approx \Phi(\text{queen})$) are correlational evidence (G2a); they do \emph{not} establish G2b without intervention/ablation linking $M$ to improved success under $\mathcal T$.

Distributed representations tend to yield smooth local semantics \cite{dhole2025adversem,singh2024robustness}. Under $U_{\text{ling}}$ (typos/paraphrase) or benign $U_{\text{vl}}$ (small crops/blur), we situate a modulus $\omega_{U}^{k,t}$ with small slope near $0$, a graceful degradation \cite{pmlrv202catellier23a}. Note, however, that adversarial triggers or off-manifold edits induce large semantic shifts at tiny input scales \cite{periti2024lexical,dhole2025adversem}; global Lipschitz bounds should fail, hence we encourage a separate report, in which robustness reported locally (or distributionally), and adversarial regimes separately. Formally, for confidence level $1-\alpha$,
\[
\Pr_{r\sim P_{\mathcal R}}  \Big[\ \sup_{\varepsilon(u)\le\varepsilon}\ 
d_{k,t} \Big(\mathcal A_{k}^{t}\Gamma(r),\ \mathcal A_{k}^{t}\Gamma(u \cdot r)\Big)
\ \le\ \omega_{U}^{k,t}(\varepsilon)\ \Big]\ \ge\ 1-\alpha,
\]
where ``$u\cdot r$'' denotes applying the perturbation in $\mathcal R$.

Compositional behavior seems to depend on architecture. Static embeddings \cite{mikolov2013efficient,penningtonetal2014glove} support limited additive composition. Contextual models combine token contributions via attention, yielding representations that integrate multiple properties and sometimes approximate compositional semantics \cite{xu2024largelanguagemodelscompositional,dziri2023faithfatelimitstransformers,yin2020sentibert,fodor2025compositionality,swarup2025syntax}. We therefore assess, for typed constructors $f$ and context $k$,
\[
d_{k,t} \Big(\mathcal A_{k}^{t}\Psi\big(f(\vec\sigma)\big),\ 
f_{k}^{\mathcal M,t}\big(\mathcal A_{k}^{t}\Psi(\sigma_1),\dots,\mathcal A_{k}^{t}\Psi(\sigma_n)\big)\Big)
\ \le\ \delta_{\text{comp}}^{k,t},
\]
and report systematicity $\beta^{k,t}$ on held-out compositional splits (e.g., unseen attribute-noun pairings, relational templates). LLMs, then, seem to achieve workable, albeit imperfect, $\delta_{\text{comp}}^{k,t}$ and non-trivial $\beta^{k,t}$; failures concentrate on out-of-distribution combinations and deep nesting.

Vectorial grounding's glory is G2a (correlational fit) and local G3 (smoothness), which should not be surprising, given that the very DNA of vectorial models is just that: vectors with geometric behavior, and supports similarity judgments (\emph{cat} $\approx$ \emph{tiger} $\neq$ \emph{truck}), and enables few-shot generalization via geometric regularities. Its costs include opacity (hard-to-interpret $\Gamma$), compositional fragility (attention mechanisms approximate but do not guarantee algebraic composition \cite{fodor2025compositionality}), and limited denotational capacity: vectors encode statistical correlates of referents but, without world-anchored learning, do not \emph{denote} particular entities or secure G2b for embodied tasks. Multimodal contrastive training (e.g., CLIP) partially closes this by blending vectorial and referential signals \cite{Radford2021LearningTV}, improving G2a in $(\text{ext})$ settings and, with appropriate supervision, moving toward (but not guaranteeing) G2b.

\subsection{Relational grounding}\label{subsec:intground}

\emph{Relational grounding} establishes meaning through conceptual roles and inference patterns. Unlike vectorial grounding, which uses geometry in $\mathbb R^n$, relational grounding uses discrete structure (types, roles, subsumptions, constraints), and aims at Fregean sense, the mode of presentation, supporting meaning for abstraction and fictional entities that lack extensional denotation (e.g., Frege's ``least rapidly convergent series'', or \emph{dragon}).

We evaluate at
\[
E=(k,t,U,P)=(k_{\text{logic}},\ \text{ext},\ U_{\text{rel}},\ P_{\mathcal R}),
\]
where representations are discrete, graph-, or rule-structured: $\mathcal R$ is a set of nodes/edges/axioms with a graph/edit metric $d_{\mathcal R}$. For robustness (G3) we declare $U_{\text{rel}}$ consisting of local symbolic edits (edge swaps, axiom insertion/deletion, property retyping) with some scale $\varepsilon(u)$ given by the number/type of edits, and specify a reference distribution $P_{\mathcal R}$ over $\mathcal R$ (e.g., in-distribution subgraphs/ontologies).

We instantiate: $\Sigma$ as symbols (e.g., \emph{bachelor}, \emph{dragon}); $\mathcal R$ as concept nodes/edges and axioms; $\mathcal C$ as a role space; and $\mathcal M_{k}^{\text{ext}}$ as inferential/intensional meanings. The encoder $\Phi:\Sigma \to \mathcal R$ maps symbols to conceptual nodes/schemata; $\Gamma:\mathcal R \to \mathcal C$ computes rule closure (entailments/constraints); thus $\Psi=\Gamma\circ\Phi$ is realized conceptual content. For evaluation, we set $\mathcal A_{k}^{\text{ext}}:\mathcal C \to \mathcal M_{k}^{\text{ext}}$ to an identification of role content with meanings (typed if needed). Meaning emerges from inferential patterns: for $r=\Phi(\text{``bachelor''})$, $\Gamma(r)$ yields $\{\textsc{human},\ \textsc{male},\ \neg\textsc{married}\}$; for taxonomy links $\textit{cat}\sqsubseteq\textit{animal}$, composition is rule-based.

\begin{definition}
	A mode is \textbf{relational} at $(k,t)=(k_{\text{logic}},\text{ext})$ if:
	\begin{enumerate}
		\item $\mathcal R$ is a typed relational/graphical structure (e.g., a first-order structure or description-logic KB) with edit/graph metric $d_{\mathcal R}$;
		\item $\Gamma: \mathcal{R} \rightarrow \mathcal{C}$ computes intensional role sets via closure, and $\mathcal{A}_k^{\text{ext}}$ identifies role content with $\mathcal{M}_k^{\text{ext}}$;
		\item there exists an internal mechanism $M$ (within $\Phi,\Gamma,$ and/or $\mathcal A_{k}^{\text{ext}}$) acquired under a process $\mathcal T$ that selects for inferential success on the declared task family, such that the average causal effect
		\[
		\text{ACE}_{E}(M)\ :=\ \mathbb{E} \left[\Pr \big(\text{succ}\mid \text{do}(M{=}\text{on})\big)-\Pr \big(\text{succ}\mid \text{do}(M{=}\text{off})\big)\right]\ \ge\ \eta^{k,t},
		\]
		where $\text{succ}$ is a logical/inferential success predicate (e.g., correct entailment or QA over the ontology). No claim is made about $(\text{ext})$ tasks.
	\end{enumerate}
\end{definition}

G0-weak holds for hand-engineered ontologies (internal mechanisms without acquisition). G0-strong holds when $\Phi,\Gamma$ (and any appealed $\mathcal A_{k}^{\text{ext}}$) are acquired by an internal process $\mathcal T$ (e.g., relation extraction/induction from text or data). In practice, curated KGs meet only G0-weak; we would expect that a learned neuro-symbolic (see \cite{shengyuan2023differentiable,bosselut2021dynamic}) meets G0-strong.

When the intended interpretation $\mathcal I_{k}^{\text{ext}}$ is specified in the same role language, tolerances can be tight. For atoms $\sigma\in\Sigma_{\text{atom}}\cap\text{dom}(\mathcal I_{k}^{\text{ext}})$,
\[
d_{k,t} \Big(\mathcal A_{k}^{\text{ext}}\Psi(\sigma),\ \mathcal I_{k}^{\text{ext}}(\sigma)\Big)\ \le\ \varepsilon_{\text{pres}}^{k,t}.
\]
For defined composites,
\[
d_{k,t} \Big(\mathcal A_{k}^{\text{ext}}\Psi(\sigma),\ \mathcal I_{k}^{\text{ext}}(\sigma)\Big)\ \le\ \varepsilon_{\text{faith}}^{k,t}.
\]
Intuitively, consider: if the ontology is complete for the tested fragment, then the agent's computed role content should coincide with the stipulated roles (up to the declared tolerance); gaps or inconsistencies in the relation set show up as preservation or faithfulness errors.

For inferential tasks (see \cite{bansal2019a2n,wei2022causal}), G2b holds only when rules/relations were selected for under $\mathcal T$, because they improve success (coverage/precision) on the task family, and when a learned internal $M$ exhibits $\text{ACE}_{E}(M)\ge\eta^{k,t}$. Hand-crafted ontologies rarely document such selection histories \cite{LI2022108469,chen2024explainable}; learned extractors can satisfy G2b for \emph{inferential} tasks. By design, no claim is made about embodied/world success in $\mathcal M^{\text{ext}}$.

With uniformly discrete $\mathcal R$, uniform continuity is automatic (Corollary~\ref{cor:unif-discrete}), but the global modulus can be degenerate. Formally, for confidence level $1-\alpha$,
\[
\Pr_{r\sim P_{\mathcal R}}  \left[\ \sup_{u\in U_{\text{rel}}:\ \varepsilon(u)\le\varepsilon}\ 
d_{k,t}  \Big(\mathcal A_{k}^{\text{ext}}\Gamma(r),\ \mathcal A_{k}^{\text{ext}}\Gamma(u \cdot r)\Big)
\ \le\ \omega_{U}^{k,t}(\varepsilon)\ \right]\ \ge\ 1-\alpha,
\]
and ``$u\cdot r$'' applying a symbolic edit $u$ to $r$. Dense, type-constrained neighborhoods imply small local moduli (that is, graceful degradation); sparse or brittle regions imply large jumps where tiny edits cause major semantic shifts (e.g., removing a single subsumption edge breaks multiple entailments \cite{silva2019exploring}). Robustness should therefore be reported locally and with explicit $U_{\text{rel}}$.

Relational systems excel at typed composition. Let the grammar provide constructors (e.g., typed conjunction, role restriction) and $\langle \mathcal M_{k}^{\text{ext}},\mathcal F_{k}^{\text{ext}}\rangle$ interpret them. Then
\[
\delta_{\text{comp}}^{k,t}=0:\quad
\mathcal A_{k}^{\text{ext}}\Psi\big(f(\vec\sigma)\big)
= f_{k}^{\mathcal M,\text{ext}} \big(\mathcal A_{k}^{\text{ext}}\Psi(\sigma_1),\dots,\mathcal A_{k}^{\text{ext}}\Psi(\sigma_n)\big),
\]
and systematicity $\beta^{k,t}\approx 1$ holds for held-out in-grammar combinations when rules apply uniformly. For example,
\[
\Gamma \circ \Phi(\text{``red dragon''})\ =\ \textsc{color}:\textsc{red}\ \sqcap\ \textsc{type}:\textsc{dragon}.
\]
Failures follow from incompleteness or type violations rather than from the algebra itself.

Relational modes have liabilities: dependence on completeness/consistency of the relation set (missing axioms break chains); brittleness under small symbolic edits (G3 outside dense neighborhoods); lack of extensional denotation, so G2b does not target embodied/world success. Compared to vectorial grounding, relational is explicit but less tolerant of sparsity; it trades causal world contact for inferential clarity. This is a non-negligible motivation for neurosymbolic systems, which blend relational structure with learned components (e.g., learned extractors for G0-strong and G2b on inferential tasks). Vector-referential models such as CLIP \cite{Radford2021LearningTV} lie outside purely relational methods, but illustrate how adding signals can improve G1--G3 in $(\text{ext})$ evaluations, complementing the intensional strengths of relational grounding.

\section{How is model-theoretic semantics grounded?}\label{sec:modground}

We evaluate \emph{model-theoretic semantics} (MTS) (of \cite{zimmermann2019model,garson2013logics}) at $(k,t)=(k_{\text {logic}},\text{inf})$. The representation space is uniformly discrete: $\mathcal R=\Sigma$ (syntactic items, types, well-formed formulae) with an edit/structural metric $d_{\mathcal R}$ (e.g., token-edit distance; minimum nonzero distance $=1$). For robustness (G3), a threat model we are interested in $U_{\text {edit}}$ (single-edit typos, token substitutions, minor parse edits) and a reference distribution $P_{\mathcal R}$ over well-formed $\Sigma$. The evaluation tuple is
\[
E=(k_{\text{logic}},\ \text{inf},\ U_{\text{edit}},\ P_{\mathcal R}).
\]

MTS fixes truth conditions by postulating an analyst-given interpretation $\mathcal I_{k}^{\text{inf}}:\Sigma\rightharpoonup\mathcal M_{k}^{\text{inf}}$ that maps syntactic items to denotations in a mathematical structure: if $\Sigma=\{\textit{cat}\}$, a model contains an element $c$ with $\mathcal I_{k}^{\text{inf}}(\textit{cat})=c$. This is exactly ideal for logical inference, but it does not explain \emph{why} that assignment should bind an agent to the world, and, hence, it is ``a theory of \emph{inference}, not of \emph{meaning}'' \cite{peregrin1997language}.

Formally, when the internal pipeline is trivial ($\Phi=\text{id}_\Sigma$, $\Gamma=\text{id}_\Sigma$, hence $\Psi=\Gamma\circ\Phi=\text{id}_\Sigma$ with $\mathcal C=\Sigma$) and $\mathcal A_{k}^{\text{inf}}$ identifies $\mathcal C$ with $\mathcal M_{k}^{\text{inf}}$, we have, for atoms $\sigma\in\Sigma_{\text{atom}}\cap \text{dom}(\mathcal I_{k}^{\text{inf}})$,
\[
d_{k,t}\big(\mathcal A_{k}^{\text{inf}}\Psi(\sigma),\ \mathcal I_{k}^{\text{inf}}(\sigma)\big)=0,
\]
so G1 holds with $\varepsilon_{\text{pres}}^{k,t}=0$ by construction. The same identity yields trivial correlational faithfulness (G2a) wherever $\mathcal I_{k}^{\text{inf}}$ is total on the tested fragment; under a typed grammar and semantic algebra $\langle \mathcal M_{k}^{\text{inf}},\mathcal F_{k}^{\text{inf}}\rangle$, the homomorphic extension $\mathcal I_{k}^{\text{inf},\uparrow}$ matches $\mathcal A_{k}^{\text{inf}}\Psi$ on all well-typed composites.

Because the key mappings are stipulated by that external analyst, MTS satisfies \emph{only} G0-weak (mechanisms are implemented inside the calculus) and \emph{not} G0-strong (no agent-internal acquisition under $\mathcal T$): nothing in the agent’s causal fabric earns the mapping. The etiological clause G2b requires a learned internal mechanism $M$ (within $\Phi,\Gamma,$ and/or $\mathcal A_{k}^{t}$) with positive average causal effect
\[
\text{ACE}_{E}(M)\ :=\ \mathbb{E} \left[\Pr \big(\text{succ}\mid \text{do}(M{=}\text{on})\big)-\Pr \big(\text{succ}\mid \text{do}(M{=}\text{off})\big)\right]\ \ge\ \eta^{k,t}.
\]
MTS does not provide such selection-for-histories for worldly tasks, and thus fails G2b.

With uniformly discrete $(\mathcal R,d_{\mathcal R})$, $S_{k,t}=\mathcal A_{k}^{t}\circ\Gamma$ is uniformly continuous (Corollary~\ref{cor:unif-discrete}), but the global modulus is \emph{degenerate}: $\omega(\varepsilon)=0$ for $\varepsilon<1$ and $\omega(\varepsilon)\le \text{diam} \big(S_{k,t}(\mathcal R)\big)$ for $\varepsilon\ge 1$. Under $U_{\text {edit}}$, $\widehat\omega_{U_{\text {edit}}}^{k,t}$ therefore jumps at the first nonzero radius: a single-edit perturbation induces an arbitrary semantic change (ill-formedness, different symbol), so there is no useful robustness radius. The map is continuous only in the formal sense (Theorem~\ref{thm:modulus}), but not graceful.

Perhaps unsurprisingly, given a typed grammar and semantic algebra $\langle \mathcal M_{k}^{\text{inf}},\mathcal F_{k}^{\text{inf}}\rangle$, MTS satisfies exact homomorphic compositionality (Theorem~\ref{thm:hom}) as Frege and, later, Montague, intended: for all well-typed constructors $f$,
\[
\mathcal A_{k}^{\text{inf}}\Psi\big(f(\vec\sigma)\big)
= f_{k}^{\mathcal M,\text{inf}} \big(\mathcal A_{k}^{\text{inf}}\Psi(\sigma_1),\dots,\mathcal A_{k}^{\text{inf}}\Psi(\sigma_n)\big),
\]
with $\delta_{\text {comp}}^{k,t}=0$ and $\beta^{k,t}=1$ on the grammar’s closure. Compositionality preserves the stipulated atoms; it does not, by itself, however, ground them. This is framed by \cite{gubelmann2024pragmatic} as a ``\emph{second-order} symbol-grounding problem'': even if a logician ties \emph{cat} to a model element $c$, what ties $c$ to \emph{actual} cats? MTS thus remains a descriptive calculus for analysts.

\section{How are large language models grounded?}\label{sec:llmground}

LLMs are not monolithic. Encoder-based \emph{embedding models} (e.g., sentence transformers) are trained with contrastive objectives and produce pooled, normalized sentence vectors; \emph{multimodal dual encoders} (e.g., CLIP) co-embed text and images; \emph{decoder-only} models (e.g., OPT, GPT) are autoregressive and optimize next-token prediction rather than pooled sentence embeddings; \emph{encoder-only} models (e.g., BERT/RoBERTa) use masked-LM pretraining without contrastive objectives. Deliberately misquoting Saussure: in (large) language (models), there is only difference.

The debate over LLM grounding is active \cite{kenthapadi2024grounding,harnad2024language,pavlick2023symbols,jokinen2024need}. \cite{benderkoller2020climbing} argue that text-only training precludes ``real understanding'', while others claim pragmatic grounding can suffice for linguistic competence \cite{gubelmann2024pragmatic}. Within our framework, LLMs realize \emph{vectorial} grounding for $(k_{\text{ling}},\text{inf})$, and multimodal variants add partial \emph{extensional} signals.

Unless noted, we evaluate at
\[
E_{\text{ling}}=(k_{\text{ling}},\ \text{inf},\ U_{\text{ling}}\cup U_{\text{prompt}},\ P_{\text{text}})
\]
with linguistic meaning and text perturbations. $U_{\text{ling}}$ comprises typos, token/word-level noise, and paraphrase templates; $U_{\text{prompt}}$ comprises prompt perturbations and jailbreak-style adversarials. For multimodal LLMs, we are additionally armed
\[
E_{\text{vl}}=(k_{\text{vl}},\ \text{ext},\ U_{\text{vl}},\ P_{\text{img-text}})
\]
with benign image transforms (crop/blur/jitter) paired with minor text edits.

$\Phi$ (embedding/contextualization) and $\Gamma$ (decoding to conceptual features) are implemented in network weights; mechanisms are internal, so G0-strong holds for linguistic mechanisms under a self-supervised training $\mathcal T$. For extensional claims, G0-strong requires world-anchored signals (interaction, grounded labels, paired modalities); otherwise, only G0-weak holds for $(\text{ext})$ claims.

Many tokens map to paraphrase-stable meanings on average\footnote{In this case, we therefore report quantiles rather than maxima to account for hallucination/long-tail drift.}. At $(k,t)=(k_{\text{ling}},\text{inf})$ and a test distribution $Q$,
\[
\Pr_{\sigma\sim Q} \Big[
d_{k,t}\big(\mathcal A_{k}^{t}\Psi(\sigma),\ \mathcal I_{k}^{t}(\sigma)\big)\ \le\ \varepsilon_{\text{pres}}^{k,t}
\Big]\ \ge\ 1-\eta.
\]

Embedding geometry broadly tracks human similarity judgments \cite{almeida2019word,selva2021review}:
\[
d_{\mathcal R} \big(\Phi(\textit{cat}),\Phi(\textit{tiger})\big)\ \approx\ d_{k,t} \big(\mathcal I_{k}^{t}(\textit{cat}),\mathcal I_{k}^{t}(\textit{tiger})\big),
\]
supporting small \(\varepsilon_{\text{faith}}^{k,t}\) on similarity/definition protocols when in-distribution.

Selection history $\mathcal T$ optimizes linguistic objectives (next-token likelihood, masked modeling, contrastive alignment), so we extrapolate: for linguistic proxy success predicates (e.g., dev loss, entailment/analogy accuracy), G2b holds, provided an internal mechanism $M$ exhibits positive causal effect:
\[
\text{ACE}_{E}(M)\ :=\ \mathbb{E} \left[\Pr \big(\text{succ}\mid \text{do}(M{=}\text{on})\big)-\Pr \big(\text{succ}\mid \text{do}(M{=}\text{off})\big)\right]\ \ge\ \eta^{k,t}.
\]
RLHF/instruction-tuning aim to reduce \(\varepsilon_{\text{faith}}^{k,t}\) on dialogue-style tasks; links to truth conditions or embodied control remain indirect unless interaction/grounded feedback is part of $\mathcal T$. For $(k_{\text{vl}},\text{ext})$, multimodal contrastive training supplies partial world-anchored signals (improving G2a) but does not guarantee G2b without interaction.

Transformer representations exhibit local smoothness under token noise/orthographic variants \cite{wang2024resilience,moradi2021evaluating}. Adversarial prompts or off-manifold edits produce large semantic shifts at tiny input scales; hence, no global Lipschitz bound should be assumed (Theorem~\ref{thm:modulus}). The key consequence of this: robustness should be reported \emph{locally}, per threat. 

LLMs achieve targeted compositional productivity \cite{xu2024largelanguagemodelscompositional}, yet systematic generalization lags symbolic baselines on out-of-scope constructions \cite{russin2024fregechatgptcompositionalitylanguage}. We therefore measure, for typed constructors $f$ and context $k$,
\[
d_{k,t}\Big(\mathcal A_{k}^{t}\Psi\big(f(\vec\sigma)\big),\
f_{k}^{\mathcal M,t}\big(\mathcal A_{k}^{t}\Psi(\sigma_1),\dots,\mathcal A_{k}^{t}\Psi(\sigma_n)\big)\Big)\ \le\ \delta_{\text{comp}}^{k,t},
\]
and report $\beta^{k,t}$ on strict, held-out compositional splits. Compositional behavior is architecture-dependent (contextual attention versus static embeddings) and task-sensitive; ceiling effects in our current fragment (several models at $\beta\approx 1.0$) motivate stricter splits.

Succinctly: direct referential grounding is partial at best for text-only models; it emerges indirectly via text-world regularities or via added modalities (e.g., CLIP-style co-embedding \cite{Radford2021LearningTV}). Within our graded framework, LLMs occupy a kind of middle region: stronger than purely symbolic systems (G1/G2a/G3 local), but weaker than fully embodied agents on G2b for world tasks. They are improvable in the following: (i) adding interaction/embodiment to supply etiological warrant for $\mathcal M^{\text{ext}}$; (ii) strengthening typed compositional generalization via explicit inductive biases and evaluation on strict splits (a dream of neurosymbolic models). In this way, we reconcile critiques that otherwise emphasize the absence of direct denotation \cite{benderkoller2020climbing,harnad2024language} with views stressing pragmatic grounding for linguistic competence \cite{gubelmann2024pragmatic}, and provision metrics for measuring progress.

\section{How is natural language grounded?}\label{sec:nlground}

Consider a cat. What one person ``knows'' is a cat is not necessarily the same (by whichever metric we might check) as what another person ``knows'' as a cat; and yet, each interlocutor can have meaningful understanding about cats, whatever that may be. Grounding is influenced by anthropocentrism, and human language is the benchmark for grounding, though the \emph{locus} of meaning remains contested. Correspondence-leaning views emphasize mapping words onto extra-linguistic reality \cite{benderkoller2020climbing}; pragmatic/inferentialist views stress that meaning arises from use within social practice \cite{brandom1994making,brandom2010between}. Our framework accommodates by typing meanings as extensional, inferential, or social-normative, and by indexing all claims to context $k$. We do not present measurements for human language grounding here; instead, we apply our grounding principles \emph{in general} to predict behavior.

We frame human language in three settings:
\[
\begin{aligned}
	E_{\text{ext}}&=(k_{\text{human}},\ \text{ext},\ U_{\text{ext}}\cup U_{\text{ling}},\ P_{\text{human}}),\\
	E_{\text{inf}}&=(k_{\text{human}},\ \text{inf},\ U_{\text{ling}},\ P_{\text{human}}),\\
	E_{\text{soc}}&=(k_{\text{human}},\ \text{soc},\ U_{\text{soc}}\cup U_{\text{ling}},\ P_{\text{human}}),
\end{aligned}
\]
where $U_{\text{ling}}$ includes accent variation, channel noise, and typographical slips; $U_{\text{ext}}$ includes lighting/occlusion/viewpoint that affect perception and reference; and $U_{\text{soc}}$ includes pragmatic mismatches (e.g., irony cues or register shifts). $P_{\text{human}}$ denotes ordinary communicative situations (speaker, time, place, discourse state).

The causal machinery linking speech and text to perception, action, memory, and social practice is, not least of all, internally realized (neural/motor pathways) and acquired through evolutionary and developmental processes. Hence, G0-strong holds: $\Phi,\Gamma$ and the alignments $\mathcal A_{k}^{t}$ are implemented inside the agent and acquired under $\mathcal T$ (learning histories).

With context supplied, atomic expressions preserve intended meanings, within tolerances. For any $(k,t)\in\{(k_{\text{human}},\text{ext}),\ (k_{\text{human}},\text{inf}),\ (k_{\text{human}},\text{soc})\}$ and $\sigma\in\Sigma_{\text{atom}}\cap\text{dom}(\mathcal I_{k}^{t})$,
\[
d_{k,t}\big(\mathcal A_{k}^{t}\Psi(\sigma),\ \mathcal I_{k}^{t}(\sigma)\big)\ \le\ \varepsilon_{\text{pres}}^{k,t}.
\]
Polysemy and ambiguity enlarge these tolerances when $k$ under‑specifies the intended sense; discourse typically restores small errors via disambiguation.

For composed expressions (phrases, clauses) with $\sigma\in\text{dom}(\mathcal I_{k}^{t})$,
\[
d_{k,t}\big(\mathcal A_{k}^{t}\Psi(\sigma),\ \mathcal I_{k}^{t}(\sigma)\big)\ \le\ \varepsilon_{\text{faith}}^{k,t},
\]
reflecting that realized meanings track intended meanings under the given context and type.

Content is \emph{selected for} through developmental learning and social feedback. Children who mis‑map categories (e.g., calling cats ``dog'') experience correction and environmental counter-evidence; in adulthood, social norms regulate usage (e.g., sanctioning misuse of ``liable'' versus ``likely'') \cite{t2010references,barbu2013language}. Formally, for outcomes $\mathcal O$ appropriate to $(k,t)$ (e.g., correct reference, successful joint action, norm‑appropriate contribution) and success $\text{succ}$, there exists a learned internal mechanism $M$ (within $\Phi,\Gamma,$ and/or $\mathcal A_{k}^{t}$) such that
\[
\text{ACE}_{E}(M)\ :=\ \mathbb E \left[\Pr \big(\text{succ}\mid \text{do}(M{=}\text{on})\big)-\Pr \big(\text{succ}\mid \text{do}(M{=}\text{off})\big)\right]\ \ge\ \eta^{k,t},
\]
with the expectation taken over $P_{\text{human}}$ (and, if relevant, over developmental processes $\mathcal T$). In ordinary settings, we predict positive causal contributions for mechanisms that support reference, inference, and pragmatic coordination.

On benign perturbations, human communication degrades gracefully. For confidence level $1-\alpha$ and each declared $U$,
\[
\Pr_{r\sim P_{\text{human}}} \left[\ \sup_{u\in U:\ \varepsilon(u)\le\varepsilon}\ 
d_{k,t} \big(\mathcal A_{k}^{t}\Gamma(r),\ \mathcal A_{k}^{t}\Gamma(u \cdot  r)\big)\ \le\ \omega_{U}^{k,t}(\varepsilon)\ \right]\ \ge\ 1-\alpha,
\]
with small slope near $0$ (Theorem~\ref{thm:modulus}): stability of meaning under small changes. Catastrophic slips (e.g., ``desert'' versus ``dessert'') and perceptual illusions show that no global bound holds; robustness is local to high-probability neighborhoods of $P_{\text{human}}$ and depends on context $k$ and practice. Multimodal redundancy and pragmatic repair further sustain robustness \cite{winter2014spoken,winter2012robustness}.

Speakers routinely interpret novel compounds (``quantum fridge'', ``purple Tuesday''), nested clauses, and productive morphology, yielding low $\delta_{\text{comp}}^{k,t}$ and high systematicity $\beta^{k,t}$. Departures (idioms, coercion, adjective‑ordering preferences) signal specialized constructions rather than wholesale failure of composition; typed and context-indexed constructors capture much of this productivity within our measurement scheme.

Items like ``justice'', ``quark'', or ``Sherlock Holmes'' lack direct perceptual anchors; their grounding is primarily inferential or social-normative, not extensional. Cultural and lexical variation imply that $d_{k,t}$ \emph{cannot} be universal; for instance, what counts as a ``river'' in English may split into distinct lexicalizations in Y\'el\^i Dnye \cite{levinson2000yeli,perkins2024colors}. Such facts enlarge tolerances in the relevant $(k,t)$, but do not break the grounding structure.

Across $(k_{\text{human}},\text{ext/inf/soc})$ and everyday threat models, human linguistic agents communicate, and tend to know what they are talking about. In this way, agents achieve tight tolerances on G1--G4 under G0‑strong: authenticity is biological and learned; faithfulness is shaped by developmental and social selection; robustness is sustained by multimodal redundancy and pragmatic repair; compositionality is built into grammar and usage. Theories differ on \emph{where} semantic links reside (world, practice, or both), but converge on the empirical point that, for most daily communication, $\mathcal A_{k}^{t}\Psi(\sigma)\approx \mathcal I_{k}^{t}(\sigma)$ within small, context-indexed tolerances. Natural language is, therefore, the paradigmatic grounded system, and remains that against which artificial architectures are compared, be that a fair comparison or otherwise.

\section{Discussion, limitations, and future work}\label{sec:discussion}

Our aim here for this framework is to admit evaluation on both conceptual fidelity and empirical utility. Our work here is intentionally idealized, and we invite further work and cooperation in experimental application.

\subsection{Drawing hands}\label{sec:drawinghands}

Our symbolic starting point may itself be questioned by embodied and enactive approaches to cognition. From an enactivist perspective, meaning \emph{emerges} from organism-environment interactions, and only secondarily do we see symbols \cite{vare1991,dipaolo2018}. An important limitation follows: our modes remain formal precisely because they treat symbols as the basic currency; we are presuming, then, a vantage of symbols at all.

Where we ask ``how do symbols $\Sigma$ ground in meanings $\mathcal{M}$?'', enactivism asks the prior question: ``how do stable symbolic distinctions emerge from fluid sensorimotor dynamics?'' We begin where theirs ends: already-crystallized symbols awaiting semantic attachment. In this sense, we may be analyzing the \emph{result} of grounding rather than grounding itself; a crystallographer studies ice while agnostic about the phase transition from water.

This clarifies our contribution's scope. Most deployed AI systems, from theorem provers to large language models, \emph{do} operate on discrete symbols or tokens, regardless of their ultimate metaphysical status \cite{chang2014symbolic,minitron2024,NAWAZ2025200541,CARTUYVELS2021143}; the question of how symbols relate to meanings remains pressing, even if symbols are themselves emergent. We provide measurable criteria for this relationship without requiring commitment to symbols as ontologically primitive. The etiological warrant (G2b) even gestures toward enactivist concerns by requiring that semantic mappings be earned through interaction, not stipulated.

A complete theory should (eventually) ground our framework itself: treating symbol emergence as a grounding problem where continuous dynamics (a new $\mathcal{R}$) ground discrete symbolic structures (a new $\mathcal{M}$), with tolerances capturing how much dynamics can vary while maintaining symbolic stability. Such an extension would nest our account within a broader enactive framework, rather than opposing it. Until then, we operate in the space between: symbols exist, however they arose, and their semantic coordination remains both problematic and measurable.

Recall \textit{Drawing Hands} \cite{escher1948title}. Our work here is Escher-like insofar as we use symbols to define symbols, a metalanguage to talk about an object language, proofs about mappings between the two. Left untended, we devolve into Harnad's ``symbol merry-go-round'', in which definitions chase definitions. Our way out is to keep the levels straight and require at least one downward arrow that is not itself symbolic; in our terms, G0 (authenticity) and G2b (selection-for) insist that some parts of the pipeline, be they perception, action, social uptake, live inside the agent and were shaped by success in a task family. That breaks the cycle: the meta‑symbols in this paper describe structures, but the meanings we evaluate are anchored by \(\mathcal A_{k}^{t}\) into non-symbolic contact (world, inference practice, norms). The mathematics then measures continuity, preservation, and composition of those mappings without needing to ``borrow'' semantics from the analyst at every step.

We move, therefore, from drawing hands to two hands connected to a body: the descriptive hand (our metalanguage) specifies profiles, moduli, and homomorphisms; the operative hand (the agent) grasps, predicates, repairs. When the only arrows are symbol $\to$ symbol, then regress threatens; when some arrows run symbol $\to$ representation $\to$ concept $\to$ world/practice and are earned by process, the loop closes pragmatically rather than paradoxically. We label what is stipulated (G0-weak), what is learned (G0-strong), and then audit preservation, faithfulness, robustness, and compositionality, and so spells out this break. Our study of symbols with symbols is a controlled vantage, and useful, so long as we keep one foot on the ground.

\subsection{A typology of grounding profiles}\label{subsec:typology}

The desiderata are independent, and systems can satisfy some while failing others; this independence generates a space of possible profiles which correspond to recognizable behavioral patterns. We sketch a typology here, both to illustrate the framework's diagnostic power and to provide vocabulary for characterizing systems encountered in practice.

The interaction between correlational faithfulness (G2a) and compositionality (G4) yields a basic fourfold division is given in Table~\ref{tab:g2ag4}.

\begin{table}[H]
	\centering
	\caption{Fourfold typology for correlational faithfulness and compositionality.}
	\label{tab:g2ag4}
	\begin{tabular}{l|cc}
		& \textbf{High G4} & \textbf{Low G4} \\
		\hline
		\textbf{High G2a} & \emph{grounded} & \emph{memorizer} \\
		\textbf{Low G2a} & \emph{miscalibrated} & \emph{lost} \\
	\end{tabular}
\end{table}

A system with high G2a and high G4 is \emph{grounded} in the intuitive sense: it gets the right answers, and does so by combining meanings systematically. A system with high G2a but low G4 is a \emph{memorizer}: it produces correct outputs for items it has seen, but its success depends on lookup rather than productive combination; such systems fail on novel compositions even when their constituents are familiar. Conversely, a system with low G2a but high G4 is \emph{miscalibrated}: it applies the right algebra to wrong atoms, yielding systematic but incorrect results. Finally, a system low on both is simply \emph{lost}: neither accurate nor systematic, it lacks the structure to ground meaning in any useful sense.

The distinction between correlational (G2a) and etiological (G2b) faithfulness separates systems that happen to be right from those that are right for the right reasons is given in Table~\ref{tab:g2ag2b}.

\begin{table}[H]
	\centering
	\caption{Fourfold typology for correlational faithfulness and etiological faithfulness.}
	\label{tab:g2ag2b}
	\begin{tabular}{l|cc}
		& \textbf{High G2b} & \textbf{Low G2b} \\
		\hline
		\textbf{High G2a} & \emph{competent} & \emph{lucky} \\
		\textbf{Low G2a} & \emph{effortful failure} & \emph{random} \\
	\end{tabular}
\end{table}

A system satisfying both is \emph{competent}: its accuracy is underwritten by internal mechanisms that were selected for success. A system with high G2a but low G2b is \emph{lucky} or otherwise \emph{shortcutting} byexploiting spurious correlations or surface heuristics that happen to align with correct answers on the evaluation distribution, but lack causal warrant; such systems are vulnerable to distribution shift precisely because their success was not earned. A system with high G2b but low G2a represents an \emph{effortful failure}, in that it has learned mechanisms that demonstrably contribute to its behavior, but those mechanisms have latched onto the wrong targets; rescuing and debugging such a system requires retraining rather than recalibration. Low scores on both indicate a system operating essentially at \emph{random} with respect to the task.

Robustness (G3) and compositionality (G4) distinguish architectural styles, given in Table~\ref{tab:g3g4}.

\begin{table}[H]
	\centering
	\caption{Fourfold typology for robustness and compositionality.}
	\label{tab:g3g4}
	\begin{tabular}{l|cc}
		& \textbf{High G4} & \textbf{Low G4} \\
		\hline
		\textbf{High G3} & \emph{smooth generalist} & \emph{robust lookup} \\
		\textbf{Low G3} & \emph{brittle algebraist} & \emph{fragile memorizer} \\
	\end{tabular}
\end{table}

Systems high on both are \emph{smooth generalists}, and combine meanings systematically and degrade gracefully under perturbation; this is the neural ideal for many applications. Systems with high G3 but low G4 are \emph{robust lookup tables} that are stable under noise, but are otherwise inflexible, unable to extrapolate beyond stored patterns. Systems with high G4 but low G3 are \emph{brittle algebraists}: they exhibit the perfect compositional behavior characteristic of symbolic systems (Theorem~\ref{thm:hom}), but the degenerate robustness modulus of discrete representations (Corollary~\ref{cor:unif-discrete}) makes them fragile under even minor perturbation, in which a single typo can be fatal or otherwise catastrophic, and induce arbitrary semantic change. Systems low on both are \emph{fragile memorizers}, which, in the worst case, combine all the undesirable attibutes of inflexibility with instability.

The interaction between authenticity (G0) and accuracy (G2a) separates genuine understanding from its simulacra, given in Table~\ref{tab:g0g2a}.

\begin{table}[H]
	\centering
	\caption{Fourfold typology for robustness and compositionality}
	\label{tab:g0g2a}
	\begin{tabular}{l|cc}
		& \textbf{High G2a} & \textbf{Low G2a} \\
		\hline
		\textbf{G0-strong} & \emph{genuine} & \emph{authentic failure} \\
		\textbf{G0-weak} & \emph{cargo cult} & \emph{broken puppet} \\
	\end{tabular}
\end{table}

A system satisfying G0-strong with high G2a exhibits what we might hazard to call (with appropriate delicacy) \emph{genuine understanding}; a \emph{genuine} system gets things right via mechanisms it has itself acquired through learning or evolution. A system with high G2a but only G0-weak is a \emph{cargo cult}\footnote{A term coined by Norris Mervyn Bird in 1945, in reference to the effects of colonial authorities in Melanesia.}: it produces correct outputs, but the semantic work is done by external stipulation rather than internal acquisition, and so the system behaves as if it understands, but meaning has been smuggled in by the designer or analyst\footnote{Recall: Searle's Chinese Room: correct responses with no internal understanding.}. A system satisfying G0-strong with low G2a is an \emph{authentic failure}: it has genuinely learned something, but the wrong thing; this is not to say that genuine, authentic failures are not useful; rather, such failures are informative, as they reveal what the learning process actually selected for, a goal of this enterprise here. A system failing both is a \emph{broken puppet}: neither accurate nor autonomous, it lacks both external and internal semantic grounding.

We may further combine the above to produce a cast of characters according to grounding that fulfill certain archetypes that recur across systems, shown in Table~\ref{tab:arch}.

\begin{table}[H]
	\centering
	\caption{Grouding archetypes, characterized by their respective typological behaviors}
	\label{tab:arch}
	\begin{tabular}{lll}
		\toprule
		\textbf{Archetype} & \textbf{Profile} & \textbf{Example} \\
		\midrule
		\emph{Parrot} & high G2a; low G2b/G4 & lookup table; pattern-matching LLM \\
		\emph{Calculator} & high G4; low G1 & perfect algebra on wrong atoms \\
		\emph{Glass canon} & high G2a; low G3 & accurate; collapses under noise \\
		\emph{Fluent empty} & high G3/G4 (ling); low G2b (ext) & text-only LLM on world tasks \\
		\emph{Brittle expert} & high G2a/G2b; low G3 & symbolic theorem prover \\
		\emph{Drifter} & low G1; cascading errors & corrupted embeddings \\
		\emph{Grounded} & high across all & human language (ideally) \\ \bottomrule
	\end{tabular}
\end{table}

The \emph{parrot} reproduces correct outputs without causal warrant or systematic structure (a lookup table; an LLM exploiting surface patterns without genuine composition); in this way, we characterize the archetype's namesake according to grounding \cite{benderetal2021parrots}. The \emph{calculator} applies perfect algebra to miscalibrated atoms, yielding systematic error. The \emph{glass canon} achieves remarkable accuracy under standard conditions with no robustness margin. The \emph{fluent empty} describes text-only language models: compositionally productive and locally smooth in the space of linguistic inference, yet lacking selection-for-success on world-referential tasks. The \emph{brittle expert} characterizes symbolic theorem provers, in that such systems are correct and causally warranted within their domain, with no tolerance for noise. The \emph{drifter} suffers corrupted atomic representations that propagate through composition, such that even perfect G4 cannot rescue a system whose G1 has failed. The \emph{grounded} system, finally, scores high across all desiderata, which we take as its prototypical example to be natural language.

By locating a system within profile-space, as it were, we identify which desiderata require attention. A memorizer needs compositional structure; a miscalibrated system needs atomic recalibration; a lucky system needs causal validation; a brittle expert needs continuous relaxation or redundancy. The grounding profile thus functions as a diagnostic instrument, insofar as it tells us whether a system is grounded as well as how it fails to be, and, therefore, what interventions might help. Adequacy thresholds remain application-specific (see below), but the typology provides a common vocabulary for characterizing failures and prioritizing improvements.

Where a typology is useful is in what it typologizes; where a typology is interesting is where it breaks, or, rather, where what it attempts to typologize resists classification. A system that is simultaneously memorizer and miscalibrated, or that shifts cells under different $( k, t )$ evaluations, reveals structure the typology did not anticipate. How does a memorizer become grounded? What interventions move a system from brittle algebraist to smooth generalist? The static typology invites a dynamics we do not develop here.

\subsection{Metricization and callibration}\label{subsec:metricization}

The framework assumes well-defined (pseudo)metrics on all spaces: $d_{\mathcal{R}}$ on representations; $d_{k, t}$ on meanings; $d_{\mathcal{C}}$ on concepts. This assumption holds reasonably for extensional meanings grounded in vision (e.g., IoU-based metrics for object reference) or for vectorial representations (cosine distance), but becomes non-trivial for social-normative meanings ( $t=\text{soc}$ ). How should we metrize the space of dialogue commitments, normative statuses, or politeness violations? Candidate approaches include repair-cost metrics (the effort required to restore conversational alignment after a misfire), scalar judgments from human raters, or graph distances in networks of social obligations. Each choice encodes theoretical commitments about what makes meanings ``close'' or ``distant'' in social space, and no consensus exists. Similarly, for highly abstract or theoretical concepts, defining $d_{k, t}$ that respects intuitive similarity while remaining empirically tractable remains open. In practice, this may restrict the framework's applicability to meaning types where reasonable metrics are established or can be operationalized through proxy tasks.

A related calibration problem concerns the tolerances $\left(\varepsilon_{\text {pres }}^{k, t}, \varepsilon_{\text {faith }}^{k, t}, \delta_{\text {comp }}^{k, t}, \text{etc.} \right)$, which require empirical estimation without universal protocol. What counts as ``small'' preservation error depends on application: a medical diagnosis system may require very small $\varepsilon_{\text {pres }}^{k, t}$, for example, while a casual chatbot might tolerate comparatively larger values. We provide machinery for reporting these numbers, but limited guidance on what values constitute ``good enough'' grounding for particular tasks (indeed, far be it from us to state such a metric for the medical domain!). A more complete theory would connect grounding parameters to downstream task requirements systematically; we defer this to application-specific engineering judgments below.

\subsection{Etiological variation}\label{subsec:etiological}

Etiological faithfulness (G2b) poses particular difficulty: verifying that internal states were ``selected for'' success requires access to training histories and causal interventions that may be infeasible for large-scale or proprietary systems. The ideal test involves controlled ablations, removing mechanism $M$ and observing performance degradation; for modern neural networks with billions of parameters, identifying and cleanly ablating the ``mechanism'' responsible for a semantic function is non-trivial. Which parameters constitute the $M$ that implements spatial reasoning? Attention heads? Specific layers? Furthermore, G2b's requirement that $M$ was acquired because it improved task success (not merely correlated with it) demands access to training dynamics and counterfactual training runs: computationally expensive and rarely documented.

For closed systems, we may be limited to observational proxies or behavioral tests providing only lower bounds on $\text{ACE}_E(M)$. G2b, then, is the most philosophically principled but practically challenging desideratum. We anticipate that many evaluations will satisfy only G2a (correlational faithfulness) without fully certifying G2b. Whether this suffices, whether correlation without etiology constitutes genuine grounding or merely reliable coincidence, remains a question we surface, but do not resolve here.

\subsection{Indexing and context-sensitivity}\label{subsec:indexing}

Indexing by $( k, t)$ provides a clean, analytical precision; it introduces decisions that affect reported parameters and inter-study comparability. The boundaries between meaning types are not always clean; many symbols simultaneously invoke multiple types. Consider ``marriage'': it denotes a legal/social status, implies entailments about rights and obligations, and carries normative force about appropriate conduct. Evaluating \emph{the} grounding of ``marriage'' requires specifying which $t$ is under test, yet real communication blends these. Similarly, context specification remains underspecified: how finely should contexts be individuated? Should $k$ track only task-domain (kitchen versus workshop), or also interlocutor identity, discourse history, and cultural background? 

More fundamentally, our treatment of context-sensitivity by indexing parameters by $k$ and allowing $\mathcal{I}_k^t$ to vary captures some phenomena but undersells the context-dependence of natural language. Gradable adjectives (``tall'', ``expensive'') shift continuously with comparison class in langauges with positive degree, comparative degree, and superlative degree operate differently than langauges with elative, for example; quantifier domain restriction tacitly limits ``every student'' to contextually relevant students; perspectival expressions and deixis (``local'', ``nearby'', ``here'', ``there'') depend on speaker location and salience. Our framework can, in principle, handle these by $k$, but pushes complexity into context specification rather than addressing semantic flexibility intrinsically.

Vagueness sits uneasily with our metric-based approach, which assumes determinate $\mathcal{I}_k^t(\sigma)$ for each $\sigma$ at each $(k, t)$. Supervaluationist or degree-theoretic treatments would require enriching meaning spaces with partial/fuzzy truth-values or probability distributions, which we do not develope here, and leave for future work. These considerations suggest the framework best applies to relatively stable, context-invariant meanings or carefully controlled $( k, t )$ pairs. Whether this reflects a limitation of the framework or a genuine constraint on precise grounding evaluation is itself an open question.

\subsection{Structural and temporal scope}\label{subsec:temporality}

Compositionality (G4) assumes typed grammatical constructors and a well-defined semantic algebra $\left\langle\mathcal{M}_k^t, \mathcal{F}_k^t\right\rangle$. For systems without explicit grammatical structure (e.g., end-to-end neural networks), defining appropriate constructors becomes circular: we must impose structure to measure structure-preservation. More fundamentally, natural language exhibits both compositional and non-compositional constructions. Phrases like ``kick the bucket'' or ``red herring'' do not derive meanings from parts in the systematic way G4 demands. Our framework handles this via tolerance $\delta_{\text {comp}}^{k, t}$, treating perfect composition as a limiting behavior and idiomatic expressions as contributing to larger $\delta_{\text {comp }}^{k, t}$ values.

This treatment records deviation without explaining when non-compositionality is acceptable. Future work should distinguish \emph{productive} composition (novel combinations interpreted systematically) from \emph{lexicalized} idioms (stored whole units), and so model the gradient between them. Construction grammar and usage-based approaches suggest compositionality exists on a spectrum shaped by frequency and conventionalization \cite{BOAS2007,goldberg2006constructions}. Our $\delta_{\text {comp }}^{k, t}$ collapses this spectrum into a single number; future refinements should track separate parameters for productivity on novel forms versus storage of conventional expressions.

Relatedly, our analysis shows that uniformly discrete systems (Corollary~\ref{cor:unif-discrete}) exhibit degenerate moduli: formally continuous; practically fragile. While this accurately characterizes brittleness under perturbations like typos (where single-character edits can flip meanings arbitrarily), it offers limited guidance for improvement.

Finally, our framework captures grounding \emph{at a fixed point}, but does not address temporal dynamics. Language evolves through development (child acquisition, where meanings stabilize over years) and cultural change (semantic drift, where ``awful'' shifts from ``awe-inspiring'' to ``terrible'' over centuries, or ``literally'' acquires an intensifier use). Similarly, model grounding may shift: a model's $\varepsilon_{\text {faith }}^{k, t}$ on medical terminology may decrease after domain adaptation. Static parameters cannot capture these trajectories, predict stability under continued learning, or distinguish robust grounding (maintained across distributional shifts) from brittle grounding (dependent on narrow training conditions). Extending the framework to track grounding profiles over time via time-indexed $\left(k_\tau, t\right)$ or by casting $\varepsilon_{\text {pres }}^{k, t}(\tau)$ as curves remains future work.

\subsection{Application-specific adequacy}\label{subsec:adequacy}

We argue throughout that partial grounding suffices for many applications, but provide no systematic theory for determining \emph{which} parameters matter for \emph{which} tasks. Heuristically: a dialogue system might tolerate higher $\delta_{\text {comp }}^{k, t}$ (accepting non-compositional phrases) but require low $\varepsilon_{\text {faith }}^{k, t}$ (accurate intent tracking); a robotic controller may exhibit the opposite pattern (tolerating intent ambiguity if spatial grounding is precise). The framework measures and reports, but does not prescribe task- or domain-specific requirements.

A complete engineering methodology in a logical next step from here, that would map task families to grounding profiles, specifying which desiderata are negotiable and which constitute hard constraints. This connects to ongoing work in AI safety and alignment: determining adequate grounding for high-stakes applications requires normative judgment about acceptable risk under semantic uncertainty. We view the framework as providing the measurement vocabulary; the adequacy thresholds remain domain-specific and, in many cases, contested.

\section{Conclusion}\label{sec:concl}

We have recast the symbol grounding problem into measurement, and provisioned a language for discussion therein; $\mathfrak{G}$ operationalizes grounding through four-plus-one desiderata, indexed by context $(k)$ and meaning type $(t)$, with explicit tolerances that enable assessment: preservation; faithfulness; robustness; compositionality; authenticity.

Model-theoretic semantics achieves compositionality (G4) but fails etiological warrant (G2b), aligned with Peregrin's diagnosis: a theory of inference, not meaning. Pure symbolic systems satisfy structural requirements by stipulation, but lack the causality that ties symbols to their purported referents. Large language models are intermediate, achieving correlational faithfulness and local robustness through distributional learning while lacking the selection-for-success that characterizes genuine world-grounding. Natural language, shaped by evolutionary and developmental pressures across perceptual, inferential, and social dimensions, satisfies desiderata under authenticity.

Rather than asking whether LLMs ``understand'' or possess ``real meaning,'' we can access specific deficits: they achieve G2a (correlational fit) for linguistic tasks but fail G2b (etiological warrant) for embodied control; they exhibit local robustness (G3) under benign perturbations but lack global bounds under adversarial inputs. In this way, we advocate targeted improvements rather than wholesale dismissal or otherwise uncritical acceptance.

Implications follow: grounding is typed, and systems may be grounded in one mode while failing in others; the authenticity requirement (G0) blocks post-hoc stipulation, in which meanings must be earned through the agent's causal mechanisms, in contrast to imposition by external analysts, which distinguishes genuine grounding from mere modeling; perfect grounding is neither necessary, nor, outside natural language, achieved: artificial systems can be useful with partial grounding, provided their limitations are explicit and their tolerances measured. We emphasize: instead of ``Is this system grounded?'', a question that invites ideological posturing, we ask ``What are its grounding parameters at $(k,t)$ under threat model $U$?''.

\newpage
%\printbibliography
\bibliography{references}

\newpage
\appendix

\section{Supplemental theorem material}\label{sec:thms}

Uniform continuity and a (global) modulus are equivalent formulations: a single monotone curve $\omega$ globally upper-bounds semantic change as a function of representation change. The tightest such curve is the minimal oscillation $\omega^\ast$, defined purely by $S$ and the two metrics.

\begin{theorem}[Modulus and uniform continuity]\label{thm:modulus}
	Fix $(k,t)$ and let $S=\mathcal A_{k}^{t}\circ\Gamma:(\mathcal R,d_{\mathcal R})\to(\mathcal M_{k}^{t},d_{k,t})$. The following are equivalent:
	\begin{enumerate}
		\item $S$ is uniformly continuous.
		\item There exists a (global) modulus $\omega:[0,\infty)\to[0,\infty]$ that is monotone nondecreasing, satisfies $\omega(0)=0$ and $\lim_{\varepsilon\downarrow 0}\omega(\varepsilon)=0$, and
		\[
		d_{k,t} \big(S(r),S(r')\big)\ \le\ \omega \big(d_{\mathcal R}(r,r')\big)\qquad\forall\,r,r'\in\mathcal R.
		\]
	\end{enumerate}
	Moreover, the minimal oscillation function
	\[
	\omega^\ast(\varepsilon):=\sup\big\{\,d_{k,t}(S(r),S(r'))\ :\ d_{\mathcal R}(r,r')\le\varepsilon\,\big\}
	\]
	is such a modulus.
\end{theorem}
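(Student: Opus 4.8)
The plan is to prove the two implications separately and then verify that $\omega^\ast$ is the smallest admissible modulus, which simultaneously settles the ``moreover'' clause. The direction (2)$\Rightarrow$(1) is routine. Given a modulus $\omega$ with $\omega(0)=0$ and $\lim_{\varepsilon\downarrow 0}\omega(\varepsilon)=0$, fix $\eta>0$ and choose $\delta>0$ with $\omega(\delta)<\eta$. Monotonicity then gives, for any $r,r'$ with $d_{\mathcal R}(r,r')\le\delta$, the chain $d_{k,t}(S(r),S(r'))\le\omega(d_{\mathcal R}(r,r'))\le\omega(\delta)<\eta$, which is exactly uniform continuity.

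For (1)$\Rightarrow$(2) I would exhibit $\omega^\ast$ directly and check the defining properties. Well-definedness into $[0,\infty]$ is free, since the supremum always exists in the extended reals. Monotonicity is immediate, as enlarging $\varepsilon$ enlarges the set over which the supremum is taken. The pointwise bound $d_{k,t}(S(r),S(r'))\le\omega^\ast(d_{\mathcal R}(r,r'))$ holds by construction, because the pair $(r,r')$ itself lies in the defining set at radius $\varepsilon=d_{\mathcal R}(r,r')$. The two content-bearing properties are $\omega^\ast(0)=0$ and $\lim_{\varepsilon\downarrow 0}\omega^\ast(\varepsilon)=0$. For the limit, I fix $\eta>0$; uniform continuity supplies $\delta>0$ so that $d_{\mathcal R}(r,r')\le\varepsilon<\delta$ forces $d_{k,t}(S(r),S(r'))<\eta$, whence $\omega^\ast(\varepsilon)\le\eta$ for every $\varepsilon<\delta$, and letting $\eta\downarrow 0$ yields the claim.

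Finally, for minimality: if $\omega$ is any modulus satisfying the bound in (2), then for each $\varepsilon$ and each pair with $d_{\mathcal R}(r,r')\le\varepsilon$ we have $d_{k,t}(S(r),S(r'))\le\omega(d_{\mathcal R}(r,r'))\le\omega(\varepsilon)$ by monotonicity, so taking the supremum gives $\omega^\ast(\varepsilon)\le\omega(\varepsilon)$. Thus $\omega^\ast$ is pointwise the least valid modulus, which justifies the name \emph{minimal oscillation} and shows $\omega^\ast$ is itself a modulus of the required form.

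The main obstacle is a subtlety I would flag explicitly: $d_{\mathcal R}$ is only a \emph{pseudo}metric, so $\omega^\ast(0)=0$ is not automatic, since $d_{\mathcal R}(r,r')=0$ need not imply $r=r'$. I would close this gap by the same uniform-continuity argument: if $d_{\mathcal R}(r,r')=0$, then $d_{\mathcal R}(r,r')<\delta$ for every $\delta>0$, hence $d_{k,t}(S(r),S(r'))<\eta$ for every $\eta>0$, forcing $d_{k,t}(S(r),S(r'))=0$; therefore the supremum at $\varepsilon=0$ is $0$. A secondary bookkeeping point is the handling of the extended value $+\infty$ and of $<$ versus $\le$ in the definition of uniform continuity, neither of which obstructs the argument but both of which deserve a careful word.
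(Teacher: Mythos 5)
Your proposal is correct and follows essentially the same route as the paper's own proof: the same $\omega(\delta)<\eta$ argument for (2)$\Rightarrow$(1), the same direct verification that $\omega^\ast$ is a modulus for (1)$\Rightarrow$(2), and the same supremum argument for minimality (which the paper separates out as Lemma~\ref{lem:minimal}, but with an identical proof).

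One remark worth making: the pseudometric subtlety you flag at $\omega^\ast(0)=0$ is genuine, and your treatment is actually \emph{more} careful than the paper's. The paper's proof asserts that $d_{\mathcal R}(r,r')=0$ ``in a (pseudo)metric \ldots implies $d_{k,t}(S(r),S(r'))=0$'', but this is not automatic for a pseudometric --- it requires exactly the uniform-continuity argument you supply ($d_{\mathcal R}(r,r')=0<\delta$ for every $\delta>0$, hence $d_{k,t}(S(r),S(r'))<\eta$ for every $\eta>0$). Since $\omega^\ast(0)=0$ is only needed in the direction where uniform continuity is hypothesized, your patch closes the gap cleanly; alternatively, one could note that $\omega^\ast(0)=0$ is subsumed by $\lim_{\varepsilon\downarrow 0}\omega^\ast(\varepsilon)=0$ together with monotonicity, which rests on the same estimate.
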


\begin{proof}
	Let $\eta>0$. By $\lim_{\varepsilon\downarrow 0}\omega(\varepsilon)=0$, pick $\delta>0$ with $\omega(\delta)<\eta$. If $d_{\mathcal R}(r,r')<\delta$, then by monotonicity of $\omega$,
	\[
	d_{k,t}\big(S(r),S(r')\big)\ \le\ \omega \big(d_{\mathcal R}(r,r')\big)\ \le\ \omega(\delta)\ <\ \eta.
	\]
	Thus $S$ is uniformly continuous.
	
	Now, define
	\[
	\omega^\ast(\varepsilon):=\sup\big\{\,d_{k,t}(S(r),S(r'))\ :\ d_{\mathcal R}(r,r')\le\varepsilon\,\big\}.
	\]
	For monotonicity, if $0\le\varepsilon_1\le\varepsilon_2$, then the set over which the supremum is taken for $\varepsilon_1$ is a subset of that for $\varepsilon_2$, hence $\omega^\ast(\varepsilon_1)\le\omega^\ast(\varepsilon_2)$.
	
	Now for zero, the constraint $d_{\mathcal R}(r,r')\le 0$ forces $d_{\mathcal R}(r,r')=0$. In a (pseudo)metric, this implies $d_{k,t}(S(r),S(r'))=0$; hence $\omega^\ast(0)=0$.
	
	For any $r,r'$, the pair belongs to the set with \(\varepsilon=d_{\mathcal R}(r,r')\). Therefore
	\[
	d_{k,t}(S(r),S(r'))\ \le\ \omega^\ast \big(d_{\mathcal R}(r,r')\big).
	\]
	
	Vanishing at zero follows. By uniform continuity, for any $\eta>0$ there exists $\delta>0$ such that $d_{\mathcal R}(r,r')<\delta$ implies that $d_{k,t}(S(r),S(r'))<\eta$. Then for every $\varepsilon\le\delta$,
	each pair with $d_{\mathcal R}(r,r')\le\varepsilon$ has image distance $<\eta$, so $\omega^\ast(\varepsilon)\le\eta$. Hence $\lim_{\varepsilon\downarrow 0}\omega^\ast(\varepsilon)=0$.
	
	Thus $\omega^\ast$ is a modulus.
\end{proof}

\begin{lemma}[Minimality of $\omega^\ast$]\label{lem:minimal}
	If $\omega$ is any modulus satisfying Theorem~\ref{thm:modulus}, then for all $\varepsilon\ge 0$,
	\(
	\omega^\ast(\varepsilon)\ \le\ \omega(\varepsilon).
	\)
\end{lemma}

\begin{proof}
	Fix $\varepsilon$ and any $r,r'$ with $d_{\mathcal R}(r,r')\le\varepsilon$. By Theorem~\ref{thm:modulus} and monotonicity of $\omega$,
	\[
	d_{k,t}(S(r),S(r'))\ \le\ \omega \big(d_{\mathcal R}(r,r')\big)\ \le\ \omega(\varepsilon).
	\]
	Taking the supremum over all such pairs gives $\omega^\ast(\varepsilon)\le\omega(\varepsilon)$.
\end{proof}

A uniformly discrete domain means that all distinct points in the representation space are separated by at least some fixed minimum distance; there are no pairs of representations that can get arbitrarily close to one another. Because of this gap, continuity becomes trivial: whenever two points are closer than \(\delta_0\), they must, in fact, be the same point. As a result, any mapping \(S=\mathcal A_k^t \circ \Gamma\) from such a domain to its meaning space is automatically uniformly continuous: small changes in representation cannot produce any change in meaning, because there simply are no smaller steps to take. The modulus of continuity \(\omega^\ast(\varepsilon)\) therefore looks like a step function: it is exactly zero for perturbations smaller than the minimal spacing \(\delta_0\) and bounded by the total range of possible semantic values for larger perturbations.

\begin{corollary}[Uniformly discrete domain]\label{cor:unif-discrete}
	Assume $(\mathcal R,d_{\mathcal R})$ is uniformly discrete: there exists $\delta_0>0$ such that $d_{\mathcal R}(r,r')\ge\delta_0$ whenever $r\neq r'$. Then $S=\mathcal A_{k}^{t} \circ\Gamma$ is uniformly continuous. Moreover, the minimal oscillation function
	\[
	\omega^\ast(\varepsilon):=\sup\{\,d_{k,t}(S(r),S(r')):\ d_{\mathcal R}(r,r')\le \varepsilon\,\}
	\]
	satisfies $\omega^\ast(\varepsilon)=0$ for every $\varepsilon<\delta_0$, and $\omega^\ast(\varepsilon)\le \operatorname{diam}(S(\mathcal R))$ for all $\varepsilon\ge\delta_0$. If the domain has finite diameter $D_{\mathcal R}:=\sup_{r,r'} d_{\mathcal R}(r,r')<\infty$, then for every $\varepsilon\ge D_{\mathcal R}$ we have $\omega^\ast(\varepsilon)=\operatorname{diam}(S(\mathcal R))$. In particular, discreteness alone yields a crude global Lipschitz bound whenever $\operatorname{diam}(S(\mathcal R))<\infty$: for $r\neq r'$,
	\[
	d_{k,t}\big(S(r),S(r')\big)\ \le\ \operatorname{diam}(S(\mathcal R))\ \le\ \frac{\operatorname{diam}(S(\mathcal R))}{\delta_0}\, d_{\mathcal R}(r,r'),
	\]
	so $S$ is $L$-Lipschitz with $L=\operatorname{diam}(S(\mathcal R))/\delta_0$. Without further structure, no finite universal Lipschitz constant can be guaranteed when $\operatorname{diam}(S(\mathcal R))=\infty$.
\end{corollary}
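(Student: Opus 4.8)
The plan is to lean entirely on the single structural consequence of uniform discreteness: a positive separation $\delta_0$ below which the only pairs at mutual distance $\le\varepsilon$ are diagonal. Everything else reduces to bookkeeping against the definitions of $\omega^\ast$ and $\operatorname{diam}$. First I would dispatch uniform continuity directly, with no appeal to $\omega^\ast$: given any $\eta>0$, take $\delta=\delta_0$ (notably independent of $\eta$); if $d_{\mathcal R}(r,r')<\delta_0$ then uniform discreteness forces $r=r'$, whence $d_{k,t}(S(r),S(r'))=0<\eta$. The one point needing care is that the hypothesis ``$d_{\mathcal R}(r,r')\ge\delta_0$ for $r\neq r'$'' upgrades the \emph{pseudo}metric $d_{\mathcal R}$ to one that separates points, which is exactly what licenses the inference $d_{\mathcal R}(r,r')=0\Rightarrow r=r'$; I would flag this so the pseudometric-versus-metric distinction does not slip by. (Uniform continuity also follows a posteriori from Theorem~\ref{thm:modulus} once $\omega^\ast$ is shown to be a modulus.)

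Next I would compute $\omega^\ast$ by splitting on $\varepsilon$. For $\varepsilon<\delta_0$, the constraint set $\{(r,r'):d_{\mathcal R}(r,r')\le\varepsilon\}$ consists solely of diagonal pairs, on which the image distance is $0$, so $\omega^\ast(\varepsilon)=0$. For $\varepsilon\ge\delta_0$, the constraint merely restricts which pairs enter the supremum, so $\omega^\ast(\varepsilon)$ is bounded above by the unconstrained supremum $\operatorname{diam}(S(\mathcal R))$. For the finite-diameter refinement, if $D_{\mathcal R}<\infty$ and $\varepsilon\ge D_{\mathcal R}$ then \emph{every} pair satisfies $d_{\mathcal R}(r,r')\le D_{\mathcal R}\le\varepsilon$, so the constraint is vacuous and the supremum equals $\operatorname{diam}(S(\mathcal R))$ exactly, meeting the upper bound and forcing equality.

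For the Lipschitz claim I would chain two inequalities valid for $r\neq r'$: the definitional bound $d_{k,t}(S(r),S(r'))\le\operatorname{diam}(S(\mathcal R))$, together with $d_{\mathcal R}(r,r')/\delta_0\ge 1$ (from separation), which gives $\operatorname{diam}(S(\mathcal R))\le(\operatorname{diam}(S(\mathcal R))/\delta_0)\,d_{\mathcal R}(r,r')$ whenever the diameter is finite and nonnegative. Composing yields $L$-Lipschitzness with $L=\operatorname{diam}(S(\mathcal R))/\delta_0$; I would note explicitly that finiteness of $\operatorname{diam}(S(\mathcal R))$ is what keeps $L$ finite.

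The only genuinely non-routine step, and the one I expect to be the main obstacle, is the final negative assertion that no finite universal Lipschitz constant is guaranteed once $\operatorname{diam}(S(\mathcal R))=\infty$. This is an existence-of-counterexample claim rather than a deduction, so I would exhibit a concrete witness: take $\mathcal R=\mathbb N$ with the discrete metric (so $\delta_0=1$) and $S(n)=n\in\mathbb R$ under the usual metric. Then $d(S(0),S(n))=n$ while $d_{\mathcal R}(0,n)=1$, so any candidate Lipschitz constant would have to exceed every $n$, which is impossible. Crucially, this same $S$ remains uniformly continuous (the earlier parts apply verbatim), so the example cleanly separates ``uniformly continuous on a uniformly discrete domain'' from ``globally Lipschitz'' — precisely the conceptual contrast the corollary is asserting.
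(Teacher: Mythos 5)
Your proposal is correct, and for every claim the paper actually proves it follows the same route: uniform continuity via $\delta=\delta_0$ (forcing $r=r'$ below the separation threshold), the case split on $\varepsilon$ against $\delta_0$ for $\omega^\ast$, the vacuity of the constraint when $\varepsilon\ge D_{\mathcal R}$ giving exact equality with $\operatorname{diam}(S(\mathcal R))$, and the Lipschitz bound obtained by chaining $d_{k,t}(S(r),S(r'))\le\operatorname{diam}(S(\mathcal R))$ with $d_{\mathcal R}(r,r')\ge\delta_0$. Two points where you go beyond the paper are worth noting. First, your flag about the pseudometric-versus-metric distinction is apt: the paper equips $\mathcal R$ with only a (pseudo)metric, and it is precisely the uniform-discreteness hypothesis ($d_{\mathcal R}(r,r')\ge\delta_0>0$ for $r\neq r'$) that licenses the inference $d_{\mathcal R}(r,r')<\delta_0\Rightarrow r=r'$, which the paper uses silently. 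Second, and more substantively, the paper's proof simply stops after the Lipschitz bound and never establishes the final sentence of the statement --- that no finite universal Lipschitz constant is guaranteed when $\operatorname{diam}(S(\mathcal R))=\infty$. You correctly identify this as an existence-of-counterexample claim and supply a valid witness: $\mathcal R=\mathbb N$ with the discrete metric ($\delta_0=1$) and $S(n)=n\in\mathbb R$ gives $d(S(0),S(n))=n$ against $d_{\mathcal R}(0,n)=1$, defeating every candidate $L$, while $S$ remains uniformly continuous by the first part. (The paper's own counterexample remark is a different animal --- a non-uniformly-discrete domain where uniform continuity itself fails --- so it does not cover this claim.) Your proof is therefore not merely a match but a completion: it discharges a piece of the stated corollary that the paper's proof leaves unargued.
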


\begin{proof}
	Uniform continuity is immediate: take $\delta=\delta_0$. If $d_{\mathcal R}(r,r')<\delta_0$ then $r=r'$ and hence $d_{k,t}(S(r),S(r'))=0$; this meets the $\varepsilon - \delta$ condition. For the oscillation, when $\varepsilon<\delta_0$ the constraint $d_{\mathcal R}(r,r')\le\varepsilon$ forces $r=r'$, so $\omega^\ast(\varepsilon)=0$. 
	
	When $\varepsilon\ge\delta_0$, the supremum over admissible pairs can never exceed $\operatorname{diam}(S(\mathcal R))$ by definition, which gives the upper bound for free. If, in addition, the domain diameter $D_{\mathcal R}$ is finite, then for any $\varepsilon\ge D_{\mathcal R}$ the constraint $d_{\mathcal R}(r,r')\le\varepsilon$ becomes vacuous (every pair is allowed), so $\omega^\ast(\varepsilon)$ equals the diameter of the image. 
	
	The displayed Lipschitz bound follows by inserting the trivial inequality $d_{k,t}(S(r),S(r'))\le\operatorname{diam}(S(\mathcal R))$ and using $d_{\mathcal R}(r,r')\ge\delta_0$ for $r\neq r'$.
\end{proof}

Corollary~\ref{cor:unif-discrete} is why purely symbolic or discretized systems are formally smooth but are otherwise practically brittle. In a symbolic calculus with tokens or strings as representations, there is a hard jump between identity and complete difference, in which changing a single character or symbol can move the representation across the entire semantic space. There are no gradual transitions or intermediate cases, so the system satisfies continuity in a vacuous mathematical sense but fails to exhibit the graceful degradation we expect from robust grounding.

\begin{remark}
	The step-function identity $\omega^\ast(\varepsilon)=\operatorname{diam}(S(\mathcal R))$ for all $\varepsilon\ge\delta_0$ \emph{does not} hold in general\footnote{\textnormal{Thank you to colleauge, who wishes to remain anonymous, who articulated this point in workshop.}}: it requires that some pair $(r,r')$ achieving (or approaching) the image diameter also satisfy $d_{\mathcal R}(r,r')\le\delta_0$. Uniform discreteness alone does not ensure this. 
\end{remark}

\begin{remark}
	If one assumes the \emph{discrete metric} $d_{\mathcal R}(r,r')\in\{0,1\}$ for $r\neq r'$, then indeed $\omega^\ast(\varepsilon)=0$ for $\varepsilon<1$ and $\omega^\ast(\varepsilon)=\operatorname{diam}(S(\mathcal R))$ for $\varepsilon\ge1$.
\end{remark}

\begin{remark}
	Counterexample (in the non-uniformly discrete domain) follows. Let $\mathcal R=\{0\}\cup\{1/n:\ n\in\mathbb N\}\subset\mathbb R$ with the Euclidean metric, and define $S(0)=0$ while $S(1/n)=1$ for all $n$. Then $r_n=1/n\to 0$ but $d_{k,t}(S(r_n),S(0))=1$ for every $n$. Consequently, $S$ fails to be uniformly continuous, and by Theorem~\ref{thm:modulus}, no global modulus exists.
\end{remark}

If atomic meanings are preserved exactly (no error on G1) and composition is exact (no error on G4), then the system's realized interpretation respects the grammar's constructors \emph{on the nose}. Algebraically, $\mathcal A_k^{t}\Psi$ is a homomorphism from the term algebra to the semantic algebra, just as Frege and Montague intended.

\begin{theorem}[Exact homomorphism at $\varepsilon_{ \text{pres}}^{k,t}=\delta_{ \text{comp}}^{k,t}=0$]\label{thm:hom}
	Fix $(k,t)$. Let $\mathcal G$ be a typed, free algebra of constructors (no equations beyond typing) over the atomic vocabulary $\Sigma_{ \text{atom}}\subseteq\Sigma$, and let $\mathsf{Terms}(\Sigma_{ \text{atom}},\mathcal G)$ be the corresponding typed term algebra. Assume G1 holds with $\varepsilon_{ \text{pres}}^{k,t}=0$ on atoms, and typed G4 holds with $\delta_{ \text{comp}}^{k,t}=0$ on the closure of $\Sigma_{ \text{atom}}$ under $\mathcal G$. Writing $F:=\mathcal A_{k}^{t}\Psi$, and identifying each well-typed term $\tau$ with its surface realization in $\Sigma$ via $\mathcal G$ (so $F(\tau)$ abbreviates $F(\text{surf}(\tau))$), the map
	\[
	F:\ \big\langle \mathsf{Terms}(\Sigma_{ \text{atom}},\mathcal G),\ \mathcal G\big\rangle\ \longrightarrow\ \big\langle \mathcal M_{k}^{t},\ \mathcal F_{k}^{t}\big\rangle
	\]
	is a homomorphism: for every typed constructor $f$ and well-typed tuple of terms $(\tau_1,\dots,\tau_n)$,
	\[
	F\big(f(\tau_1,\dots,\tau_n)\big)\ =\ f_{k}^{\mathcal M,t}\big(F(\tau_1),\dots,F(\tau_n)\big).
	\]
\end{theorem}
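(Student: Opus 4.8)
The plan is to read the claimed homomorphism identity as precisely the $\delta_{\text{comp}}^{k,t}=0$ instance of G4, promoted from an inequality to an equality, and to use G1 at $\varepsilon_{\text{pres}}^{k,t}=0$ only to anchor the base of an induction and to pin down $F$ as \emph{the} homomorphic extension. First I would recall that $F:=\mathcal A_{k}^{t}\Psi$, and that because $\mathcal G$ is a \emph{free} typed algebra, every element of $\mathsf{Terms}(\Sigma_{\text{atom}},\mathcal G)$ is either an atom $\sigma\in\Sigma_{\text{atom}}$ or uniquely of the form $f(\tau_1,\dots,\tau_n)$ for a typed constructor $f$ and well-typed subterms $\tau_i$ (unique readability). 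This makes a structural induction over term depth well-founded, and it is along this recursion that I would verify the homomorphism condition.

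For the induction, the base case is an atom: G1 at $\varepsilon_{\text{pres}}^{k,t}=0$ gives $d_{k,t}\big(F(\sigma),\mathcal I_{k}^{t}(\sigma)\big)=0$, so $F$ agrees with the intended interpretation on generators (up to the pseudometric). The inductive step is the heart of the matter and is where G4 does all the work. Given $\tau=f(\tau_1,\dots,\tau_n)$ with the identities already established for each $\tau_i$, I would apply typed G4 at $\delta_{\text{comp}}^{k,t}=0$ to the tuple $(\tau_1,\dots,\tau_n)$, which lies in the closure of $\Sigma_{\text{atom}}$ under $\mathcal G$ by hypothesis. This yields
\[
d_{k,t}\big(F(f(\tau_1,\dots,\tau_n)),\ f_{k}^{\mathcal M,t}(F(\tau_1),\dots,F(\tau_n))\big)\ \le\ \delta_{\text{comp}}^{k,t}\ =\ 0,
\]
which is exactly the homomorphism equation for the constructor $f$ at this tuple. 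Since the homomorphism property is a purely one-step (local) condition quantified over all constructors and all argument tuples in the closure, establishing this single vanishing for each $f$ and each $\vec\tau$ suffices; nested compatibility then follows automatically from repeated application, with G1 supplying agreement on the atoms that bottom out the recursion.

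The main obstacle is the gap between $d_{k,t}(x,y)=0$ and literal equality $x=y$: the paper equips $\mathcal M_{k}^{t}$ only with a \emph{pseudo}metric, so distance zero does not by itself force identity. I would resolve this by passing to the metric quotient $\mathcal M_{k}^{t}/{\sim}$, where $x\sim y \iff d_{k,t}(x,y)=0$, and reading the homomorphism equality there — which is the same ``gauge''/quotient perspective the paper already invokes for $\mathcal C$. The one genuinely load-bearing check is that the semantic operations $f_{k}^{\mathcal M,t}$ \emph{descend} to this quotient, i.e.\ that $\sim$ is a congruence for the algebra $\langle \mathcal M_{k}^{t},\mathcal F_{k}^{t}\rangle$ (equivalently, that each $f_{k}^{\mathcal M,t}$ respects distance-zero in its arguments); without this, $f_{k}^{\mathcal M,t}(F(\tau_1),\dots)$ is not well-defined on equivalence classes and the inductive step cannot be chained. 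I would therefore either note that this congruence property holds whenever the semantic operations are non-expansive (or at least $d_{k,t}$-continuous) in their arguments, or simply specialize to the case where $d_{k,t}$ is a genuine metric — the intended reading for the symbolic and model-theoretic instantiations, where $d_{k,t}$ collapses to the discrete/identity metric and $\sim$ is trivial. Finally, combining the established identity with G1 and freeness identifies $F$ with the unique homomorphic extension of $\mathcal I_{k}^{t}$, recovering $\mathcal I_{k}^{t,\uparrow}$ and closing the argument.
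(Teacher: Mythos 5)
Your proposal follows essentially the same route as the paper's own proof: structural induction over the free term algebra, with G1 at $\varepsilon_{\text{pres}}^{k,t}=0$ anchoring the base case on atoms and the $\delta_{\text{comp}}^{k,t}=0$ instance of G4 supplying the inductive step verbatim. Indeed, as you correctly observe, the G4 instances already \emph{are} the homomorphism equations, so the induction is essentially bookkeeping, and G1 is really only needed to identify $F$ with the unique homomorphic extension $\mathcal I_{k}^{t,\uparrow}$ — a fact the paper itself defers to Corollary~\ref{cor:coincide} rather than folding into the theorem. Where you go beyond the paper is the pseudometric point, and you are right to press on it: the paper's proof silently promotes $d_{k,t}(x,y)=0$ to the literal equality $x=y$, which is unjustified when $d_{k,t}$ is only a pseudometric, as the framework explicitly permits. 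Your repair — pass to the quotient $\mathcal M_{k}^{t}/{\sim}$ with $x\sim y \iff d_{k,t}(x,y)=0$ and verify that each $f_{k}^{\mathcal M,t}$ descends, i.e., that $\sim$ is a congruence for $\langle\mathcal M_{k}^{t},\mathcal F_{k}^{t}\rangle$ — names exactly the load-bearing condition, and your observation that it does not follow from anything the theorem assumes (one needs non-expansiveness or $d_{k,t}$-continuity of the semantic operations, or a genuine metric) is a sharper reading of the statement than the paper's own proof provides. In the symbolic and model-theoretic instantiations the paper actually deploys the theorem on, $d_{k,t}$ collapses to the identity metric and the issue is vacuous, which is presumably why the authors elide it; but for general pseudometric meaning spaces your version is the correct one, and the paper's equalities should be read as holding up to metric identification.
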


\begin{proof}
	Assume G1 holds: with $\varepsilon_{ \text{pres}}^{k,t}=0$, $F(\sigma)=\mathcal I_{k}^{t}(\sigma)$ for all $\sigma\in\Sigma_{ \text{atom}}$. We proceed by structural induction on $\tau\in\mathsf{Terms}(\Sigma_{ \text{atom}},\mathcal G)$. The base case is immediate. 
	
	For the induction step, suppose the claim holds for $\tau_1,\dots,\tau_n$ and consider $\tau=f(\tau_1,\dots,\tau_n)$. By typed G4, with $\delta_{ \text{comp}}^{k,t}=0$,
	\[
	F\big(f(\tau_1,\dots,\tau_n)\big)\ =\ f_{k}^{\mathcal M,t}\big(F(\tau_1),\dots,F(\tau_n)\big),
	\]
	which is exactly the homomorphism property.
\end{proof}

Homomorphisms out of a free term algebra are uniquely determined by their action on atoms. Exact G1 and G4, therefore, force $\mathcal A_k^{t}\Psi$ to coincide with the intended semantics on every well-typed expression.

\begin{corollary}[Coincidence with the intended semantics]\label{cor:coincide}
	For the typed grammar $\mathcal{G}$, let $\mathcal I_{k}^{t,\uparrow}:\mathsf{Terms}(\Sigma_{ \text{atom}},\mathcal G)\to\mathcal M_{k}^{t}$ be the unique homomorphic extension of $\mathcal I_{k}^{t}|\_{\Sigma_{ \text{atom}}}$. Under Theorem~\ref{thm:hom}, $F=\mathcal A_{k}^{t}\Psi$ coincides with $\mathcal I_{k}^{t,\uparrow}$ on all terms:
	\[
	\forall\,\tau\in\mathsf{Terms}(\Sigma_{ \text{atom}},\mathcal G):\quad \mathcal A_{k}^{t}\Psi(\tau)=\mathcal I_{k}^{t,\uparrow}(\tau).
	\]
\end{corollary}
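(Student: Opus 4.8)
The plan is to exploit the universal property of the free term algebra: a homomorphism out of $\mathsf{Terms}(\Sigma_{\text{atom}},\mathcal G)$ is completely and uniquely determined by its restriction to the generators $\Sigma_{\text{atom}}$. Since both $F=\mathcal A_{k}^{t}\Psi$ and $\mathcal I_{k}^{t,\uparrow}$ are homomorphisms into the same semantic algebra $\langle\mathcal M_{k}^{t},\mathcal F_{k}^{t}\rangle$, it suffices to check that they agree on atoms; agreement on all terms then follows from uniqueness. Equivalently, one may run the short structural induction explicitly, which I sketch below.

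First I would establish the base case. By the hypothesis of Theorem~\ref{thm:hom} (G1 with $\varepsilon_{\text{pres}}^{k,t}=0$), we have $F(\sigma)=\mathcal I_{k}^{t}(\sigma)$ for every $\sigma\in\Sigma_{\text{atom}}$. By construction $\mathcal I_{k}^{t,\uparrow}$ is the homomorphic extension of $\mathcal I_{k}^{t}|_{\Sigma_{\text{atom}}}$, so $\mathcal I_{k}^{t,\uparrow}(\sigma)=\mathcal I_{k}^{t}(\sigma)$ on atoms as well. Hence $F$ and $\mathcal I_{k}^{t,\uparrow}$ coincide on the generating set. For the inductive step, given $\tau=f(\tau_1,\dots,\tau_n)$ with the claim assumed for each $\tau_i$, I would chain three equalities: Theorem~\ref{thm:hom} gives $F(f(\tau_1,\dots,\tau_n))=f_{k}^{\mathcal M,t}(F(\tau_1),\dots,F(\tau_n))$; the induction hypothesis replaces each $F(\tau_i)$ with $\mathcal I_{k}^{t,\uparrow}(\tau_i)$; and the homomorphism property of $\mathcal I_{k}^{t,\uparrow}$ reassembles the right-hand side as $\mathcal I_{k}^{t,\uparrow}(f(\tau_1,\dots,\tau_n))$. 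This closes the induction and yields $F=\mathcal I_{k}^{t,\uparrow}$ on all terms.

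The mathematical content is light, so the main obstacle is bookkeeping rather than depth. I would be careful about two points. First, the partiality of $\mathcal I_{k}^{t}:\Sigma\rightharpoonup\mathcal M_{k}^{t}$ means I should restrict attention to terms built from atoms lying in $\text{dom}(\mathcal I_{k}^{t})$, so that $\mathcal I_{k}^{t,\uparrow}$ is well-defined and total on exactly that closure; freeness of the term algebra is what guarantees the extension exists and is unique there. Second, I would respect the surface-realization identification flagged in Theorem~\ref{thm:hom}, reading $F(\tau)$ throughout as shorthand for $F(\text{surf}(\tau))$, so that the syntactic constructors $f$ on terms align with the semantic operations $f_{k}^{\mathcal M,t}$ rather than slipping between the term algebra and its image in $\Sigma$. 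With those conventions fixed, the coincidence is immediate from the uniqueness of homomorphic extensions out of a free algebra.
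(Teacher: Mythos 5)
Your proposal is correct and follows essentially the same route as the paper's proof: invoke Theorem~\ref{thm:hom} to get that $F=\mathcal A_{k}^{t}\Psi$ is a homomorphism, use G1 with $\varepsilon_{\text{pres}}^{k,t}=0$ for agreement on atoms, and conclude by the uniqueness clause of the universal property of the free typed term algebra (your explicit structural induction is just this argument unrolled, and your remarks on partiality and the $\text{surf}(\tau)$ identification are sensible housekeeping the paper leaves implicit).
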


\begin{proof}[Proof of Corollary~\ref{cor:coincide}]
	By Theorem~\ref{thm:hom}, $F:=\mathcal A_{k}^{t}\Psi$ is a homomorphism. By G1 with $\varepsilon_{\text{pres}}^{k,t}=0$, $F$ agrees with $\mathcal I_{k}^{t}$ on atoms. By the universal property of the (typed) term algebra, there is a unique homomorphic extension $\mathcal I_{k}^{t,\uparrow}$ of $\mathcal I_{k}^{t}$, hence $F=\mathcal I_{k}^{t,\uparrow}$.
\end{proof}

\section{Toy example}\label{sec:toy}

Let us demonstrate the application of the framework, and clarify the independence of the desiderata in a toy example of an agent in grid-world, of which we implemented and trained via policy gradient reinforcement learning. Code for this example may be found at the \href{https://github.com/deltaquebec/grounding/tree/main}{project repo}.

\subsection{Evaluation setup}

We define the evaluation tuple $E=(k, t, U, P)$. The context $k$ is a continuous $10 \times 10$ planar world; meaning type $t$ is extensional, where meanings correspond to 2D coordinates; threat model $U$ Gaussian noise injected into the representation space $\mathcal{R}$; reference distribution $P$ uniform over valid color-direction command pairs. The agent's vocabulary consists of atoms $\Sigma_{\text{atom}} = \{\text{RED}, \text{BLUE}, \text{NORTH}, \text{SOUTH}, \text{EAST}, \text{WEST}\}$. The intended interpretation $\mathcal{I}_k^t$ maps colors to landmarks and directions to unit vectors:
\[
\mathcal{I}_k^t(\text{RED}) = (8.0, 8.0), \quad \mathcal{I}_k^t(\text{NORTH}) = (0, 1).
\]
The algebra $\langle \mathcal{M}_k^t, \mathcal{F}_k^t \rangle$ interprets the grammar's composition constructor as vector addition:
\[
\mathcal{I}_k^t(\text{RED NORTH}) = (8.0, 8.0) + (0, 1) = (8.0, 9.0).
\]

\subsection{Grounding architecture} 

The agent implements the grounding architecture $\mathfrak{G}$ as follows:
\begin{itemize}
	\item $\Phi: \Sigma^* \to \mathcal{R}$ is an embedding layer followed by a GRU, mapping token sequences to hidden states; the representation space is $\mathcal{R} = \mathbb{R}^{64}$ with Euclidean metric $d_{\mathcal{R}}$;
	\item $\Gamma: \mathcal{R} \to \mathcal{C}$ a linear decoder mapping hidden states to predicted coordinates; the concept space is $\mathcal{C} = \mathbb{R}^2$;
	\item $\mathcal{A}_k^t: \mathcal{C} \to \mathcal{M}_k^t$ the identity map; the agent navigates to its predicted coordinate, so $\mathcal{M}_k^t = \mathbb{R}^2$.
\end{itemize}

The end-to-end map is $\Psi = \Gamma \circ \Phi$, and the observable semantics is $S_{k,t} = \mathcal{A}_k^t \circ \Gamma$. Since $\mathcal{A}_k^t$ is the identity, we write the agent's output for input $\sigma$ as $\mathcal{A}_k^t\Psi(\sigma)$.

The agent was trained using the classic REINFORCE algorithm over 3,000 episodes, minimizing Euclidean distance to target coordinates. Two combinations (\texttt{BLUE EAST} and \texttt{RED WEST}) were held out during training to test systematicity. Training logs (an excerpt of which is shown in Table~\ref{tab:traininglog}) confirm convergence from an initial loss of $3.21$ to near-zero:

\begin{table}[H]
	\centering
	\caption{Excerpt of training logs}
	\label{tab:traininglog}
	\begin{tabular}{lll}
		\toprule
		\bfseries Episode& \bfseries loss &  \\ \midrule
		0 &  3.205 & (untrained) \\
		500 &  0.009 & \\
		2500 &  0.0001 & (converged) \\ \bottomrule
	\end{tabular}
\end{table}

\subsection{Grounding audit}

Let us now walk through the grounding.

\paragraph{G0 (authenticity)} The mappings ($\Phi$, $\Gamma$) are neural network weights acquired via reward-driven optimization. The system satisfies \textbf{G0-strong}: the semantic mappings are both implemented internally and acquired through a selection process ($\mathcal{T}$ RL training) relevant to the task family.

\paragraph{G1 (preservation)} We measure the deviation of atomic concepts from their targets. For the atom \texttt{RED}:
\[
\mathcal{A}_k^t\Psi(\text{RED}) = (7.941, 8.224), \quad \mathcal{I}_k^t(\text{RED}) = (8.0, 8.0).
\]
The preservation error is:
\[
\varepsilon_{\text{pres}}^{k,t} = \| (7.941, 8.224) - (8.0, 8.0) \|_2 = 0.231.
\]
On a $10 \times 10$ world, this represents drift of the total scale, an otherwise notable miscalibration of the atomic concept.

\paragraph{G2a (correlational faithfulness)} For the composed expression \texttt{RED NORTH}:
\[
\mathcal{A}_k^t\Psi(\text{RED NORTH}) = (7.726, 9.522), \quad \mathcal{I}_k^t(\text{RED NORTH}) = (8.0, 9.0).
\]
The faithfulness error is:
\[
\varepsilon_{\text{faith}}^{k,t} = \| (7.726, 9.522) - (8.0, 9.0) \|_2 = 0.590.
\]
This substantially exceeds both the atomic preservation error and the task success threshold of $0.5$ units; in this way, the agent arrives in the correct region, but misses the target.

\paragraph{G2b (etiological faithfulness).} We identify the mechanism $M$ responsible for processing directional modifiers (the GRU's sequential integration of tokens beyond the first). We ablate $M$ by restricting the encoder to process only the first token:
\begin{itemize}
	\item $M$ active gives output $(7.726, 9.522)$ and task failure (distance $\geq 0.5$);
	\item $M$ ablated gives output $(7.941, 8.224)$, and task failure.
\end{itemize}
The ablated output equals $\mathcal{A}_k^t\Psi(\text{RED})$ exactly: the agent navigates to its (miscalibrated) landmark concept, but ignores the directional modifier entirely! The average causal effect is $\text{ACE}_E(M) = 0.0$, since both conditions fail the binary success criterion.

This result is notworthy, and requires careful interpretation here. Indeed, the mechanism \emph{is} causally active: it shifts the $y$-coordinate from $8.224$ (ablated) to $9.522$ (active), a movement of about $1.3$ units in the correct direction; however, because neither condition crosses the success threshold, the binary ACE registers zero. This is a limitation of threshold-based success predicates: a mechanism can be genuinely load-bearing and yet fail to register when baseline performance is poor. Note: a continuous formulation of ACE (measuring distance reduction rather than success-rate difference) we expect to reveal the positive contribution.

\paragraph{G3 (robustness)} We inject Gaussian noise of magnitude $\|\delta\| = 0.5$ into the representation space $\mathcal{R}$ (the 64-dimensional GRU hidden state) for the input \texttt{RED}:
\begin{itemize}
	\item unperturbed output: $(7.941, 8.224)$;
	\item perturbed output: $(8.036, 8.372)$;
	\item semantic drift: $0.176$.
\end{itemize}
The local modulus satisfies $\omega_U^{k,t}(0.5) = 0.176$, yielding a ratio of approximately $0.35$: roughly one-third of the representational perturbation propagates to semantic output. This dampening follows from the dimensionality reduction through $\Gamma$ (that is, $\mathbb{R}^{64} \to \mathbb{R}^2$), and is a structural property of the architecture, independent of the agent's accuracy. Even a miscalibrated agent benefits from this compression. The behavior contrasts sharply with the degenerate step-function modulus of uniformly discrete symbolic systems (Corollary~\ref{cor:unif-discrete}), where any perturbation exceeding the minimal symbol spacing can induce arbitrary semantic change.

\paragraph{G4 (compositionality)} We compare the agent's composed output to the algebraic combination of its own atomic interpretations:
\begin{align*}
	\mathcal{A}_k^t\Psi(\text{RED}) &= (7.941, 8.224), \\
	\mathcal{A}_k^t\Psi(\text{NORTH}) &= (-0.097, 1.114).
\end{align*}
The agent's learned representation of \texttt{NORTH} overshoots the true unit vector $(0, 1)$ in the $y$-component and includes a spurious negative $x$-component. If the agent were perfectly compositional:
\[
f_k^{\mathcal{M},t}\big(\mathcal{A}_k^t\Psi(\text{RED}),\, \mathcal{A}_k^t\Psi(\text{NORTH})\big) = (7.941, 8.224) + (-0.097, 1.114) = (7.844, 9.338).
\]
The actual output was $(7.726, 9.522)$. The compositionality deficit is:
\[
\delta_{\text{comp}}^{k,t} = \| (7.726, 9.522) - (7.844, 9.338) \|_2 = 0.219.
\]
Unlike a well-calibrated agent where $\delta_{\text{comp}}^{k,t} \approx \varepsilon_{\text{pres}}^{k,t}$ would indicate correct algebra applied to slightly wrong atoms, here both values are large: the agent learned neither precise atoms nor perfect addition.

\paragraph{G4 (systematicity)} To assess whether the agent's compositional behavior generalizes, we test on two combinations held out during training:
\begin{center}
	\begin{tabular}{llll}
		\toprule
		\textbf{Command} & \textbf{Output} & \textbf{Error} & \textbf{Success} \\
		\midrule
		\texttt{BLUE EAST} & $(2.609, 1.793)$ & $0.442$ & successful \\
		\texttt{RED WEST} & $(7.050, 8.680)$ & $0.682$ & unsuccessful \\ \bottomrule
	\end{tabular}
\end{center}
The systematicity score is $\beta^{k,t} = 0.50$: the agent generalizes to some novel combinations but not others. The asymmetry is notable, in that \texttt{BLUE} compositions succeed while \texttt{RED} compositions fail, which we account for as the underlying miscalibration of the \texttt{RED} landmark observed in G1: errors in atomic representations propagate to novel compositions, even when the compositional mechanism itself is intact.

\subsection{Summary remarks}

Taken together, we observe the grounding profile shown in Table~\ref{tab:g0g1g2g3g4}.

\begin{table}[H]
	\centering
	\caption{Grounding profile for grid-world agent.}
	\label{tab:g0g1g2g3g4}
	\begin{tabular}{lll}
		\toprule
		\textbf{Desideratum} & \textbf{Value} & \textbf{Interpretation} \\
		\midrule
		\textbf{G0 (authenticity)} & strong & learned via REINFORCE \\
		\textbf{G1 (preservation)} & $\varepsilon_{\text{pres}}^{k,t} = 0.231$ & atoms miscalibrated \\
		\textbf{G2a (faithfulness)} & $\varepsilon_{\text{faith}}^{k,t} = 0.590$ & composed outputs inaccurate \\
		\textbf{G2b (etiological)} & $\text{ACE}_E(M) = 0.0$ & mechanism active, below threshold \\
		\textbf{G3 (robustness)} & $\omega_U^{k,t}(0.5) = 0.176$ & moderate dampening \\
		\textbf{G4 (compositionality)} & $\delta_{\text{comp}}^{k,t} = 0.219$ & imperfect systematicity \\
		\textbf{G4 (systematicity)} & $\beta^{k,t} = 0.50$ & partial generalization \\ \bottomrule
	\end{tabular}
\end{table}

This agent is diagnostically interesting precisely because it is imperfect; the profile shows us a specific failure mode for which we qualified in our typology. Following Section~\ref{subsec:typology}, the agent is \emph{miscalibrated} cell (low G2a, moderate G4): it has learned some compositional structure but applies it to poorly calibrated atoms; if we consider G2a $\times$ G2b, it represents an \emph{effortful failure}, in that it has genuinely learned mechanisms (G0-strong, and the modifier mechanism does change output substantially), but those mechanisms do not achieve task success.

We now decorate our Figures~\ref{fig:plaintext} and \ref{fig:formaltext} with our toy example, shown in Figure~\ref{fig:toytext}.

\begin{figure}
	\begin{tikzpicture}[
		>=Stealth,
		node distance=0.8cm and 1.2cm,
		evalbox/.style={
			rectangle, rounded corners=4pt,
			draw=evalblue, fill=evalblue!15,
			minimum width=7.00cm, minimum height=1.5cm,
			font=\small\bfseries, text=evalblue!80!black
		},
		chainnode/.style={
			rectangle, rounded corners=3pt,
			draw=chaingreen!80!black, fill=chaingreen!20,
			minimum width=1.3cm, minimum height=0.9cm,
			font=\small
		},
		targetnode/.style={
			rectangle, rounded corners=3pt,
			draw=targetorange!80!black, fill=targetorange!20,
			minimum width=1.8cm, minimum height=0.9cm,
			font=\small
		},
		desidbox/.style={
			rectangle, rounded corners=3pt,
			draw=desidpurple!80!black, fill=desidpurple!15,
			minimum width=1.4cm, minimum height=0.7cm,
			font=\footnotesize
		},
		profilebox/.style={
			rectangle, rounded corners=4pt,
			draw=profilered, fill=profilered!12,
			minimum width=7.00cm, minimum height=2.00cm,
			font=\small
		},
		typobox/.style={
			rectangle, rounded corners=4pt,
			draw=typoteal, fill=typoteal!12,
			minimum width=4.75cm, minimum height=1.4cm,
			font=\small
		},
		modebox/.style={
			rectangle, rounded corners=3pt,
			draw=modeblue!80!black, fill=modeblue!15,
			minimum width=2.2cm, minimum height=0.6cm,
			font=\footnotesize\itshape
		},
		mapstyle/.style={
			->, thick, chaingreen!70!black
		},
		flowstyle/.style={
			->, thick, gray!70
		},
		auditarrow/.style={
			->, thick, desidpurple!70!black, dashed
		}
		]
		
		% evaluation
		
		\node[evalbox] (eval) {};
		\node[anchor=north west, font=\small\bfseries, text=evalblue!80!black] 
		at ($(eval.north west) + (0.15, -0.1)$) {Evaluation tuple $E$};
		
		\node[font=\footnotesize, text=evalblue!70!black, align=center] at ($(eval.center) + (0, -0.25)$) {
			(
			$k$: $10 \times 10$ grid-world, $t$: extensional (2D coords),\\[2pt]
			$U$: Gaussian noise, $P$: uniform over commands
			)
		};

		% architecture
		\node[below=4.0cm of eval] (archcenter) {};
		
		\coordinate (archleft) at ($(archcenter) + (-4.5, 0)$);
		\coordinate (archright) at ($(archcenter) + (4.5, 0)$);
		
		\node[chainnode, align=center] (sigma) at ($(archcenter) + (-3.2, 0)$) {$\Sigma$\\[2pt]
			\scriptsize\texttt{RED, NORTH,}\\[2pt]
			\scriptsize\texttt{BLUE, ...}};
		\node[chainnode, right=0.8cm of sigma, align=center] (R) {$\mathcal{R}$\\[2pt]
			\scriptsize$\mathbb{R}^{64}$};
		\node[chainnode, right=0.8cm of R, align=center] (C) {$\mathcal{C}$\\[2pt]
			\scriptsize$\mathbb{R}^{2}$\\[2pt]
			\scriptsize(coordinates)};
		\node[chainnode, right=0.8cm of C, minimum width=1.6cm, align=center] (M) {$\mathcal{M}_k^t$\\[2pt]
			\scriptsize$\mathbb{R}^{2}$};
		
		\draw[mapstyle] (sigma) -- node[above, font=\scriptsize] {$\Phi$} (R);
		\draw[mapstyle] (R) -- node[above, font=\scriptsize] {$\Gamma$} (C);
		\draw[mapstyle] (C) -- node[above, font=\scriptsize] {$\mathcal{A}_k^t$} (M);
		
		\draw[decorate, decoration={brace, amplitude=5pt, raise=2pt}, thick, gray!70] 
		(sigma.north west) -- (C.north east) 
		node[midway, above=8pt, font=\scriptsize, text=gray] {$\Psi = \Gamma \circ \Phi$};
		
		\draw[decorate, decoration={brace, amplitude=4pt, raise=10pt, mirror}, thick, gray!70] 
		(R.south west) -- (M.south east) 
		node[midway, below=14pt, font=\scriptsize, text=gray] {$S_{k,t} = \mathcal{A}_k^t \circ \Gamma$};
		
		\node[targetnode, above=0.8cm of M, align=center] (I) {$\mathcal{I}_k^t$\\[2pt]
			\scriptsize\texttt{RED}$\mapsto(8,8)$\\[2pt]
			\scriptsize\texttt{NORTH}$\mapsto(0,1)$};
		\node[left=0.1cm of I, font=\tiny, text=targetorange!70!black] {(target)};
		
		\draw[->, thick, targetorange!70!black, dashed] (I) -- (M);
		
		\node[modebox, left=0.75cm of sigma, yshift=1.0cm] (mode1) {\textcolor{gray!50}{\textit{symbolic}}};
		\node[modebox, below=0.15cm of mode1] (mode2) {\textcolor{gray!50}{\textit{referential}}};
		\node[modebox, below=0.15cm of mode2] (mode3) {\textbf{vectorial}};
		\node[modebox, below=0.15cm of mode3] (mode4) {\textcolor{gray!50}{\textit{relational}}};
		
		\node[
		fit=(mode1)(mode2)(mode3)(mode4),
		draw=modeblue!50!black, dashed, thick,
		rounded corners=4pt,
		inner sep=8pt,
		inner ysep=12pt
		] (modebox) {};
		
		\node[anchor=north west, font=\tiny\bfseries, text=modeblue!70!black] at ($(modebox.north west) + (0.1, -0.05)$) {Mode};
		
		\begin{scope}[on background layer]
			\node[
			fit=(sigma)(M)(I)(modebox),
			draw=archborder, fill=archgray,
			rounded corners=6pt,
			inner sep=15pt
			] (archbox) {};
		\end{scope}
		
		\node[anchor=north west, font=\small\bfseries] at ($(archbox.north west) + (0.15, -0.1)$) {Grounding architecture $\mathfrak{G}$};
		
		% deisderata
		\node[desidbox, right=1.2cm of archbox.east, yshift=1.5cm] (G0) {G0};
		\node[desidbox, below=0.2cm of G0] (G1) {G1};
		\node[desidbox, below=0.2cm of G1] (G2) {G2};
		\node[desidbox, below=0.2cm of G2] (G3) {G3};
		\node[desidbox, below=0.2cm of G3] (G4) {G4};
		
		\node[right=0.15cm of G0, font=\tiny, text=desidpurple!70!black] {authenticity};
		\node[right=0.15cm of G1, font=\tiny, text=desidpurple!70!black] {preservation};
		\node[right=0.15cm of G2, font=\tiny, text=desidpurple!70!black] {faithfulness};
		\node[right=0.15cm of G3, font=\tiny, text=desidpurple!70!black] {robustness};
		\node[right=0.15cm of G4, font=\tiny, text=desidpurple!70!black] {compositionality};
		
		\node[above=0.2cm of G0, font=\small\bfseries, text=desidpurple!80!black] {Desiderata};
		
		\draw[auditarrow] (archbox.east) -- (G0.west);
		\draw[auditarrow] (archbox.east) -- (G1.west);
		\draw[auditarrow] (archbox.east) -- (G2.west);
		\draw[auditarrow] (archbox.east) -- (G3.west);
		\draw[auditarrow] (archbox.east) -- (G4.west);
		
		% profile
		\node[profilebox, below=4.0cm of archcenter] (profile) {};
		\node[anchor=north west, font=\small\bfseries, text=profilered!80!black] 
		at ($(profile.north west) + (0.15, -0.1)$) {Grounding profile $\text{GP}(\mathfrak{G}; E)$};
		
		\node[font=\footnotesize, text=profilered!70!black, align=center] at ($(profile.center) + (0, -0.25)$) {
			\{
			$\varepsilon_{\text{pres}}^{k,t}= 0.2313$,
			$\varepsilon_{\text{faith}}^{k,t}= 0.5897$,
			$\text{ACE}_E(M)=0.0$,\\[4pt]
			$\omega_U^{k,t}(0.5)=0.1760$,
			$\delta_{\text{comp}}^{k,t}=0.2191, \beta^{k,t}=0.5$
			\}
		};

		\coordinate (desidmid) at ($(G4) + (0, -0.5) $);
		\draw[flowstyle] (desidmid) |- node[pos=0.75, above, font=\tiny, text=gray] {yields} ($(profile.east) + (0, 0.3)$);
		
		% typology
		\node[typobox, below=1.2cm of profile] (typo) {};
		\node[anchor=north west, font=\small\bfseries, text=typoteal!80!black] at ($(typo.north west) + (0.15, -0.1)$) {Typology};
		
		\node[font=\footnotesize, text=typoteal!70!black] at ($(typo.center) + (0, -0.15)$) {
			\textit{\textbf{miscalibrated}}, \textit{\textbf{effortful failure},} \ldots
		};

		\draw[flowstyle] (profile) -- node[right, font=\tiny, text=gray, xshift=2pt] {classifies} (typo);

		% E to architecture
		\draw[flowstyle] (eval) -- node[right, font=\tiny, text=gray, xshift=2pt] {parametrizes} (archbox.north);
		
		% arch to profile
		\draw[flowstyle] (archbox.south) -- node[right, font=\tiny, text=gray, xshift=2pt] {reports} (profile.north);
		
	\end{tikzpicture}
	\caption{The evaluation tuple $E = (k, t, U, P)$ specifies: context $k$ as planar navigation; meaning type $t$ as extensional (2D coordinates); threat model $U$ as Gaussian noise injected into the representation space; reference distribution $P$ as uniform over color-direction command pairs. The grounding architecture implements: an embedding layer followed by a GRU encoder ($\Phi: \Sigma^* \to \mathbb{R}^{64}$), a linear decoder ($\Gamma: \mathbb{R}^{64} \to \mathbb{R}^2$), and an identity alignment ($\mathcal{A}_k^t$) to the meaning space. The intended interpretation $\mathcal{I}_k^t$ maps color atoms to landmark coordinates (e.g., \texttt{RED} $\mapsto (8,8)$) and direction atoms to unit vectors (e.g., \texttt{NORTH} $\mapsto (0,1)$), with composition defined as vector addition. The symbol-to-concept map $\Psi = \Gamma \circ \Phi$ compresses 64-dimensional hidden states to 2D coordinates, while the observable semantics $S_{k,t} = \mathcal{A}_k^t \circ \Gamma$ (here, trivially, since $\mathcal{A}_k^t$ is identity) delivers those coordinates as world positions. The dimensionality reduction through $\Gamma$ is responsible for the robustness compression observed in G3. This isntantiation qualifies the grounding as vectorial. The audit yields the profile: preservation error $\varepsilon_{\text{pres}}^{k,t} = 0.2313$; faithfulness error $\varepsilon_{\text{faith}}^{k,t} = 0.5897$; average causal effect $\text{ACE}_E(M) = 0.0$; robustness modulus $\omega_U^{k,t}(0.5) = 0.1760$; compositionality deficit $\delta_{\text{comp}}^{k,t} = 0.2191$; systamaticity $\beta^{(k,t)}=0.5$. The profile classifies the agent as miscalibrated and an effortful failure.}
	\label{fig:toytext}
\end{figure}
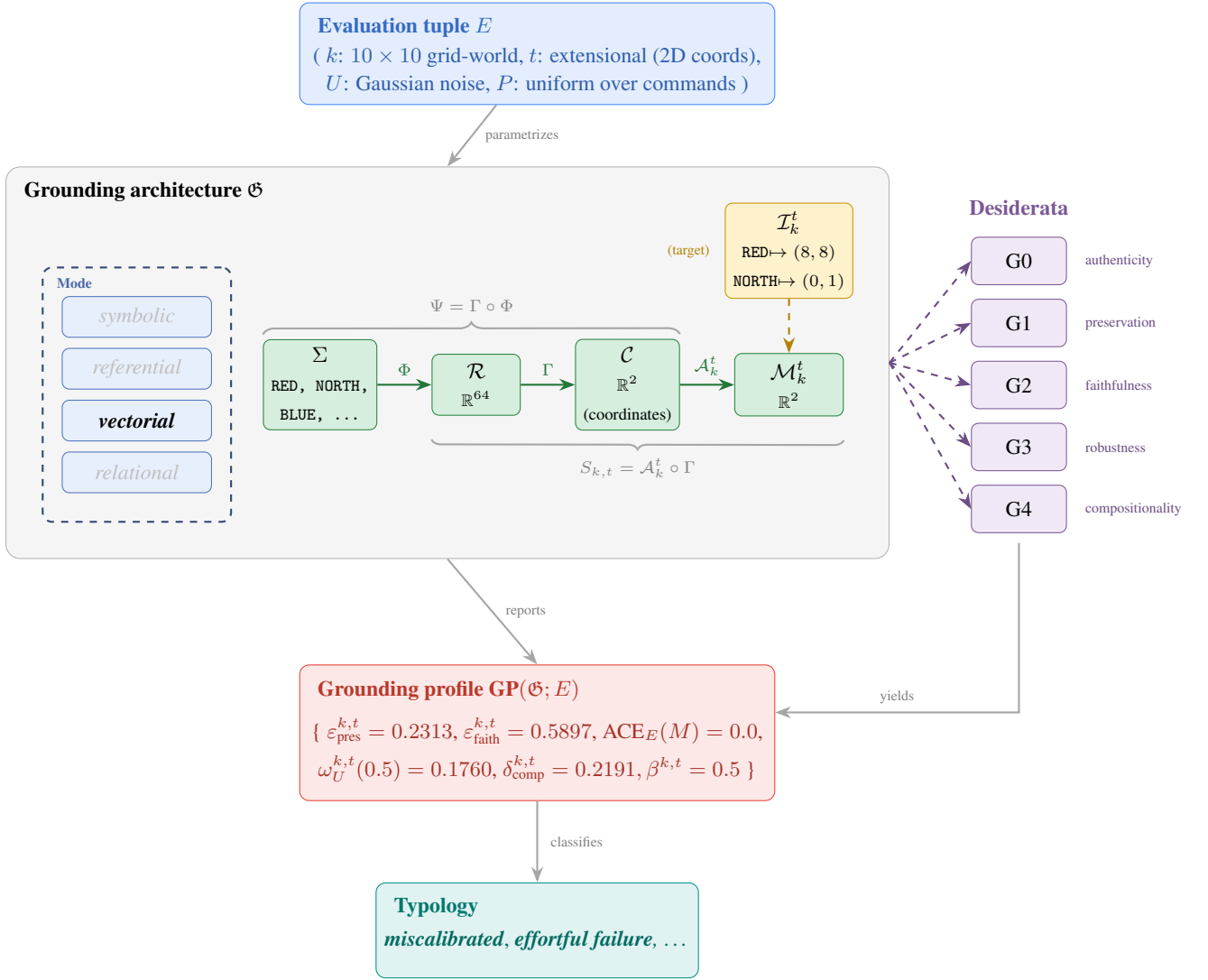

Several observations clarify the failure.

\begin{enumerate}
	\item The root cause is G1. The atomic concept of \texttt{RED} drifts by $0.231$ units from its target, and this miscalibration propagates through composition: even if the agent added perfectly, it would arrive at the wrong location.
	
	\item The modifier-processing mechanism shifts output by $1.3$ units in the $y$-direction, which is exactly the kind of causal contribution G2b is designed to detect! The zero ACE reflects the binary success predicate, in that both conditions fail, so the difference is zero. Continuous, or, at least, margin-based formulations of etiological faithfulness, may better capture causal contribution when baseline performance is poor.
	
	\item The robustness ratio $\omega_U^{k,t}(0.5)/0.5 \approx 0.35$ demonstrates that architectural properties (dimensionality reduction, learned smoothness) provide robustness independent of accuracy (!). A miscalibrated agent still benefits from the compression of $\mathbb{R}^{64} \to \mathbb{R}^2$; robustness is not a reward for getting things right.
	
	\item The asymmetry in $\beta^{k,t}$ (\texttt{BLUE EAST} succeeds; \texttt{RED WEST} fails) traces directly to the asymmetry in atomic calibration: the \texttt{BLUE} landmark is better learned than \texttt{RED}, so compositions involving \texttt{BLUE} generalize while those involving \texttt{RED} do not. Systematicity, we should note, is not a monolithic property, and it can vary across the vocabulary depending on which atoms are well-grounded.
\end{enumerate}

The framework thus tells us \emph{that} the agent fails and \emph{how} it does so: miscalibrated atoms (G1) cascade to poor faithfulness (G2a) and partial systematicity ($\beta$), while the compositionality (G4) and robustness (G3) remain partially intact. A specific intervention follows: improve G1 through longer training, curriculum emphasizing atomic commands, or targeted data augmentation for \texttt{RED} combinations; a binary ``grounded/ungrounded'' judgment would miss this structure entirely.

Now, whether this profile is acceptable or unacceptable depends on the application: an agent tolerating $0.7$-unit error would pass; one requiring $0.1$-unit precision would fail. The framework simply provides the language for measurements and localizes the deficits; adequacy thresholds remain task- and domain-specific.

\section{Quick reference}\label{sec:qref}

An evaluation tuple $E=(k,t,U,P)$ fixes the context, meaning type, threat model, and reference distribution for grounding tests. The following subscripts are used throughout:

\begin{itemize}
	\item Contexts $k$: $k_{\text{logic}}$ (logical/analytic); $k_{\text{embodied}}$ (embodied/task); $k_{\text{ling}}$ (linguistic corpus), $k_{\text{vl}}$ (multimodal).
	\item Meaning types $t$: $\text{ext}$ (extensional/world-referential); $\text{inf}$ (inferential/intensional/conceptual); $\mathbf{soc}$ (social-normative, reserved).
	\item Threat models $U$: $U_{\text{edit}}$ (text edits); $U_{\text{nat}}$ (naturalistic sensory noise); $U_{\text{ling}}$ (textual/paraphrase noise); $U_{\text{vl}}$ (image--text perturbations); $U_{\text{rel}}$ (symbolic relation edits).
	\item Reference distributions $P$: $P_{\text{strings}}$ (string distributions); $P_{\mathcal R}$ (representations, generic);$P_{\mathcal R}^{\text{ling}}$ (linguistic representations); $P_{\mathcal R}^{\text{vl}}$ (multimodal representations); $P_{\mathcal R}^{\text{rel}}$ (graphs/ontologies).
\end{itemize}

These subscripts specify the operative setting for each grounding mode: symbolic $(k_{\text{logic}},\text{inf},U_{\text{edit}},P_{\text{strings}})$; referential $(k_{\text{embodied}},\text{ext},U_{\text{nat}},P_{\mathcal R})$; vectorial-linguistic $(k_{\text{ling}},\text{inf},U_{\text{ling}},P_{\mathcal R}^{\text{ling}})$; vectorial-multimodal $(k_{\text{vl}},\text{ext},U_{\text{vl}},P_{\mathcal R}^{\text{vl}})$; relational $(k_{\text{logic}},\text{inf},U_{\text{rel}},P_{\mathcal R}^{\text{rel}})$.

\begin{table}[htbp]
	\centering
	\footnotesize
	\setlength{\tabcolsep}{4pt}
	\renewcommand{\arraystretch}{1.15}
	\caption{Reference table of notation for grounding architecture and evaluation tuple.}
	\label{tab:notation-reference}
	\begin{tabular}{p{3.2cm} p{10.8cm}}
		\hline
		\textbf{Symbol} & \textbf{Meaning} \\
		\hline
		$\mathfrak G $
		& grounding architecture with core structure of symbols, representations, concepts, meanings, mappings \\[3pt]
		
		$\Sigma$ & alphabet of surface symbols (tokens, words, predicates, actions); $\Sigma_{\text{atom}}\subset\Sigma$ are atomic symbols \\[3pt]
		
		$\mathcal G$ & grammar generating well-typed composites from $\Sigma_{\text{atom}}$ \\[3pt]
		
		$\mathcal R$ & representation space (latent or sensorimotor features); metric $d_{\mathcal R}$ \\[3pt]
		
		$\mathcal C$ & conceptual task-level substrate; metric $d_{\mathcal C}$ \\[3pt]
		
		$\mathcal M_{k}^{t}$ & meaning space for context $k$ and meaning type $t$; metric $d_{k,t}$; may carry semantic algebra $\langle \mathcal M_{k}^{t}, \mathcal F_{k}^{t} \rangle$ \\[3pt]
		
		$\mathcal I_{k}^{t} : \Sigma \rightharpoonup \mathcal M_{k}^{t}$ & intended interpretation; partial map extended homomorphically to composites as $\mathcal I_{k}^{t,\uparrow}$ \\[3pt]
		
		$\Phi : \Sigma \to \mathcal R$ & encoder from surface symbols to internal representations (tokenizer, perception, embedding) \\[3pt]
		
		$\Gamma : \mathcal R \to \mathcal C$ & concept constructor, parser; builds structured conceptual states from representations (internal) \\[3pt]
		
		$\Psi = \Gamma \circ \Phi$ & end-to-end symbol-to-concept map (context-agnostic) \\[3pt]
		
		$\mathcal A_{k}^{t} : \mathcal C \to \mathcal M_{k}^{t}$ & alignment (semantics): projects concepts to the meaning space for the chosen $(k,t)$ \\[3pt]
		
		$S_{k,t} = \mathcal A_{k}^{t} \circ \Gamma$ & observable concept-to-meaning map used in evaluation \\[3pt]
		
		$E=(k,t,U,P)$ & evaluation tuple: context, meaning type, threat model, reference distribution \\[3pt]
		
		$k$ & evaluation context (e.g., $k_{\text{logic}}, k_{\text{embodied}}, k_{\text{ling}}, k_{\text{vl}}$). \\[3pt]
		
		$t$ & meaning type ($\text{ext}$ = extensional, $\text{inf}$ = inferential/intensional, $\mathbf{soc}$ = social-normative) \\[3pt]
		
		$U$ & threat model (family of perturbations); defines perturbation scale $\varepsilon(u)$ and robustness $\omega_{U}^{k,t}(\varepsilon)$ \\[3pt]
		
		$P$ & reference distribution over instances or representations used for expectation and robustness estimation \\[3pt]
		
		$M$ & specific internal mechanism (e.g., module or parameter subset), causal contribution is tested in G2b \\[3pt]
		
		$\text{succ}:\mathcal O\to\{0,1\}$ & success predicate defining task achievement (used in $\text{ACE}_{E}(M)$) \\[3pt]
		
		$\text{ACE}_{E}(M)$ & average causal effect of $M$ under evaluation $E$; check for etiological faithfulness G2b \\[3pt]
		
		$\varepsilon_{\text{pres}}^{k,t}$ & preservation error on atomic meanings (G1) \\[3pt]
		
		$\varepsilon_{\text{faith}}^{k,t}$ & correlational faithfulness error on composed meanings (G2a) \\[3pt]
		
		$\eta^{k,t}$ & threshold for etiological faithfulness; minimum acceptable from $\text{ACE}_{E}(M)$ \\[3pt]
		
		$\omega_{U}^{k,t}(\varepsilon)$ & robustness modulus: upper bound on semantic deviation under perturbations of size $\varepsilon$ \\[3pt]
		
		$\delta_{\text{comp}}^{k,t}$ & compositional deviation: distance between composed meaning and meaning of parts (G4) \\[3pt]
		
		$\beta^{k,t}$ & systematicity score on held-out (novel) compositional forms \\[3pt]
		
		$\text{GP}(\text{GA};E)$ &
		grounding profile at evaluation $E$: 
		$[\varepsilon_{\text{pres}}^{k,t},\ \varepsilon_{\text{faith}}^{k,t},\ \text{ACE}_{E}(M),\ \omega_{U}^{k,t}(\cdot),\ \delta_{\text{comp}}^{k,t},\ \beta^{k,t}]$ \\[3pt]
		
		G0--G4 &
		grounding desiderata: authenticity; preservation; faithfulness; robustness; compositionality \\[3pt]
		
		$\mathcal T$ & training or evolutionary process under which mechanisms are acquired (for G0-strong and G2b) \\[3pt]
		\hline
	\end{tabular}
\end{table}

\end{document}